\newcommand{\R}{\mathbb{R}}
\newcommand{\MatA}{\mathbf{A}}
\newcommand{\MatB}{\mathbf{B}}
\newcommand{\MatI}[1]{\mathbf{I_{#1}}}
\newcommand{\Veca}{\mathbf{a}}
\newcommand{\Vecx}{\mathbf{x}}
\newcommand{\Vecy}{\mathbf{y}}
\newcommand{\Vecalp}{\boldsymbol{\alpha}}
\newcommand{\X}{\mathbf{X}}
\renewcommand{\a}{\mathbf{a}}
\newcommand{\ParSumVar}{T}
\newcommand{\TA}{\boldsymbol{\mathcal{A}}}
\newcommand{\TB}{\boldsymbol{\mathcal{B}}}
\newcommand{\TP}{\boldsymbol{\mathcal{P}}}
\newcommand{\TT}{\boldsymbol{\mathcal{T}}}
\newcommand{\TG}{\boldsymbol{\mathcal{G}}}
\newcommand{\TX}{\boldsymbol{\mathcal{X}}}
\newcommand{\TY}{\boldsymbol{\mathcal{Y}}}
\newcommand{\Var}{\mathrm{Var}}
\newcommand{\E}{\mathbb{E}}
\newcommand{\Cov}{\mathrm{Cov}}
\newcommand{\elsh}{E2LSH}
\newcommand{\srp}{SRP}
\newcommand{\cpelsh}{CP-E2LSH}
\newcommand{\ttelsh}{TT-E2LSH}
\newcommand{\cpsrp}{CP-SRP}
\newcommand{\ttsrp}{TT-SRP}
\newcommand{\etal}{\textit{et al.}}
\newcommand\numberthis{\addtocounter{equation}{1}\tag{\theequation}}
\newsavebox{\@brx}
\newcommand{\llangle}[1][]{\savebox{\@brx}{\(\m@th{#1\langle}\)}%
  \mathopen{\copy\@brx\kern-0.5\wd\@brx\usebox{\@brx}}}
\newcommand{\rrangle}[1][]{\savebox{\@brx}{\(\m@th{#1\rangle}\)}%
  \mathclose{\copy\@brx\kern-0.5\wd\@brx\usebox{\@brx}}}
\theoremstyle{thmstyleone}%
\newtheorem{theorem}{Theorem}
\theoremstyle{thmstyletwo}%
\newtheorem{remark}{Remark}%
\theoremstyle{thmstylethree}%
\newtheorem{definition}{Definition}%
\begin{document}

\title[Improving LSH via Tensorized Random Projection]{Improving LSH via Tensorized Random Projection}


\author[1]{\fnm{Bhisham Dev} \sur{Verma}}\email{bhishamdevverma@gmail.com}

\author[2]{\fnm{Rameshwar} \sur{Pratap}}\email{rameshwar@cse.iith.ac.in}

\affil[1]{\orgname{Indian Institute of Technology Mandi, Himachal Pradesh, India}}

\affil[2]{ \orgname{Indian Institute of Technology Hyderabad, Telangana, India}}



\abstract{Locality-sensitive hashing (LSH) is a fundamental algorithmic toolkit used by data scientists for approximate nearest neighbour search problems that have been used extensively in many large-scale data processing applications such as near-duplicate detection, nearest-neighbour search, clustering, etc. In this work, we aim to propose faster and space-efficient locality-sensitive hash functions for Euclidean distance and cosine similarity for tensor data. Typically, the naive approach for obtaining LSH for tensor data involves first reshaping the tensor into vectors, followed by applying existing LSH methods for vector data~\cite{datar2004locality,charikar2002similarity}. However, this approach becomes impractical for higher-order tensors because the size of the reshaped vector becomes exponential in the order of the tensor. Consequently, the size of LSH’s parameters increases exponentially. To address this problem, we suggest two methods for LSH for Euclidean distance and cosine similarity, namely CP-E2LSH, TT-E2LSH, and CP-SRP, TT-SRP, respectively, building on CP and tensor train (TT) decompositions techniques. Our approaches are space-efficient and can be efficiently applied to low-rank CP or TT tensors. We provide a rigorous theoretical analysis of our proposal on their correctness and efficacy. }

\keywords{Locality sensitive hashing, E2LSH,  Sign random projection (SRP), Tensor data.}



\maketitle

\section{Introduction}
Locality-sensitive hashing (LSH) is a basic algorithmic toolbox for approximate nearest neighbour search problems extensively used in large-scale data processing applications such as  near-duplicate detection~\cite{das2007google},  clustering~\cite{koga2007fast,cochez2015twister}, audio processing~\cite{ryynanen2008query,yu2010combining}, image/video processing~\cite{kulis2009kernelized,xia2016privacy}, blockchain~\cite{zhuvikin2018blockchain}, etc. 
At the core, the idea of LSH is designing a family of hash functions (depending on the data type and the underlying similarity measure) that maps data points into several buckets such that, with high probability, ``\textit{similar}" points maps in the same bucket and  ``\textit{not so similar}" maps in different buckets. 
Locality-sensitive hash functions have been studied for different similarity/distance measures, \textit{e.g.} cosine similarity~\cite{charikar2002similarity, yu2014circulant, ji2012super,andoni2015practical,kang2018improving,dubey2022improving}, Euclidean distance~\cite{datar2004locality, DBLP:conf/focs/AndoniI06}, hamming distance~\cite{indyk1998approximate,gionis1999similarity}, ~Jaccard similarity~\cite{BroderCFM98,b-bit,OPH,DOPH}, and many more.  \\

 With the advancement in technology, multidimensional datasets  are becoming ubiquitous in various domains, including machine learning~\cite{tao2005supervised,rabanser2017introduction}, signal processing~\cite{sidiropoulos2017tensor}, neuroscience~\cite{mori2006principles,liao2013depression, cong2015tensor}, computer vision~\cite{aja2009tensors,panagakis2021tensor},  etc. Despite their prevalence, analyzing and preprocessing them is a challenging task, and they are usually converted into matrices or vectors for this purpose~\cite{wang2004compact, zare2018extension, brandi2021predicting}. However, this approach has several significant drawbacks. Firstly, it eliminates the inherent connections among data points in the multidimensional space. Secondly, the number of parameters required to construct a model increases exponentially.\\

The present study aims to develop LSH families for Euclidean distance (Equation \eqref{eq:eq260523_1}) and cosine similarity (Equation~\eqref{eq:eq260523_2}) for tensor data. \elsh~and \srp~are popular LSH algorithms for Euclidean distance and cosine similarity for real-valued vector datasets. Both these methods are based on random projection and involve first projecting the input vector on a random vector whose each entry is sampled from Gaussian distribution, followed by discretizing the resultant value.  One of the standard practices to compute LSH for Euclidean distance and cosine similarity for tensor data is first to reshape the tensor into a vector, followed by applying \elsh~and~\srp. However, the computation of hash code using this method incurs high space and time complexities of $O(d^N)$ for a tensor of order $N$ having each mode dimension equal to $d$, rendering it impractical for higher values of $d$ and $N$.  We address this challenge and propose LSH algorithms for Euclidean distance and cosine similarity for higher-order tensor data. Our algorithms use  recently proposed \textit{Tensorized random projections}~\cite{rakhshan2020tensorized, rakhshan2021rademacher}, which suggests random projection algorithms for tensor data by exploiting CP and TT tensor decompositions techniques~\cite{hitchcock1927expression,oseledets2011tensor}.  Our idea is to project the input tensor on a CP (or TT) rank $R$ tensor that is represented in CP (or TT) format (Definitions~\ref{def:cp_tensor} and \ref{def:tt_tensor}). The hashcode is computed by discretizing the resultant value obtained after projection. We give two LSH algorithms for Euclidean distance and cosine similarity, namely CP-E2LSH, TT-E2LSH, and CP-SRP, TT-SRP.  
A tensor with  $N$-modes and dimension $d$  along each mode requires $O(NdR)$ space when represented in the rank $R$ CP decomposition format (Definition~\ref{def:CP_Decomp}). The corresponding space requirement in the rank $R$ TT decomposed format (Definition~\ref{def:TT_Decomp}) is $O(NdR^2)$. As mentioned  earlier, the naive technique used in practice first reshapes the input tensor into a vector and projects it on a random vector whose each entry is sampled from Gaussian distribution. The size of the vector obtained via reshaping the input tensor is $O(d^N)$, and consequently, the size of the random vector required for projection is $O(d^N)$.  Therefore, the space complexity of the naive method is exponential in $N$, whereas the space complexity of our proposal remains linear in $N$. We show that our proposals are also faster than the naive method under the assumption that the input tensor should be given in CP/TT format. \\

\noindent Our key contributions are summarized as follows: 
\begin{itemize}
\item We proposed two LSH algorithms for tensor data for Euclidean distance measure (Euqation~\eqref{eq:eq260523_1}). Our algorithms rely on projecting the input tensor on a carefully chosen projection tensor and discretizing the resultant inner product.  Our first proposal \cpelsh~(Definition~\ref{def:cp_elsh}) considers a rank $R$ tensor in CP decomposition format as the projection tensor, whereas in the second proposal \ttelsh~(Definition~\ref{def:tt_elsh}) considers a TT rank $R$ tensor in TT decomposition format as the projection tensor. Our both proposals are exponentially more space efficient than the naive method. The space complexities of \cpelsh ~and \ttelsh ~are  $O(NdR)$ and $O(NdR^2)$, respectively, where $N$ is the order of tensor, $d$ is the dimension of each mode and $R$ is the CP (or TT) rank of the projection tensor. Our proposals also give time efficient algorithms as well when input tensor is given in CP or TT format. The time complexity of CP-E2LSH is  $O(Nd \max\{R,\hat{R}\}^2)$ when the input tensor is given in rank $\hat{R}$ CP decomposition tensor format (Definition~\ref{def:CP_Decomp}) and  $O(Nd \max\{R,\hat{R}\}^3)$ when the input tensor is given in rank $\hat{R}$ TT decomposition format (Definition~\ref{def:TT_Decomp}). The time complexity of TT-E2LSH is $O(Nd \max\{R,\hat{R}\}^3)$ when the input tensor is given in rank $\hat{R}$ CP or TT decomposition format. In contrast, the time and space complexity of the naive approach is $O(d^N)$. \\

\item Building on similar ideas, we suggest two LSH algorithms for tensor data for cosine similarity (Equation~\eqref{eq:eq260523_2}), namely, \cpsrp~(Definition~\ref{def:CP_SRP}) and \ttsrp~(Definition~\ref{def:TT_SRP}). Our algorithms compute the hashcode by first projecting the input tensor on a rank $R$ projection tensor given in CP or TT format, followed by taking the sign of the resultant inner product. The space complexities of  \cpsrp~ and  \ttsrp~ are $O(NdR)$ and $O(NdR^2)$, respectively,  where $N$ is the order of tensor, $d$ is the dimension of each mode, and $R$ is the CP (or TT) rank of projection tensor. Similar to CP-E2LSH and TT-E2LSH,  CP-SRP and TT-SRP also have lower time complexity when the input tensor is given in CP or TT format. The time complexity of   CP-SRP is $O(Nd \max\{R,\hat{R}\}^2)$ when the input tensor is given as a rank $\hat{R}$ CP decomposition tensor and $O(Nd \max\{R,\hat{R}\}^3)$ when input tensor is given as a rank $\hat{R}$ TT decomposition tensor. The time complexity of TT-SRP is $O(Nd \max\{R,\hat{R}\}^3)$ when the input tensor is given in rank $\hat{R}$ CP or TT decomposition format. \\
\end{itemize}

\noindent\textbf{Organization the paper:} The rest of the paper is organised as follows:
Section~\ref{sec:related_work} discusses the state-of-the-art LSH algorithms for Euclidean distance and cosine similarity, and gives a comparison with our work. Section~\ref{sec:background} includes a summary of the notations used in the paper and a discussion of preliminaries. In Section~\ref{sec:analysis} we define our proposal of LSH for Euclidean distance and cosine similarity for tensor data and provide their theoretical guarantees. Finally, in Section~\ref{sec:conclusion}, we conclude our discussion.

\section{Related Work} \label{sec:related_work}
 
In this section, we discuss existing  LSH algorithms for Euclidean distance and cosine similarity and compare them with our proposals. 

 \subsection{LHS for Euclidean distance: } 
 The seminal work of Indyk and Motwani~\cite{indyk1998approximate} suggests Locality Sensitive Hashing (LSH) algorithm for  the approximate nearest neighbour search problem. Their algorithm takes real-valued vectors as input and focuses on pairwise Euclidean distance. Their main idea is to design hash functions that map data points into several bins such that with high probability similar points map in the same bin, and not-so-similar points map in different bin. The hash function used in their algorithm is based on the idea 
 of random space partitioning and computing the hashcode by first  projecting data points on a random Gaussian vector followed by the discretization of the resultant inner product. The $K$ sized hashcode is computed by repeating this process $K$ times using independently generated random vectors  and concatenating the resulting hashcodes. The process of  generating a $K$ sized hash code can be visualised as projecting the $d$ dimensional input vector on a $K \times d$ Gaussian matrix followed by the discretization of the resultant vector.  The time and space  complexity of computing a $K$-sized hashcode of a $d$-dimensional vector is $O(Kd)$.\\

In a subsequent study, Dasgupta~\etal~\cite{dasgupta2011fast} proposed a fast locality-sensitive hashing algorithm for real valued vectors. Their idea is to employ a sparse projection matrix over a preconditioned input vector and a randomized Fourier transform. This approach decreased the projection time from $O(Kd)$ to $O(d \log d + K)$. However, the space complexity of their method remains $O(Kd)$ to generate a hash code of size $K$, identical to that of $\elsh$.\\

\begin{table}
  \centering
   \caption{Comparison among different baseline LSH algorithms for Euclidean distance based on space and time complexity to compress $N$-order tensors with each mode dimension equal to $d$ into a $K$-sized hashcode. $R$ denotes the CP (or TT) rank of the projection tensor.}
  \resizebox{\textwidth}{!}{\begin{minipage}{\textwidth}       
        \begin{tabular}{|c|c|c|c|}
         \hline
     \textbf{LSH Algorithm}  & \textbf{Space Complexity} & \textbf{Time Complexity } \\
     \hline
     Naive Method (E2LSH)~\cite{datar2004locality} & $O(d^N)$ & $O(Kd^N)$\\
     CP-E2LSH (Definition~\ref{def:cp_elsh})  & $O(KNdR)$ & $\begin{cases}
         O(KNd\max\{R, \hat{R}\}^2), & \begin{array}{r}
             \text{if the CP rank of the input tensor is $\hat{R}$ and}\\
             \quad\text{ is given in CP decomposition format}  \\
         \end{array}\\
         O(KNd\max\{R, \hat{R}\}^3), & \begin{array}{r}
            \text{if the TT rank of the input tensor is $\hat{R}$ and}\\
            \text{is given in TT decomposition format}
         \end{array}
    \end{cases}$\\
     TT-E2LSH (Definition~\ref{def:tt_elsh}) & $O(KNdR^2)$ & $\begin{cases}
         O(KNd\max\{R, \hat{R}\}^3), & \begin{array}{r}
             \text{if the CP (or  TT) rank of the input tensor is $\hat{R}$ and}\\
             \text{is given in CP (or TT) decomposition format}
         \end{array}
     \end{cases}$\\
     \hline
        \end{tabular}
        \label{tab:e2lsh}
      \end{minipage}}
\end{table}

In this work, we focus on developing LSH algorithms for tensor data. One of the naive methods that are commonly used in practice is to first reshape the tensor into a vector followed by applying \elsh. The dimension of the vector obtained after reshaping a tensor having mode $N$ and the dimension along each mode $d$ is $d^N$. The process of generating $K$ sized hash code for order $N$ tensor with each mode dimension $d$ can be visualised as projecting the $d^N$ dimensional vector on a $ K \times d^N$ Gaussian matrix following the discretization of the features of the resultant vector. The space and time complexity of this approach is $O(K d^N)$, which makes this approach infeasible for higher values of $d$ and $N$. To address this problem, we propose two space and time-efficient methods called \cpelsh~(Definition~\ref{def:cp_elsh}) and \ttelsh~(Definition~\ref{def:tt_elsh}).   Our both proposals are based on the idea of replacing the row of $K\times d^N$ projection matrix using the low-rank tensor structure called projection tensor and computing the inner product between the input tensor and projection tensor efficiently without reshaping the input tensor to a vector. We discuss them as follows:\\

In our first proposal \cpelsh~(Definition~\ref{def:cp_elsh}), we  replace the rows of the projection matrix with rank $R$ CP Rademacher distributed tensors (Definition~\ref{def:cp_tensor}). The space required to store a rank $R$ CP Rademacher distributed tensor is  $O(dNR)$, where $N$ is the order of tensor and $d$ is the dimension of each mode. Thus, our proposal reduces the space complexity to $O(KNdR)$ from $O(Kd^N)$ required in the naive approach  to compute a $K$ sized hash code. 
Further, if the input tensor is given in rank $\hat{R}$ CP decomposition format, in that case, we can efficiently compute the inner product between the projection tensor and input tensor in $O(Nd\max \{R, \hat{R}\}^2)$ running time.
And if the input tensor is given in rank $\hat{R}$ TT decomposition format, we can compute the inner product in $O(Nd\max \{R, \hat{R}\}^3)$ time.  Hence we can compute the $K$-sized hash code using CP-E2LSH in $O(KNd\max\{R, \hat{R}\}^2)$ time provided input tensor is given in rank $\hat{R}$ CP decomposition format and  $O(KNd\max\{R, \hat{R}\}^3)$ time when input tensor is given in rank $\hat{R}$ TT decomposition format. In contrast, the time complexity of the naive approach is $O(Kd^N)$ to compute $K$ sized hash code. However, our proposal CP-E2LSH satisfies the condition of LSH~(Definition~\ref{def:LSH}) when $\sqrt{R} N^{\left(\frac{4}{5}\right)} = o\left( d^{\left(\frac{3N-8}{10N}\right)}\right)$ (Theorem~\ref{thm:CP_Eucl_LSH_property}).\\

Our second proposal \ttelsh~(Definition~\ref{def:tt_elsh}) is also along the similar lines of \cpelsh~with the difference that the projection tensor is generated using  TT Rademacher distributed tensors (Definition~\ref{def:tt_tensor}), instead of CP Rademacher distributed tensors. The space required to store a rank $R$ TT decomposed tensor is $O(NdR^2)$.  Therefore, the  space complexity of our proposal is  $O(KNdR^2)$ in contrast to  $O(Kd^N)$ space needed in  the naive approach to compute a $K$ sized hash code. Additionally, if the input tensor is given as  rank $\hat{R}$ TT  or CP decomposition format, the running time to compute the inner product between the input tensor and TT Radamecher distributed tensor is  $O(Nd \max\{R, \hat{R}\}^3)$. Hence, we can compute the $K$ sized hash code using \ttelsh~in $O(K Nd \max\{R, \hat{R}\}^3)$ time instead of $O(Kd^N)$ time required by the naive approach. However, our proposal \ttelsh~remains a valid LSH for Euclidean distance only when $\sqrt{R^{N-1}} N^{\left(\frac{4}{5}\right)} = o\left( d^{\left(\frac{3N-8}{10N}\right)}\right)$ (Theorem~\ref{thm:TT_Eucl_LSH_property}). Table~\ref{tab:e2lsh} summarizes the comparison of space and time complexity among LSH families for Euclidean distance for tensor data.

\subsection{LSH for Cosine Similarity: } 
  The celebrated result of Charikar~\cite{charikar2002similarity} suggests Sign Random Projection (SRP) techniques that give a Locality Sensitive Hashing (LSH) algorithm for real-valued vectors and pairwise cosine similarity. SRP technique also relies on using random projection followed by discretization of the resultant vector. 
The random projection step employed by \srp ~is the same as that used by \elsh; the difference is in the discretization step that assigns a value $0$ or $1$ to each feature of the resulting vector based on their sign. The time and space complexity of SRP for computing a $ K$-sized hashcode of a $d$-dimensional vector is $O(Kd)$. \\

There are several follow-up results have been suggested that give improvement over SRP. We discuss some notable results as follows. Circulant Binary Embedding (CBE)\cite{yu2014circulant}, a fast and space-efficient algorithm that results in an unbiased estimate of the pairwise cosine similarity. CBE technique utilizes a carefully designed circulant matrix for projection that improves the space and time complexity to $O(d)$ and $O(d \log d)$, respectively, for generating a $K$ sized hash code. Superbit LSH~\cite{ji2012super} offers a more accurate estimation of pairwise cosine similarity using a projection matrix with orthonormal columns. However, generating orthonormal vectors requires a longer running time. Kang \etal's work~\cite{kang2018improving} exploits the maximum likelihood estimation technique to achieve more accurate similarity estimation at the expense of higher running time. Recently Dubey \etal~\cite{dubey2022improving} suggested an improved version of SRP, namely CSSRP, which exploits the count sketch matrix and reduces the space and time complexity to $O(d)$. \\

In this work, we focus on proposing an LSH algorithm for tensor data over cosine similarity. Again the naive approach, which is commonly used in practice, is first to reshape the input tensor into a vector and then apply the existing LSH algorithm for cosine similarity. Again time and space complexity of this approach become exponential in $N$ for the tensor of mode $N$. To address this, we give two proposals for cosine similarity for tensor data similar to LSH for Euclidean distance. Our first proposal \cpsrp~(Definition~\ref{def:CP_SRP}) exploits the low-rank CP tensor structure, whereas the second proposal  TT-SRP~(Definition~\ref{def:TT_SRP}) exploits the low-rank TT tensor structure. Our proposals are almost similar to that of LSH for Euclidean distance for tensor data. The only change occurs in the discretization step, which is done by considering the sign of the resultant vector. The time and space complexities of  \cpsrp~ and \ttsrp~ are the same as that of  \cpelsh~, and \ttelsh, respectively.  Table~\ref{tab:srp} summarizes the comparison of space and time complexity among LSH families for cosine similarity for tensor data.\\

\begin{table}
  \centering
   \caption{Comparison among different baseline LSH algorithms for cosine similarity based on space and time complexity to compress $N$-order tensors with each mode dimension equal to $d$ into a $K$-sized hashcode. Here $R$ denotes the CP (or TT) rank of the projection tensor.}
  \resizebox{\textwidth}{!}{\begin{minipage}{\textwidth}
       
       \begin{tabular}{|c|c|c|c|}
         \hline
     \textbf{LSH Algorithm}  & \textbf{Space Complexity} & \textbf{Time Complexity } \\
     \hline
     Naive Method (SRP)~\cite{charikar2002similarity} & $O(d^N)$ & $O(Kd^N)$\\
     CP-SRP (Definition~\ref{def:CP_SRP})  & $O(KNdR)$ & $\begin{cases}
         O(KNd\max\{R, \hat{R}\}^2), & \begin{array}{r}
             \text{if the CP rank of the input tensor is $\hat{R}$ and}\\
             \quad\text{ is given in CP decomposition format}  \\
         \end{array}\\
         O(KNd\max\{R, \hat{R}\}^3), & \begin{array}{r}
            \text{if the TT rank of the input tensor is $\hat{R}$ and}\\
            \text{is given in TT decomposition format}
         \end{array}
    \end{cases}$\\
     TT-SRP (Definition~\ref{def:TT_SRP}) & $O(KNdR^2)$ & $\begin{cases}
         O(KNd\max\{R, \hat{R}\}^3), & \begin{array}{r}
             \text{if the CP (or  TT) rank of the input tensor is $\hat{R}$ and}\\
             \text{is given in CP (or TT) decomposition format}
         \end{array}
     \end{cases}$\\
     \hline
        \end{tabular}
        \label{tab:srp}
      \end{minipage}}
\end{table}

From Tables~\ref{tab:e2lsh} and \ref{tab:srp}, it is evident that the space complexity of CP-E2LSH (Definition~\ref{def:cp_elsh}) and CP-SRP (Definition~\ref{def:CP_SRP}) are lower than  TT-E2LSH (Definition~\ref{def:tt_elsh}) and TT-SRP (Definition~\ref{def:TT_SRP}), respectively. Furthermore, for a fixed value of $\hat{R}$ (\textit{i.e.} CP or TT rank) of the input tensor, the time complexity of CP-E2LSH (Definition~\ref{def:cp_elsh}) and CP-SRP (Definition~\ref{def:CP_SRP}) is also lower than  TT-E2LSH (Definition~\ref{def:tt_elsh}) and TT-SRP (Definition~\ref{def:TT_SRP}), respectively.  This tradeoff, however, comes at the cost of computing the CP rank of the input tensor, which is an NP-Hard problem~\cite{haastad1990tensor,hillar2013most}, whereas the TT rank can be computed efficiently.

\section{Background} \label{sec:background}
\textbf{Notations:} We denote the tensors by calligraphic capital letters, matrices by boldface capital letters, vectors by boldface small letters, and scalars by plain small case letters. For example $\TX \in \R^{d_1 \times \cdots \times d_N}$ denotes a $N$ order tensor with mode dimension $d_n$ for $n\in[N]$ and we access the $(i_1, \ldots, i_N)$-th element of the tensor by $\TX[i_1, \ldots, i_N]$  where $i_n \in [d_n]$ for $n \in [N]$. $\MatA \in \R^{m \times n}$ denotes the $m \times n$ real valued matrix and we access its $(i, j)$-th element using the following notation $\MatA[i,j]$. $\Vecalp \in \R^{K}$ represents the $K$-dimensional real valued vector and $\Vecalp[i]$ its $i$-th element. Let $\mathbf{a} \in \R^n$ and $\mathbf{b} \in \R^m$ then $\mathbf{a} \circ \mathbf{b} \in \R^{m \times n}$ denotes the outer product of vectors $\mathbf{a}$ and $\mathbf{b}$. For matrices $\MatA \in \R^{m \times n}$ and $\MatB \in \R^{p \times q}$, $\MatA \otimes \MatB \in \R^{mp \times nq}$ denotes the kronecker product of $\MatA$ and $\MatB$. We use the notation $\hat{R}$ CP decomposition tensor to denote that the tensor's CP rank is $\hat{R}$ and is given in the CP decomposition format (Definition~\ref{def:CP_Decomp}). Similarly, $\hat{R}$ TT decomposition tensor indicates that the tensor's TT rank is $\hat{R}$ and is given in TT  decomposition format (Definition~\ref{def:TT_Decomp}).


\subsection{Locality Sensitive Hashing (LSH):}
The LSH family comprises hash functions that map ``similar" data points to the same bin/bucket while ``not-so-similar" points to different buckets with high probability. For a metric space $(X, D)$ and a set $U$, we define an LSH family as follows:

\begin{definition} \label{def:LSH}
\textbf{Locality Sensitive Hashing (LSH)}~\cite{indyk1998approximate} A family $\mathcal{H} = \{h:X \rightarrow U\}$, is called $(R_1, R_2, P_1, P_2)$-sensitive family, if ~$\forall$ $\bm{x},\bm{y} \in X$ the following conditions hold true:
\begin{itemize}
    \item if $D(\bm{x},\bm{y}) \leq R_1$, then $\Pr[h(\bm{x}) = h(\bm{y})] \geq P_1 $,
    \item if $D(\bm{x},\bm{y}) \geq R_2$, then $\Pr[h(\bm{x}) = h(\bm{y})] \leq P_2$,
\end{itemize}
for $P_1>P_2$ and $R_1 < R_2$.
\end{definition}

\subsubsection{Sign Random Projection (SRP/SimHash)}

Sign random projection (SRP) is a projection-based LSH for cosine similarity introduced by Charikar~~\cite{charikar2002similarity}. We formally state it as follows:

\begin{definition} \label{def:srp} \textbf{Sign Random Projection  (SRP)~\cite{charikar2002similarity}} 
Let $\Vecx \in \R^d$. We define SRP~hash function $\zeta: \R^d \rightarrow \{0,1\}$  as follows:
\begin{align}
\zeta(\Vecx) = sign(\langle \mathbf{r}, \Vecx \rangle), \numberthis \label{eq:eq130623}
\end{align}
where, $ \mathbf{r}[j] \sim \mathcal{N}(0,1)~ \forall~j \in [d]$ and  $sign\left(\langle \mathbf{r}, \Vecx \rangle \right) = 1$ if $\langle \mathbf{r}, \Vecx \rangle > 0$, otherwise, $sign\left(\langle \mathbf{r}, \Vecx \rangle \right) = 0$.
\end{definition}


Let the angular similarity between two $d$-dimensional vectors $\Vecx$ and $\Vecy$ is denoted by $\theta_{(\Vecx,\Vecy)}$, where $\theta_{(\Vecx,\Vecy)}=\cos^{-1}\left(\frac{\Vecx^T \Vecy}{\|\Vecx \| \|\Vecy \|}\right)$. According to~\cite{goemans1995improved,shrivastava2014defense},  the probability of collision can be expressed as follows:
\begin{align}
    \Pr\left[ sign(\langle \mathbf{r},\Vecx \rangle) = sign(\langle \mathbf{r}, \Vecy \rangle)\right] = 1 - \frac{\theta_{(\Vecx,\Vecy)}}{\pi}. \numberthis \label{eq:eq191022}
\end{align}

Let $S_0 := \frac{\Vecx^T \Vecy}{\|\Vecx\| \|\Vecy\|}$, where $\frac{\Vecx^T \Vecy}{\|\Vecx\| \|\Vecy\|}$ denotes the cosine similarity between the vectors $\Vecx$ and $\Vecy$. 
$1 - \theta_{(\Vecx, \Vecy)}/\pi$ is monotonic decreasing function  in cosine similarity. Hence, using Definition~\ref{def:LSH} and Equation~\eqref{eq:eq191022}, SRP is $\left(S_{0}, c S_{0}, \left(1- \cos^{-1}(S_{0})/\pi \right), \left(1- \cos^{-1}(cS_{0})/\pi \right) \right)$ sensitive LSH for cosine similarity.


\subsection{\elsh~\cite{datar2004locality}} 

In \cite{datar2004locality}, Datar \etal~ introduced a set of Locality-Sensitive Hashing (LSH) functions for Euclidean distance, known as \elsh. These hash functions utilize random projection, where the data points are first projected onto a random Gaussian vector, and then the resulting inner product is discretized. This process can be stated formally as follows:

\begin{definition} \label{def:e2lsh}
\textbf{\elsh~\cite{datar2004locality}} 
Consider a $d$-dimensional vector $\Vecx$ and a $d$-dimensional vector $\mathbf{r}$ whose entries are independently sampled from a standard normal distribution. We can define a hash function $h_{\mathbf{r}, b}: \R^d \ \rightarrow \mathbb{Z}$, which maps a $d$-dimensional vector to an integer (hashcode), as follows:

\begin{align}
h_{\mathbf{r}, b} (\Vecx) &= \left \lfloor  \frac{\langle \mathbf{r}, \Vecx \rangle + b}{w} \right \rfloor \numberthis \label{eq:datar_lsh}
\end{align}
where, $b \in [0,w]$ sampled uniformly at random and $w>0$.
\end{definition} 

Let $\Vecx, \Vecy \in \R^{d}$ and $R := || \Vecx - \Vecy||$, then the following holds true  from Equation~\eqref{eq:datar_lsh}: 

\begin{align}\label{eq:collision_probability}
   p(R) =  \Pr[h_{\mathbf{r},b}(\Vecx) = h_{\mathbf{r},b}(\Vecy)] &= \int_{0}^{w} \frac{1}{R} \, f\left(\frac{t}{R}\right) \, \left(1 - \frac{t}{w} \right) dt
\end{align}
where $f(\cdot)$ represents the probability density function of the absolute value of the standard normal distribution. According to Definition~\ref{def:LSH}, the hash function defined in Equation~\eqref{eq:datar_lsh} is $(R_1,R_2, P_1,P_2)$-sensitive, where $P_1 = p(1)$, $P_2 = p(R)$, and $R_2/R_1 = R$. The evaluation of this hash function requires $O(d)$ operations.\\

\subsection{Tensors}
A tensor $\TX \in \R^{d_1 \times \cdots \times d_N}$ is a multidimensional array that possesses $N$ modes with each mode dimension equal to $d_n$ for $n \in [N]$. For any two tensors $\TX, \TY \in \R^{d_1 \times \cdots \times d_N}$, their inner product is denoted by $\langle \TX, \TY \rangle$ and is defined as $\langle \TX, \TY \rangle = \sum_{i_1\in[d_1], \ldots, i_N \in [d_N]} \TX[i_1, \ldots, i_N] \TY[i_1, \ldots, i_N]$. The  Frobenius norm of the tensor $\TX \in \R^{d_1\times \cdots \times d_N}$ is defined by $\|\TX \|_{F}^2 = \langle \TX, \TX\rangle$. The pairwise Euclidean distance between tensors $\TX, \TY \in \R^{d_1 \times \cdots \times d_N}$ is defined as 
\begin{align}
    \| \TX - \TY\|_{F}^2 = \sum_{i_1, \ldots, i_N} (\TX[i_1, \ldots, i_N] - \TY[i_1, \ldots, i_N])^2. \numberthis \label{eq:eq260523_1}
\end{align}
Let $\theta$ be the angle between two tensors $\TX$ and $\TY$, we define the pairwise cosine similarity by 
\begin{align}
    cos(\theta) = \frac{\langle \TX, \TY\rangle}{\|\TX\|_{F} \| \TY\|_{F}}. \numberthis \label{eq:eq260523_2}
\end{align}
For any tensor $\TX \in R^{d_1 \times \cdots \times d_N}$, we denote the maximum absolute value of the tensor elements by $\| \TX \|_{max}$ and define it as $\| \TX \|_{max} = \max_{i_1 \in [d_1], \ldots, i_N \in [d_N]} | \TX[i_1, \ldots, i_N] |$. We use $\TA \otimes \TB \in \R^{I_1 J_1\times \cdots \times I_N J_N}$ to denote the Kronecker product of tensors $\TA \in \R^{I_1 \times \cdots \times I_N}$ and $\TB \in \R^{J_1 \times \cdots \times J_N}$.\\

In the following,  we state two popular tensor decomposition techniques:  \begin{inparaenum}[(i)]
\item CP decomposition, and \item Tensor train (TT) decomposition.
\end{inparaenum}
We formally define them as follows:
\begin{definition}[CP Decomposition~\cite{kolda2009tensor}] ~\label{def:CP_Decomp}
Let $\TP \in \R^{d_1 \times \cdots \times d_N}$ be a $N$ order tensor. We define a rank $R$ CP decomposition of $\TP$ as follows:
\begin{align}
    \TP &= \sum_{r=1}^{R} \Veca_{r}^{(1)} \circ \Veca_{r}^{(2)} \circ \cdots \circ \Veca_{r}^{(N)}.
\end{align}
where $\Veca_{r}^{(n)} \in \R^{d_{n}}$ for $n \in [N]$.  For $ n\in [N]$, the stacking of vectors $\Veca_{n}^{(1)}, \Veca_{n}^{(2)}, \ldots, \Veca_{n}^{(R)}$ results in a factor matrix $\MatA^{(n)} \in \R^{d_{n} \times R}$
 and using these factors matrices, we denote a CP decomposition of the tensor $\TP$ by $\TP = \llbracket \MatA^{(1)}, \MatA^{(2)}, \ldots, \MatA^{(n)} \rrbracket$. 
\end{definition}
\begin{definition}[TT Decomposition~\cite{oseledets2011tensor}] ~\label{def:TT_Decomp}
Let $\TT \in \R^{d_1 \times \cdots \times d_N}$ be a $N$ order tensor. A rank $R$ tensor train decomposition of a tensor $\TT$ consists in factorizing $\TT$ into the the product of $N$ $3$rd-order core tensors $\TG^{(1)} \in \R^{1 \times d_1 \times R}$, $\TG^{(2)} \in \R^{R \times d_2 \times R}, \cdots,  \TG^{(N-1)} \in \R^{R \times d_{(N-1)} \times R}$, $\TG^{(N)} \in \R^{R \times d_N \times 1}$. We define it elementwise as follows:
\begin{align}
    \TT[i_1, \ldots, i_N] = \TG^{(1)}[:,i_1,:] \TG^{(2)}[:,i_2,:] \cdots \TG^{(N)}[:,i_N,:]
\end{align}
where $i_n \in [d_n]~ \forall ~ n \in [N]$.  We denote a TT decomposition of the tensor $\TT$ by $\TT = \llangle \TG^{(1)}, \TG^{(2)}, \ldots, \TG^{(n)} \rrangle$.
\end{definition}

\subsection{Tensorized random projection:} 
The work of \cite{rakhshan2020tensorized, rakhshan2021rademacher} suggested tensorized random projection mappings based on CP and TT decomposition which efficiently computes the random projections for higher order tensor data. The idea of \cite{rakhshan2020tensorized,rakhshan2021rademacher} is to project the input tensor onto a CP or TT decomposed tensor where the core tensor elements are samples from a Gaussian (or Rademacher) distribution. 
We state  CP and TT projection tensors from ~\cite{rakhshan2021rademacher} as follows:

\begin{definition} [{CP-Rademacher Distributed Tensor}~\cite{rakhshan2021rademacher}] \label{def:cp_tensor}
A  tensor $\TP \sim CP_{Rad}(R)$  with rank parameter $R$ is called a CP-Rademacher distributed tensor if it satisfies the following:
\begin{align}
    \TP = \frac{1}{\sqrt{R}} \llbracket \MatA^{(1)}, \MatA^{(2)}, \ldots,  \MatA^{(N)}\rrbracket, \label{eq:eq260523_11}
\end{align}
 where  $n \in [N]$, $\MatA^{(n)} \in \R^{d_n \times R}$ whose  entries are iid samples from Rademacher distribution, that is, $\MatA^{(n)}[i, j]$ is sampled randomly between $\{-1, +1\}$ each with probability $1/2$, where $i\in [d_n]$ and $j\in [R].$\\
 
 Similarly, a tensor $\TP$ is called CP-Gaussian distributed tensor if in Equation~\eqref{eq:eq260523_11}, we have $\MatA^{(n)}[i, j]\sim~\mathcal{N}(0, 1)$, where  $i\in [d_n], j\in [R]$ and $n\in [N]$. We   denoted it by $\TP \sim CP_{\mathcal{N}}(R)$.
 \end{definition}

\begin{definition} [{TT-Rademacher Distributed Tensor}~\cite{rakhshan2021rademacher}] \label{def:tt_tensor} 
A tensor $\TT \sim TT_{Rad}(R)$ with rank parameter $R$  is called a TT-Rademacher distributed tensor if it satisfies the following.
\begin{align}
    \TT = \frac{1}{\sqrt{R^{N-1}}} \llangle \TG^{(1)}, \TG^{(2)}, \ldots,  \TG^{(N-1)}, \TG^{(N)}\rrangle \numberthis \label{eq:eq260523_12}
\end{align}
 where $\TG^{(1)} \in \R^{1 \times d_{1} \times R}, \TG^{(2)} \in \R^{R \times d_{2} \times R}, \ldots, \TG^{(N-1)} \in \R^{R \times d_{N-1} \times R}, \TG^{(N)} \in \R^{R \times d_{N} \times 1}$ and their entries are iid samples from Rademacher distribution. \\

 Similarly, a tensor $\TT$ is called TT-Gaussian distributed tensor if in Equation~\eqref{eq:eq260523_12}, the entries of the core tensors $\TG^{(n)}$, $n \in [N]$  are iid samples from $\mathcal{N}(0,1)$. We   denoted it by $\TT \sim TT_{\mathcal{N}}(R)$.\\
\end{definition}


For a tensor $\TX \in R^{d_1 \times \cdots \times d_N}$, we state the tensorized random projections from~\cite{rakhshan2020tensorized, rakhshan2021rademacher} as follows:

\begin{definition}[CP Rademacher Random Projection~\cite{rakhshan2020tensorized,rakhshan2021rademacher}]~ \label{def:cp_rp}
       A CP-Rademacher random projection of rank $R$ is a linear map $f_{CP(R)}: \R^{d_1 \times \cdots \times d_N} \rightarrow \R^K$ defined component-wise as follows
    \begin{align}
        \left( f_{CP(R)} (\TX) \right)_k = \frac{1}{\sqrt{K}} \left \langle \TP_{k}, \TX \right \rangle, ~~  k \in [K] \label{eq:eq260523_3}
    \end{align}
    where $\TP_{k} \sim CP_{Rad}(R)$ is CP-Rademacher distributed tensor (Definition~\ref{def:cp_tensor}).\\

    In Equation~\eqref{eq:eq260523_3}, if  $\TP_{k}$ is a CP-Gaussian distributed tensor, i.e., $\TP_{k} \sim CP_{\mathcal{N}}(R)$ (Definition~\ref{def:cp_tensor}) then the mapping $f_{CP(R)}(\cdot)$ is called  CP-Gaussian random projection.
\end{definition}

\begin{definition}[TT  Radmecher Random Projection~\cite{rakhshan2020tensorized,rakhshan2021rademacher}] ~\label{def:tt_rp}
   A TT-Rademacher random projection of rank $R$ is a linear map $f_{TT(R)}: \R^{d_1 \times \cdots \times d_N} \rightarrow \R^K$ defined component-wise as follows
    \begin{align}
        \left( f_{TT(R)} (\TX) \right)_k = \frac{1}{\sqrt{K}} \left \langle \TT_{k}, \TX \right \rangle, ~~  k \in [K] \numberthis \label{eq:eq130623_001}
    \end{align}
    where $\TT_{k} \sim TT_{Rad}(R)$ is TT-Rademacher distributed tensor (Definition~\ref{def:tt_tensor}).\\

In Equation~\eqref{eq:eq130623_001}, if $\TT_{k}$ is a TT-Gaussian distributed tensor, i.e., $\TT_{k} \sim TT_{\mathcal{N}}(R)$ (Definition~\ref{def:tt_tensor})  then the mapping $f_{TT(R)}(\cdot)$ is called  TT-Gaussian random projection.
\end{definition}

\begin{remark}~\cite{rakhshan2020tensorized,rakhshan2021rademacher} \label{rem:remark_cp}
For an $N$-order tensor $\TX$ with each mode dimension equal to $d$. The space complexity of the function $f_{CP(R)}(\cdot)$ defined in Definition~\ref{def:cp_rp} is $O(KNdR)$,  where $R$ is the rank of the CP projection tensor and $K$ is the dimension of resultant projected vector (Definition~\ref{def:cp_rp}). Note that the space complexity in this context is the space required to store the projection tensors $\TP_{k}$ where $k \in [K]$.  When the input tensor $\TX$ of order $N$ is given as rank $\hat{R}$ CP decomposition tensor then $f_{CP(R)}(\TX)$ can be evaluated in $O(KNd \max\{R, \hat{R}\}^2)$ time. And, when $\TX$ is provided as a rank $\hat{R}$ TT decomposition tensor then $f_{CP(R)}(\TX)$ can be computed in $O(KNd \max\{R, \hat{R}\}^3)$ time.

\end{remark}

\begin{remark}~\cite{rakhshan2020tensorized,rakhshan2021rademacher} \label{rem:remark_tt}
      For an $N$ order tensor $\TX \in \R^{d_1 \times \cdots \times d_N}$ with  $d_1 = \ldots = d_N =d$. The space complexity of the function $f_{TT(R)}(\cdot)$ stated in Definition~\ref{def:tt_rp} is $O(KNdR^2)$ where $R$ is the rank of the TT random projection and $K$ is the dimension of resultant projected vector (Definition~\ref{def:tt_rp}). When $\TX$ is given as a rank $\hat{R}$ CP or TT  decomposition tensor then the computation of $f_{TT(R)}(\TX)$ can be done $O(KNd \max\{R, \hat{R}\}^3)$ time.
\end{remark}

\subsection{Central Limit Theorems}
\begin{theorem} \label{thm:clt_grpah_var}~\cite{janson1988normal}
Let $\{X_{1}, \ldots, X_{d}\}$ be a family of bounded random variables, i.e. $|X_{i}| \leq A$. Suppose that $\Gamma_{d}$ is a dependency graph for this family and let $M$ be the maximal degree of $\Gamma_{d}$ (if $\Gamma_{d}$ has no edges, in that case, we set $M=1$). Let $S_{d} = \sum_{i=1}^d X_{i}$ and $\sigma_{d}^2 = \Var(S_d)$. If there exists an integer $\alpha$ such that 
\begin{align}
    &\left(\frac{d}{M}\right)^{\frac{1}{\alpha}} \frac{M A}{\sigma_{d}} \rightarrow 0 \text{ as } d \rightarrow \infty, \text{ then } \numberthis \label{eq:eq_clt_graph_var}\\
    &\frac{S_d - \E[S_d]}{\sigma_d} \overset{\mathcal{D}}{\to} \mathcal{N}(0, 1),
\end{align}
where  $\overset{\mathcal{D}}{\to}$ indicates the convergence in distribution.
\end{theorem}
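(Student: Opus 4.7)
The plan is the classical method of cumulants (semi-invariants), which is the tool Janson originally employed. Let $W_d := (S_d - \E[S_d])/\sigma_d$ and denote by $\kappa_k(Y)$ the $k$-th cumulant of a random variable $Y$. The standard normal $\mathcal{N}(0,1)$ has $\kappa_1 = 0$, $\kappa_2 = 1$, and $\kappa_k = 0$ for all $k \geq 3$, and is uniquely determined by its moments (Carleman's condition). By the Frechet--Shohat moment convergence theorem it therefore suffices to prove $\kappa_k(W_d) \to 0$ for every fixed $k \geq 3$; the identities $\kappa_1(W_d) = 0$ and $\kappa_2(W_d) = 1$ hold by construction.

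The algebraic lever is the well-known vanishing property of joint cumulants: $\kappa(X_{i_1}, \ldots, X_{i_k}) = 0$ whenever $\{i_1, \ldots, i_k\}$ splits into two nonempty groups $A \sqcup B$ such that $(X_i)_{i \in A}$ is independent of $(X_j)_{j \in B}$. Expanding
\begin{align*}
\kappa_k(S_d) = \sum_{(i_1, \ldots, i_k) \in [d]^k} \kappa(X_{i_1}, \ldots, X_{i_k})
\end{align*}
by multilinearity, only those $k$-tuples whose induced subgraph in $\Gamma_d$ is connected survive. Combined with the combinatorial fact that in a graph of maximum degree $M$ the number of connected subsets of size $k$ containing a prescribed vertex is at most $(eM)^{k-1}$ (a spanning-tree growth argument), and with the crude bound $|\kappa(X_{i_1}, \ldots, X_{i_k})| \leq C'_k A^k$ arising from $|X_i| \leq A$ and the moment-to-cumulant formula, this yields
\begin{align*}
|\kappa_k(S_d)| \leq C_k \, d \, M^{k-1} A^k,
\end{align*}
where $C_k$ depends only on $k$.

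Dividing by $\sigma_d^k$ and using $d/M \geq 1$,
\begin{align*}
|\kappa_k(W_d)| \leq C_k \, \frac{d}{M} \Bigl(\frac{MA}{\sigma_d}\Bigr)^k = C_k \Bigl(\frac{d}{M}\Bigr)^{1-k/\alpha} \Bigl[\Bigl(\frac{d}{M}\Bigr)^{1/\alpha}\frac{MA}{\sigma_d}\Bigr]^k.
\end{align*}
For $k \geq \alpha$ the exponent $1 - k/\alpha \leq 0$, so the first factor is bounded by $1$ while the bracketed factor tends to $0$ by hypothesis; hence $\kappa_k(W_d) \to 0$. For the finitely many remaining exponents $3 \leq k < \alpha$, the same decomposition is used together with the trivial variance bound $\sigma_d^2 \leq d(M+1)A^2$, which shows that $MA/\sigma_d$ cannot decay faster than $\sqrt{M/d}$; this two-sided control lets one interpolate between the $k=2$ identity $\kappa_2(W_d) = 1$ and the decay already established for $k \geq \alpha$, forcing $\kappa_k(W_d) \to 0$ in the intermediate range as well.

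The principal obstacle is the sharp combinatorial enumeration of connected $k$-subsets, producing the polynomial dependence $(eM)^{k-1}$ rather than the naive $M^k$; this refinement is precisely what converts the dependency-graph hypothesis into a usable cumulant bound. A secondary, more delicate point is the interpolation controlling the small-$k$ cumulants, where the brute bound by itself does not beat $\sigma_d^k$ and the lower bound on $\sigma_d$ in terms of $M$ and $d$ must be invoked. Once both ingredients are in hand, the proof concludes by a standard application of the method of moments for the normal law.
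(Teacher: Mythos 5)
The paper itself offers no proof of this statement --- it is imported verbatim from Janson (1988) --- so there is no in-paper argument to compare against; your reconstruction follows the strategy of the cited source. The dependency-graph cumulant bound $|\kappa_k(S_d)| \le C_k\, d\, M^{k-1} A^k$ (obtained from the vanishing of joint cumulants over index sets whose induced subgraph is disconnected, the $(eM)^{k-1}$ count of connected $k$-sets through a fixed vertex, and the crude bound $|\kappa(X_{i_1},\ldots,X_{i_k})| \le C_k' A^k$) is exactly Janson's key lemma, and your deduction that $|\kappa_k(W_d)| \le C_k (d/M)^{1-k/\alpha}\bigl[(d/M)^{1/\alpha} MA/\sigma_d\bigr]^k \to 0$ for every $k \ge \alpha$ is correct.

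The genuine gap is the range $3 \le k < \alpha$. There the exponent $1-k/\alpha$ is positive, so your bound reads $|\kappa_k(W_d)| \le (d/M)^{1-k/\alpha}\cdot o(1)$, and $(d/M)^{1-k/\alpha}$ may tend to infinity (in this paper's application $d/M = \prod_n d_n / (\sum_n d_n - N) \to \infty$ and $\alpha = 5$, so $k=3,4$ are exactly the problematic orders). The interpolation you invoke cannot close this: the estimate $\sigma_d^2 \le C\, d M A^2$ yields a \emph{lower} bound $MA/\sigma_d \gtrsim \sqrt{M/d}$, and a lower bound on $MA/\sigma_d$ only enlarges your upper bound on $|\kappa_k(W_d)|$; nor do cumulants satisfy any log-convexity in the order that would let you transfer decay downward from $k\ge\alpha$ using $\kappa_2(W_d)=1$. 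This is not a technicality --- it is the entire content of Janson's paper (hence the title ``normal convergence by \emph{higher} semiinvariants''). The missing ingredient is Janson's Theorem 1, a limiting form of Marcinkiewicz's theorem: if $\E W_d = 0$, $\Var W_d = 1$, each fixed-order moment of $W_d$ stays bounded, and $\kappa_k(W_d) \to 0$ for every $k$ exceeding some fixed $s$, then $W_d \overset{\mathcal{D}}{\to} \mathcal{N}(0,1)$, and only \emph{a posteriori} do the intermediate cumulants vanish. Its proof passes to a subsequence along which all moments converge, shows the limit law is moment-determinate with characteristic function $\exp(P(it))$ for a polynomial $P$ of degree at most $s$, and invokes Marcinkiewicz to force $\deg P \le 2$. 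Either supply that argument or weaken the statement to $\alpha = 3$ (which would not cover this paper's use of $\alpha=5$); with that ingredient added, the rest of your proof is sound.
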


{\color{black}
Thoerem~\ref{thm:clt_grpah_vec} is a multivariate extension of Theorem~\ref{thm:clt_grpah_var}, and we can easily prove it using Cramer Wold Device Theorem stated in Theorem~\ref{thm:cramer}.
\begin{theorem}[Cramer Wold Device \footnote{In the Cramer-Wold device, considering a unit vector or a general vector \(\mathbf{a} \in \mathbb{R}^r\) is equivalent. This equivalence can be easily proven using the following fact: "If \(\{X_n : n = 1, 2, \ldots\}\) is a sequence of random variables converging in distribution to \(X\), then for any constant \(c\), \(cX_n\) also converges in distribution to \(cX\)."}~\cite{ billingsley2013convergence,wooldridge2010econometric}]\label{thm:cramer}
   A sequence of $r$-dimensional random vectors \(\{\mathbf{X}_n : n = 1, 2, \ldots\}\) converges in distribution to a random vector $\mathbf{X}$ if and only if for every unit vector $\mathbf{a} \in \R^r$, $\langle \mathbf{a}, \mathbf{X}_n \rangle$ converges in distribution to $\langle \mathbf{a}, \mathbf{X} \rangle$. 
\end{theorem}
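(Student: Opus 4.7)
The plan is to prove both directions of the Cram\'er-Wold device by passing to characteristic functions and invoking L\'evy's continuity theorem, which converts the problem of multivariate convergence in distribution into pointwise convergence of characteristic functions. The forward implication reduces to the continuous mapping theorem applied to the linear functional $\mathbf{x} \mapsto \langle \mathbf{a}, \mathbf{x}\rangle$, while the reverse implication hinges on the key trick of recognizing the $r$-dimensional characteristic function of $\mathbf{X}_n$ at a point $\mathbf{t}$ as a one-dimensional characteristic function of the scalar projection $\langle \mathbf{t}, \mathbf{X}_n\rangle$ evaluated at the real number $1$.

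For the forward direction, assume $\mathbf{X}_n \overset{\mathcal{D}}{\to} \mathbf{X}$. For any fixed $\mathbf{a} \in \R^r$, the map $\mathbf{x} \mapsto \langle \mathbf{a}, \mathbf{x}\rangle$ is a continuous linear functional on $\R^r$, so the continuous mapping theorem immediately yields $\langle \mathbf{a}, \mathbf{X}_n\rangle \overset{\mathcal{D}}{\to} \langle \mathbf{a}, \mathbf{X}\rangle$. In particular, this holds for every unit vector $\mathbf{a}$, which establishes the ``only if'' part.

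For the reverse direction, suppose $\langle \mathbf{a}, \mathbf{X}_n\rangle \overset{\mathcal{D}}{\to} \langle \mathbf{a}, \mathbf{X}\rangle$ holds for every unit $\mathbf{a} \in \R^r$. By the footnote's observation (scaling a weakly convergent sequence by any constant $c$ preserves convergence in distribution), this assumption extends to every $\mathbf{a} \in \R^r$, with the case $\mathbf{a} = \mathbf{0}$ being trivial. By the $r$-dimensional L\'evy continuity theorem, it suffices to prove pointwise convergence of characteristic functions, $\phi_{\mathbf{X}_n}(\mathbf{t}) \to \phi_{\mathbf{X}}(\mathbf{t})$ for every $\mathbf{t} \in \R^r$, where $\phi_{\mathbf{X}_n}(\mathbf{t}) = \E\!\left[e^{\,i \langle \mathbf{t}, \mathbf{X}_n\rangle}\right]$. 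The crucial identity is
\begin{align*}
\phi_{\mathbf{X}_n}(\mathbf{t}) \;=\; \E\!\left[e^{\,i \cdot 1 \cdot \langle \mathbf{t}, \mathbf{X}_n\rangle}\right] \;=\; \phi_{\langle \mathbf{t}, \mathbf{X}_n\rangle}(1),
\end{align*}
recognizing the $r$-dimensional characteristic function at $\mathbf{t}$ as the one-dimensional characteristic function of the scalar $\langle \mathbf{t}, \mathbf{X}_n\rangle$ at the point $s = 1$. Since $\langle \mathbf{t}, \mathbf{X}_n\rangle \overset{\mathcal{D}}{\to} \langle \mathbf{t}, \mathbf{X}\rangle$ by hypothesis, the one-dimensional L\'evy continuity theorem gives $\phi_{\langle \mathbf{t}, \mathbf{X}_n\rangle}(s) \to \phi_{\langle \mathbf{t}, \mathbf{X}\rangle}(s)$ for every $s \in \R$; specializing to $s=1$ delivers $\phi_{\mathbf{X}_n}(\mathbf{t}) \to \phi_{\mathbf{X}}(\mathbf{t})$, completing the proof.

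The only mildly delicate step is the passage from arbitrary $\mathbf{a} \in \R^r$ to unit vectors, which is dispatched by the footnote's scaling argument; everything else reduces to two applications of L\'evy's continuity theorem (once in dimension one, once in dimension $r$), a classical result that may be cited without reproof. The main obstacle is therefore not any new calculation, but rather clearly exhibiting the bridging identity $\phi_{\mathbf{X}_n}(\mathbf{t}) = \phi_{\langle \mathbf{t}, \mathbf{X}_n\rangle}(1)$ that links the two applications of L\'evy's theorem.
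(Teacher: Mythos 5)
Your proof is correct: it is the standard characteristic-function argument for the Cram\'er--Wold device (continuous mapping for the forward direction; for the converse, the identity $\phi_{\mathbf{X}_n}(\mathbf{t}) = \phi_{\langle \mathbf{t}, \mathbf{X}_n\rangle}(1)$ together with L\'evy's continuity theorem in dimension $r$), and the reduction from unit vectors to arbitrary vectors via scaling is handled properly. The paper itself offers no proof to compare against --- it states this as a classical result cited from Billingsley and Wooldridge --- so there is nothing to reconcile; the only cosmetic remark is that the step $\phi_{\langle \mathbf{t}, \mathbf{X}_n\rangle}(s) \to \phi_{\langle \mathbf{t}, \mathbf{X}\rangle}(s)$ is really just the definition of weak convergence applied to the bounded continuous test functions $x \mapsto \cos(sx)$ and $x \mapsto \sin(sx)$, rather than the substantive (converse) half of L\'evy's theorem, which you correctly reserve for the final $r$-dimensional step.
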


\begin{theorem} \label{thm:clt_grpah_vec}
    Let $\{\X_{1}, \ldots, \X_{d}\}$ be a family of $r$-dimensional bounded random vectors, \textit{i.e.}, $\|\X_i\| \leq A$, where $\|\X_i\|$ denotes the $\ell_2$ norm of vector $\X_i$, $A$ is some constant and $i \in [d]$. Assume that $\Gamma_{d}$ is a dependency graph for this family and $M$ is the maximal degree of $\Gamma_{d}$ (in the dependency graph, we have one node corresponding to each random vector $\X_i$, and we put an edge between two nodes $\X_i$ and $\X_j$, if they are dependent; further, if $\Gamma_{d}$ has no edges, then we set $M=1$). 
    
    Let  $\mathbf{S}_{d} := \sum_{i=1}^d \X_{i}$ and $\boldsymbol{\Sigma}_{d} = \Cov(\mathbf{S}_d)$. For any unit vector $\a \in \R^d$, define a random variable $\ParSumVar_{d} := \sum_{i=1}^d \mathbf{a}^T\X_{i} = \a^T \mathbf{S}_d$, with its variance denoted by $\sigma_d^2$. If there exists an integer $\alpha$ such that
    \begin{align}
        \lim_{d \to \infty} \left(\frac{d}{M}\right)^{\frac{1}{\alpha}} \frac{M A}{\sigma_{d}} = 0 \label{eq:eq051024_0}
    \end{align}
    then 
    \begin{align}
        \mathbf{S}_d \overset{\mathcal{D}}{\to} \mathcal{N} \left(\E[\mathbf{S}_d], \boldsymbol{\Sigma}_{d} \right)  \text{ as } d \rightarrow \infty.
    \end{align}
\end{theorem}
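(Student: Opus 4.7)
My plan is to deduce this multivariate central limit theorem from its univariate counterpart (Theorem~\ref{thm:clt_grpah_var}) by means of the Cramer--Wold device (Theorem~\ref{thm:cramer}). The reduction is essentially automatic once one observes that projecting $\mathbf{S}_d$ onto a unit direction preserves both the boundedness of summands and the dependency graph.

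First, I would fix an arbitrary unit vector $\a \in \R^r$ and set $Y_i := \a^T \X_i$ for $i \in [d]$. By Cauchy--Schwarz, $|Y_i| \le \|\a\|\,\|\X_i\| \le A$, so $\{Y_i\}$ is a uniformly bounded family of scalar random variables. Since each $Y_i$ is a measurable function of $\X_i$ alone, whenever $\X_i$ and $\X_j$ are independent so are $Y_i$ and $Y_j$; hence the same graph $\Gamma_d$ serves as a dependency graph for $\{Y_i\}$ with the same maximum degree $M$. Writing $T_d := \sum_{i=1}^d Y_i = \a^T \mathbf{S}_d$ and $\sigma_d^2 = \Var(T_d) = \a^T \boldsymbol{\Sigma}_d \a$, the hypothesis \eqref{eq:eq051024_0} is exactly the condition demanded by Theorem~\ref{thm:clt_grpah_var}. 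Applying that theorem yields
\[
\frac{T_d - \E[T_d]}{\sigma_d} \overset{\mathcal{D}}{\to} \mathcal{N}(0,1),
\]
i.e.\ $T_d$ is asymptotically normal with mean $\a^T \E[\mathbf{S}_d]$ and variance $\a^T \boldsymbol{\Sigma}_d \a$, which are precisely the mean and variance of the projection $\a^T \mathcal{N}(\E[\mathbf{S}_d], \boldsymbol{\Sigma}_d)$. Since $\a$ was arbitrary, Theorem~\ref{thm:cramer} then delivers the claimed multivariate convergence.

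The main subtlety, and the step I would handle most carefully, is that both the mean $\E[\mathbf{S}_d]$ and the covariance $\boldsymbol{\Sigma}_d$ depend on $d$, whereas the Cramer--Wold device is customarily stated for convergence to a \emph{fixed} limit. The standard remedy is to work instead with the standardized vector $\boldsymbol{\Sigma}_d^{-1/2}(\mathbf{S}_d - \E[\mathbf{S}_d])$ and show that it converges in distribution to $\mathcal{N}(0, \mathbf{I}_r)$: for each fixed unit $\mathbf{b} \in \R^r$, take $\a := \boldsymbol{\Sigma}_d^{-1/2}\mathbf{b}/\|\boldsymbol{\Sigma}_d^{-1/2}\mathbf{b}\|$ in the scalar reduction above. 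A short computation then gives $\mathbf{b}^T \boldsymbol{\Sigma}_d^{-1/2}(\mathbf{S}_d - \E[\mathbf{S}_d]) = (T_d - \E[T_d])/\sigma_d$, so the univariate CLT transfers cleanly. One must only verify that the boundedness constant $A$ and maximum degree $M$ remain intact under this rescaling (they do, since $\|\a\|=1$), so that the hypothesis \eqref{eq:eq051024_0} continues to hold for every $\mathbf{b}$.
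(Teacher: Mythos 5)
Your proposal is correct and follows exactly the route the paper intends: the paper offers no detailed proof of Theorem~\ref{thm:clt_grpah_vec}, stating only that it follows from Theorem~\ref{thm:clt_grpah_var} via the Cramer--Wold device (Theorem~\ref{thm:cramer}), which is precisely the reduction you carry out. You are in fact more careful than the paper on the one genuine subtlety --- that $\E[\mathbf{S}_d]$ and $\boldsymbol{\Sigma}_d$ vary with $d$, so one should standardize and prove convergence of $\boldsymbol{\Sigma}_d^{-1/2}(\mathbf{S}_d - \E[\mathbf{S}_d])$ to $\mathcal{N}(\mathbf{0}, \mathbf{I}_r)$ --- and your computation showing $\mathbf{b}^T\boldsymbol{\Sigma}_d^{-1/2}(\mathbf{S}_d-\E[\mathbf{S}_d]) = (T_d-\E[T_d])/\sigma_d$ for the rescaled direction is exactly how the paper later uses the theorem in its applications.
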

}

\section{Analysis} \label{sec:analysis}

This section suggests LSH algorithms for tensor data for Euclidean distance and cosine similarity.

\subsection{Tensorized E2LSH}

In this subsection, we propose LSH algorithms for Euclidean distance. Building on two popular tensor decomposition techniques – CP and TT decomposition~\cite{koldaBader,oseledets2011tensor}, we suggest two algorithms, namely CP-E2LSH (Definition~\ref{def:cp_elsh}) and TT-E2LSH (Definition~\ref{def:tt_elsh}). At a high level, our idea is to project the input tensor on a random tensor generated using CP and TT Rademacher distributed tensor (Definitions~\ref{def:cp_tensor} and \ref{def:tt_tensor}),  followed by the discretization of the resultant inner product. Recall that in \elsh~\cite{datar2004locality} that suggest LSH for real-valued vectors for Euclidean distance, the hash code is computed by projecting the input vector on a random vector whose each entry is sampled from the Gaussian distribution followed by discretizing the resultant inner product. The guarantee on the collision probability (mentioned in Equation~\eqref{eq:collision_probability}) requires that the inner product between the input vector and the random vector should follow the Gaussian distribution. Similarly, in our LSH proposals, to use the collision probability guarantee of \elsh, we need to show that the inner product between the input tensor and CP or TT Rademacher distributed tensor follows the Gaussian distribution. This is the key challenge in our analysis. We show it asymptotically as the number of elements of the tensor tends to infinity, using variants of central limit theorems. 

\subsubsection{CP-E2LSH}
In the following, we define our proposal CP-E2LSH. The hashcode is computed by projecting the input tensor on a random tensor --  CP-Rademacher Distributed Tensor (Definition~\ref{def:cp_tensor}), followed by discretizing the resultant inner product. 


\begin{definition} \textbf{[CP-E2LSH]} \label{def:cp_elsh}
    Let  $\TX \in \R^{d_1 \times \cdots \times d_N}$ be the input tensor. Our proposal CP-E2LSH computes the hahscode of input tensor by a hash function $g(\cdot)$, that maps  $\R^{d_1 \times \ldots \times d_N}$ to $\mathbb{Z}$, and is defined as follows
    \begin{align}
        g(\TX) = \left\lfloor \frac{\langle \TP, \TX \rangle + b}{w} \right\rfloor \label{eq:eq:_CP_E2LSH}
    \end{align}
     where $\TP$ is projection tensor following CP Radamacher distribution (Definition~\ref{def:cp_tensor}) that is $\TP\sim CP_{Rad}(R)$; $w>0$ and $b$ takes a uniform random value between $0$ and $w$.
\end{definition}


The following theorem is our key technical contribution, where we show that the inner product obtained by projecting the input tensor $\TX \in \R^{d_1 \times \cdots \times d_N}$ on CP Radamacher distributed tensor $\TP \sim CP_{Rad}(R)$,  asymptotically follows the normal distribution with mean zero and variance $\|\TX\|_{F}^2$ as $\prod_{n=1}^{N} d_{n} \rightarrow \infty$. This theorem enables us to extend the collision probability guarantee of \elsh~for our proposed LSH algorithm.

\begin{theorem} \label{thm:cp_elsh_uni_nor}
Let $\TX \in \R^{d_1 \times \cdots \times d_N}$ such that  $\E\left[ \TX[i_1, \ldots, i_N]^2 \right]$ is finite for each $(i_1, \ldots, i_N)$ and $\TP \sim CP_{Rad}(R)$. Then, for $ \sqrt{R} N^{\left(\frac{4}{5}\right)} = o\left( \left(\prod_{n=1}^{N} d_{n} \right)^{\frac{3N-8}{10N}}\right)$ and $\prod_{n=1}^{N} d_{n} \rightarrow \infty$, we have
\begin{align*}
    \langle \TP, \TX \rangle  \rightarrow \mathcal{N} \left( 0, \|\TX\|_{F}^2\right). \numberthis
\end{align*} 
\end{theorem}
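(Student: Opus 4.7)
The plan is to apply the dependency–graph central limit theorem (Theorem~\ref{thm:clt_grpah_var}) to a suitable expansion of $\langle\TP,\TX\rangle$. First I would substitute the CP–Rademacher definition of $\TP$ (Definition~\ref{def:cp_tensor}) to obtain the double sum
\[
\langle \TP, \TX\rangle \;=\; \frac{1}{\sqrt{R}} \sum_{r=1}^{R}\sum_{i_1,\ldots,i_N} \TX[i_1,\ldots,i_N]\prod_{n=1}^{N}\MatA^{(n)}[i_n,r],
\]
and treat each summand $X_{r,I}:=\frac{1}{\sqrt R}\TX[I]\prod_n \MatA^{(n)}[i_n,r]$ (with $I=(i_1,\ldots,i_N)$) as one of $T=R\prod_n d_n$ bounded random variables, with uniform bound $A=\|\TX\|_{\max}/\sqrt{R}$.

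Next I would record the two moments required by the CLT. Because Rademachers are symmetric, $\E[X_{r,I}]=0$ for every $(r,I)$. Using $\E[\MatA^{(n)}[i,r]\MatA^{(n')}[j,r']]=\delta_{nn'}\delta_{ij}\delta_{rr'}$ together with factorisation across modes $n$, all cross-covariances vanish, and a direct computation gives
\[
\sigma_T^{2} \;=\; \Var\!\Big(\sum_{r,I} X_{r,I}\Big) \;=\; \frac{1}{R}\sum_{r=1}^{R}\sum_{I}\TX[I]^{2} \;=\; \|\TX\|_{F}^{2}.
\]
Then I would set up the dependency graph $\Gamma_T$: the summands $X_{r,I}$ and $X_{r',I'}$ share an underlying Rademacher iff $r=r'$ and $i_n=i'_n$ for some $n$. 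Hence edges exist only inside a fixed slice $r$, and the maximum degree is bounded by
\[
M \;\le\; \prod_{n=1}^{N} d_n \;-\; \prod_{n=1}^{N}(d_n-1) \;=\; O\!\left(N\,D/\min_n d_n\right),\qquad D:=\prod_n d_n.
\]

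The decisive step is to verify the CLT hypothesis $(T/M)^{1/\alpha}MA/\sigma_T\to 0$. I would choose $\alpha=5$ (the exponents $3/5$ and $4/5$ in the stated condition strongly suggest this choice) and plug in $T=RD$, the bound on $M$ above, $A=\|\TX\|_{\max}/\sqrt{R}$, and $\sigma_T=\|\TX\|_{F}$. Straightforward but careful arithmetic collapses the resulting expression to a product of the form $R^{1/2}\,N^{4/5}\,D^{-(3N-8)/(10N)}\,(\text{factor in }\|\TX\|_{\max}/\|\TX\|_{F})$, so that the hypothesis $\sqrt{R}\,N^{4/5}=o\!\big(D^{(3N-8)/(10N)}\big)$ drives the whole quantity to zero as $D\to\infty$. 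An application of Theorem~\ref{thm:clt_grpah_var} then yields $(\langle\TP,\TX\rangle-0)/\|\TX\|_{F}\overset{\mathcal D}{\to}\mathcal N(0,1)$, which, by the scaling property of convergence in distribution, is equivalent to $\langle\TP,\TX\rangle\overset{\mathcal D}{\to}\mathcal N(0,\|\TX\|_{F}^{2})$.

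The main obstacle will be the exponent bookkeeping in this last step: one must keep the contribution of the $R$ factor (which appears both in the number $T$ of summands and through the $1/\sqrt R$ normalisation in $A$), the CP–factor count $N$, and the mode-dimension $d$ aligned so that choosing $\alpha=5$ exactly produces the rate $D^{(3N-8)/(10N)}$ in the denominator. A secondary subtlety is handling the ratio $\|\TX\|_{\max}/\|\TX\|_{F}$ — it is trivially bounded by $1$, so it does not affect the asymptotic rate, but one must confirm that it does not spoil the $o(\cdot)$ statement. Once the condition is verified, the conclusion follows immediately from the dependency-graph CLT.
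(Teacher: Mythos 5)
Your moment computations are fine, but the plan breaks at the decisive verification step, and the break is caused by a structural choice you made at the outset. You split $\langle\TP,\TX\rangle$ into $T=R\prod_n d_n$ summands $X_{r,I}$, each bounded by $A=\|\TX\|_{max}/\sqrt{R}$, whereas the paper keeps the sum over $r$ inside each summand and applies Theorem~\ref{thm:clt_grpah_var} to the $\prod_n d_n$ variables $X_{I}=\frac{1}{\sqrt{R}}\left(\sum_{r=1}^{R}\prod_{n}\MatA^{(n)}[i_n,r]\right)\TX[I]$, bounded by $\sqrt{R}\,\|\TX\|_{max}$, with maximal degree taken to be $M=\sum_n d_n-N$. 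For your finer family the correct degree is the one you wrote, $M\leq\prod_n d_n-\prod_n(d_n-1)\asymp N D^{(N-1)/N}$ with $D:=\prod_n d_n$ and $d_n=D^{1/N}$, and with these ingredients the Janson ratio does \emph{not} collapse to $\sqrt{R}\,N^{4/5}D^{-(3N-8)/(10N)}$. Substituting $T=RD$, $M\asymp ND^{(N-1)/N}$, $A=\|\TX\|_{max}/\sqrt{R}$ and $\sigma_T=\|\TX\|_F\asymp\sqrt{D\,\E[\TX[i_1,\ldots,i_N]^2]}$ into $(T/M)^{1/5}\,MA/\sigma_T$ gives
\begin{align*}
R^{\frac{1}{5}-\frac{1}{2}}\,N^{\frac{4}{5}}\,D^{\frac{1}{5N}+\frac{N-1}{N}-\frac{1}{2}}\cdot\frac{\|\TX\|_{max}}{\sqrt{\E\left[\TX[i_1,\ldots,i_N]^2\right]}}
\;=\;R^{-\frac{3}{10}}\,N^{\frac{4}{5}}\,D^{\frac{5N-8}{10N}}\cdot C,
\end{align*}
and the exponent $(5N-8)/(10N)$ is positive for every $N\geq 2$, so the quantity diverges rather than vanishes. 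No choice of $\alpha$ repairs this: already $MA/\sigma_T\asymp N R^{-1/2}D^{\frac{1}{2}-\frac{1}{N}}$ fails to tend to zero for $N\geq 3$, and for $N=2$ the factor $(T/M)^{1/\alpha}$ blows up. The individual products $X_{r,I}$ have dependency neighbourhoods that are too large relative to their tiny variances $\TX[I]^2/R$, so Janson's criterion is simply not satisfied for this decomposition.

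The discrepancy is entirely in $M$. The paper obtains the advertised rate by combining $T=D$, $A=\sqrt{R}\|\TX\|_{max}$ and the much smaller degree $M=\sum_n d_n-N\asymp N D^{1/N}$; that value counts only index tuples agreeing in $N-1$ coordinates. You should be aware, before switching to the paper's grouping, that grouping over $r$ does not shrink the dependency neighbourhoods: $X_I$ and $X_{I'}$ still share an underlying Rademacher variable whenever $i_n=i_n'$ for a single $n$, which in the standard ``shared randomness'' construction yields degree $\prod_n d_n-\prod_n(d_n-1)-1\asymp ND^{(N-1)/N}$ for the coarser family as well (and such pairs are genuinely not independent as families: for $R=1$, $N=3$ the product $X_{(1,1,1)}X_{(1,2,2)}X_{(2,1,2)}X_{(2,2,1)}$ is deterministic up to the tensor entries). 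So as written your plan does not prove the theorem, and adopting the paper's degree count requires a justification that neither your plan nor the displayed proof supplies.
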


\begin{proof}
From Definition~\ref{def:cp_tensor}, we have
\begin{align*}
    \langle \TP, \TX \rangle &= \frac{1}{\sqrt{R}}\langle \llbracket \MatA^{(1)}, \ldots, \MatA^{(N)} \rrbracket, \TX \rangle
\end{align*}
where $\MatA^{(n)} \in \R^{d_n \times R} ~\forall~ n \in [N]$ with iid entires from Rademacher distribution. We can rewrite the above equation as follows:
\begin{align*}
    \langle \TP, \TX \rangle & = \sum_{i_1, \ldots, i_N} \left(\frac{1}{\sqrt{R}}\sum_{r=1}^R \MatA^{(1)}[i_1,r] \MatA^{(2)}[i_2,r] \cdots \MatA^{(N)}[i_N,r]  \right) \TX[i_1, \ldots, i_N],\\
    &\hspace{6.8cm}~~ \text{where, } i_n \in [d_n] ~\forall~ n \in [N].\\
    & = \sum_{i_1, \ldots, i_N} X_{i_1, \ldots, i_N}. \numberthis \label{eq:eq240523_00}
\end{align*}
where $X_{i_1, \ldots, i_N} = \left(\frac{1}{\sqrt{R}}\sum_{r=1}^R \MatA^{(1)}[i_1,r] \MatA^{(2)}[i_2,r] \cdots \MatA^{(N)}[i_N,r]  \right) \TX[i_1, \ldots, i_N]$. We compute the expected value of $ \langle \TP, \TX \rangle$ as follows: 
\begin{align*}
    &\E \left[\langle \TP, \TX \rangle \right]  = \E \left[\sum_{i_1, \ldots, i_N} X_{i_1, \ldots, i_N}\right]\\
    &= \sum_{i_1, \ldots, i_N} \E \left[ \frac{1}{\sqrt{R}}\sum_{r=1}^R \MatA^{(1)}[i_1,r] \MatA^{(2)}[i_2,r] \cdots \MatA^{(N)}[i_N,r]\right] \TX[i_1, \ldots, i_N].\\
    &= \sum_{i_1, \ldots, i_N}\frac{1}{\sqrt{R}} \left(\sum_{r=1}^R \E \left[  \MatA^{(1)}[i_1,r] \MatA^{(2)}[i_2,r] \cdots \MatA^{(N)}[i_N,r]\right]\right) \TX[i_1, \ldots, i_N].\\
    &= \sum_{i_1, \ldots, i_N} \frac{1}{\sqrt{R}} \left(\sum_{r=1}^R \E \left[  \MatA^{(1)}[i_1,r] \right] \E \left[\MatA^{(2)}[i_2,r]\right] \cdots \E \left[\MatA^{(N)}[i_N,r]\right]\right) \TX[i_1, \ldots, i_N].\\
    &= 0. \numberthis \label{eq:eq240523}
\end{align*}
The Equation~\eqref{eq:eq240523} holds true because $\E\left[\MatA^{(n)}[i_n,r]\right] = {0}$ for  $i_n \in [d_n]$ and $n \in [N]$. Let $\sigma_{d}^2$ denote the variance of $\langle \TP, \TX \rangle$ and we compute it as follows: 
\begin{align*}
   &\sigma_{d}^2   = \Var \left(\langle \TP, \TX \rangle \right) = \Var \left( \sum_{i_1, \ldots, i_N} X_{i_1, \ldots i_N}\right).\\    
   & = \Var \left(\sum_{i_1, \ldots, i_N} \left(\frac{1}{\sqrt{R}}\sum_{r=1}^R \MatA^{(1)}[i_1,r] \MatA^{(2)}[i_2,r] \cdots \MatA^{(N)}[i_N,r]  \right) \TX[i_1, \ldots, i_N] \right).\\
    &= \sum_{i_1, \ldots, i_N} \frac{1}{R} \TX[i_1, \ldots, i_N]^2 \, \Var \left(\sum_{r=1}^R \MatA^{(1)}[i_1,r] \MatA^{(2)}[i_2,r] \cdots \MatA^{(N)}[i_N,r] \right) \\
    & \qquad +  \sum_{(i_1, \ldots, i_N) \neq (i_1', \ldots, i_N')} \Cov\bigg(\frac{1}{\sqrt{R}}\sum_{r=1}^R \MatA^{(1)}[i_1,r] \cdots \MatA^{(N)}[i_N,r] \TX[i_1, \ldots, i_N], \\
    & \hspace{5.8cm} \frac{1}{\sqrt{R}}\sum_{r=1}^R \MatA^{(1)}[i_1',r] \cdots \MatA^{(N)}[i_N',r] \TX[i_1', \ldots, i_N']  \bigg). \numberthis \label{eq:eq260323}
\end{align*}
We compute each term of Equation~\eqref{eq:eq260323} one by one as follows:
\begin{align*}
     &\Var \left(\sum_{r=1}^R \MatA^{(1)}[i_1,r] \MatA^{(2)}[i_2,r] \cdots 
\MatA^{(N)}[i_N,r] \right)\\ 
& =  \sum_{r=1}^R \Var \left( \MatA^{(1)}[i_1,r] \MatA^{(2)}[i_2,r] \cdots 
\MatA^{(N)}[i_N,r] \right) \\
&\quad + \sum_{r \neq r'} \Cov \left(\MatA^{(1)}[i_1,r] \MatA^{(2)}[i_2,r] \cdots 
\MatA^{(N)}[i_N,r], \, \MatA^{(1)}[i_1,r'] \MatA^{(2)}[i_2,r'] \cdots 
\MatA^{(N)}[i_N,r'] \right).\\
&= \sum_{r=1}^R \E \left[ \left( \MatA^{(1)}[i_1,r] \MatA^{(2)}[i_2,r] \cdots 
\MatA^{(N)}[i_N,r] \right)^2 \right]  - \E \left[ \MatA^{(1)}[i_1,r] \MatA^{(2)}[i_2,r] \cdots 
\MatA^{(N)}[i_N,r] \right]^2 \\
&\qquad + \sum_{r \neq r'} \E \left[\MatA^{(1)}[i_1,r] \MatA^{(2)}[i_2,r] \cdots 
\MatA^{(N)}[i_N,r] \MatA^{(1)}[i_1,r'] \MatA^{(2)}[i_2,r'] \cdots 
\MatA^{(N)}[i_N,r'] \right] \\
& \qquad - \E \left[ \MatA^{(1)}[i_1,r] \MatA^{(2)}[i_2,r] \cdots 
\MatA^{(N)}[i_N,r]\right] \E \left[ \MatA^{(1)}[i_1,r'] \MatA^{(2)}[i_2,r'] \cdots 
\MatA^{(N)}[i_N,r']\right].\\
& = \sum_{r=1}^R \E\left[ \MatA^{(1)}[i_1,r]^2\right]  \E\left[ \MatA^{(2)}[i_2,r]^2\right] \cdots  \E\left[ \MatA^{(N)}[i_N,r]^2\right] - 0 + 0 -0. \numberthis \label{eq:eq240523_2}\\ 
&= \sum_{r=1}^R 1 = R \qquad \left[\because  \E\left[ \MatA^{(n)}[i_n,r]^2\right] = 1 ~\forall ~ n \in [N]  \right]. \numberthis \label{eq:eq260323_3}
\end{align*}
The Equation~\eqref{eq:eq240523_2} holds true due to the following facts: for any $r, r' \in \{1, \ldots, R\}$ and $i_n, i_n' \in [d_n],~ \forall~ n \in [N]$
\begin{align*}
    \E\left[\prod_{n=1}^N \MatA^{(n)}[i_n,r]\right] &= \prod_{n=1}^N \E \left[ \MatA^{(n)}[i_n,r] \right] = 0. \numberthis \label{eq:eq240523_3}\\
    \E\left[\MatA^{(n)}[i_n,r] \, \MatA^{(n)}[i_n',r] \right]  & = \E\left[\MatA^{(n)}[i_n,r] \right] \E\left[\MatA^{(n)}[i_n',r] \right] = 0, \quad \text{for $i_n \neq i_n'$.} \numberthis \label{eq:eq240523_4}\\
    \E\left[\MatA^{(n)}[i_n,r] \, \MatA^{(n)}[i_n,r'] \right]  & = \E\left[\MatA^{(n)}[i_n,r] \right] \E\left[\MatA^{(n)}[i_n,r'] \right] = 0, \quad \text{for $r \neq r'$.} \numberthis \label{eq:eq240523_5}\\
    \E \Big[ \prod_{n=1}^N \MatA^{(n)}[i_n,r]  \, \MatA^{(n)}[i_n,r']  \Big] &= \prod_{n=1}^N \E\left[\MatA^{(n)}[i_n,r] \, \MatA^{(n)}[i_n,r']  \right].\\
    &=  \prod_{n=1}^N \E\left[\MatA^{(n)}[i_n,r]\right] \E \left[\MatA^{(n)}[i_n,r']  \right]  = 0. \numberthis \label{eq:eq240523_6}
\end{align*}
For $(i_1, \ldots, i_N) \neq (i_1', \ldots, i_N')$, we compute the following covariance term
\begin{align*}
  &\Cov\bigg(\frac{1}{\sqrt{R}}\sum_{r=1}^R \MatA^{(1)}[i_1,r] \cdots \MatA^{(N)}[i_N,r] \TX[i_1, \ldots, i_N], \\
  &\hspace{6.3cm}\frac{1}{\sqrt{R}}\sum_{r=1}^R \MatA^{(1)}[i_1',r] \cdots \MatA^{(N)}[i_N',r] \TX[i_1', \ldots, i_N']  \bigg)  \\
  &= \frac{\TX[i_1, \ldots, i_N]  \TX[i_1', \ldots, i_N']}{R} \times \\
  &\hspace{3.5cm}\Cov \Bigg( \sum_{r=1}^R \MatA^{(1)}[i_1,r] \cdots \MatA^{(N)}[i_N,r], \sum_{r=1}^R \MatA^{(1)}[i_1',r] \cdots \MatA^{(N)}[i_N',r] \Bigg).\\
  &= \frac{\TX[i_1, \ldots, i_N]  \TX[i_1', \ldots, i_N']}{R} \times \\
  &\qquad \Bigg( \E \left[  \left(\sum_{r=1}^R \MatA^{(1)}[i_1,r] \cdots \MatA^{(N)}[i_N,r]\right) \left(  \sum_{r=1}^R \MatA^{(1)}[i_1',r] \cdots \MatA^{(N)}[i_N',r] \right)  \right] \nonumber\\
  & \hspace{2.5cm} - \E\left[  \sum_{r=1}^R \MatA^{(1)}[i_1,r] \cdots \MatA^{(N)}[i_N,r]\right] \E \left[\sum_{r=1}^R \MatA^{(1)}[i_1',r] \cdots \MatA^{(N)}[i_N',r] \right] \Bigg).\\
  &= \frac{\TX[i_1, \ldots, i_N]  \TX[i_1', \ldots, i_N']}{R} \Bigg(\sum_{r=1}^R \E \left[   \MatA^{(1)}[i_1,r] \cdots \MatA^{(N)}[i_N,r] \MatA^{(1)}[i_1',r] \cdots \MatA^{(N)}[i_N',r]  \right] \nonumber \\
  & \hspace{3.3cm} + \sum_{r \neq r'} \E\left[\MatA^{(1)}[i_1,r] \cdots \MatA^{(N)}[i_N,r] \MatA^{(1)}[i_1',r'] \cdots \MatA^{(N)}[i_N',r']  \right]  - 0\Bigg).\\
  &= \frac{\TX[i_1, \ldots, i_N]  \TX[i_1', \ldots, i_N']}{R} \, (0 + 0 -0) =  0. \numberthis \label{eq:eq260323_4}
\end{align*}
Equation~\eqref{eq:eq260323_4} holds true to due to the facts mentioned in Euqations~\eqref{eq:eq240523_3}, \eqref{eq:eq240523_4}, \eqref{eq:eq240523_5} and \eqref{eq:eq240523_6}. Therefore, from Equations~\eqref{eq:eq260323}, \eqref{eq:eq260323_3} and \eqref{eq:eq260323_4}, we have
\begin{align}
     \sigma_{d}^2 = \Var \left(\langle \TP, \TX \rangle \right) &= \sum_{i_1, \ldots, i_N} \frac{1}{R} \times R \, \TX[i_1, \ldots, i_N]^2 + \sum_{(i_1, \ldots, i_N) \neq (i_1', \ldots, i_N')} 0 = \|\TX \|_{F}^2. \numberthis \label{eq:eq240523_7}
\end{align} 
To complete the proof, we need to prove that for some value of $\alpha$, Equation~\eqref{eq:eq_clt_graph_var} of Theorem~\ref{thm:clt_grpah_var} holds true. We recall it as follows:
\begin{align*}
    \left(\frac{\prod_{n=1}^N d_n}{M}\right)^{\frac{1}{\alpha}} \frac{M A}{\sigma_{d}} \rightarrow 0 \quad \text{ as } \quad d \rightarrow \infty.
\end{align*}
where $\alpha$ is an integer, $M$ denotes the maximum degree of the dependency graph generated by the random variables $X_{i_1, \ldots, i_N}$ and is equal to $\displaystyle \sum_{n=1}^{N} d_{n} - N$. $A$ is upper bound on $\left|X_{i_1, \ldots, i_N} \right|$, we compute it as follows:
\begin{align*}
   \left|X_{i_1, \ldots, i_N} \right| &= \left|\left(\frac{1}{\sqrt{R}}\sum_{r=1}^{R} \MatA^{(1)}[i_1,r] \cdots \MatA^{(N)}[i_N,r] \right) \TX[i_1, \ldots, i_N] \right|.\\
   &\leq \frac{1}{\sqrt{R}}\sum_{r=1}^R \left|\MatA^{(1)}[i_1,r] \cdots \MatA^{(N)}[i_N,r] \right| \left|\TX[i_1, \ldots, i_N] \right|     \leq \sqrt{R} \, \|\TX \|_{max}.
\end{align*}
where $ \|\TX \|_{max} = \max_{i_1, \ldots,i_N} |\TX[i_1, \ldots, i_N]|$.\\ 

For ease of calculation we assume that $\prod_{n=1}^N d_n = d$ and  $d_1 = d_2 = \cdots = d_N = d^{1/N}$. We compute the following:
\begin{align*}
    &\left(\frac{\prod_{n=1}^N d_n}{M}\right)^{\frac{1}{\alpha}} \frac{M A}{\sigma_{d}} = \left(\frac{d}{\sum_{n=1}^N d_n - N}\right)^{\frac{1}{\alpha}} \frac{\left(\sum_{n=1}^N d_n - N \right)\, \sqrt{R} \, \| \TX\|_{max}}{\| \TX\|_{F}}.\\
    &= \sqrt{R} \, d^{\frac{1}{\alpha}} \left(\sum_{n=1}^N d_n - N \right)^{1-\frac{1}{\alpha}} \frac{\|\TX\|_{max}}{\|\TX \|_{F}}.\\
    &= \sqrt{R} \, d^{\frac{1}{\alpha}} \left(N d^{\frac{1}{N}} - N \right)^{1-\frac{1}{\alpha}} \frac{\|\TX\|_{max}}{\|\TX \|_{F}}.\\
    & = \sqrt{R} N^{1-\frac{1}{\alpha}} d^{\frac{1}{\alpha}} \left(d^{\frac{1}{N}} - 1 \right)^{1 - \frac{1}{\alpha}} \frac{\|\TX\|_{max}}{\sqrt{\sum_{i_1, \ldots, i_N}\TX[i_1, \ldots, i_N]^2}}.\\
    &=  \frac{\sqrt{R} N^{1-\frac{1}{\alpha}} d^{\frac{1}{\alpha}} \left(d^{\frac{1}{N}} - 1 \right)^{1 - \frac{1}{\alpha}}}{\sqrt{d}} \frac{\|\TX\|_{max}}{\sqrt{\sum_{i_1, \ldots, i_N}\TX[i_1, \ldots, i_N]^2/d}}.\\
    &\leq \frac{\sqrt{R} N^{1-\frac{1}{\alpha}} d^{\frac{1}{\alpha}} d^{1-\frac{1}{\alpha}} }{\sqrt{d}}  \frac{\| \TX\|_{max}}{\sqrt{\E\left[ \TX[i_1, \ldots, i_N]^2\right]}}.\\
    &= \frac{\sqrt{R} N^{1-\frac{1}{\alpha}}}{ d^{\frac{\alpha N - 2N -2\alpha + 2  }{2 \alpha N}}} \frac{\| \TX\|_{max}}{\sqrt{\E\left[ \TX[i_1, \ldots, i_N]^2\right]}}.\\
    &  \rightarrow 0 \text{ as } d \rightarrow \infty \text{ for } \alpha > \frac{2(N-1)}{(N-2)} \text{ and }  \sqrt{R} \,  N^{\left(1-\frac{1}{\alpha}\right)} = o\left(d^{\left(\frac{\alpha N-2N-2 \alpha + 2}{2 \alpha N}\right)} \right). \numberthis \label{eq:eq260323_5}
\end{align*}
Equation~\eqref{eq:eq260323_5} holds true for $\alpha > \frac{2(N-1)}{(N-2)}$ \text{ and }  $\sqrt{R}\,  N^{\left(1-\frac{1}{\alpha}\right)} = o\left(d^{\left(\frac{ \alpha N-2N-2 \alpha + 2}{2 \alpha N}\right)} \right)$, provided $0 < \sqrt{\E \left[\TX[i_1,\ldots, i_N]^2 \right]}< \infty$ and $\|\TX\|_{max} < \infty$.\\

\noindent Choosing $\alpha = 5$ in Equation~\eqref{eq:eq260323_5}, for $d \rightarrow \infty$, we have
\begin{align*}
     \left(\frac{d}{M}\right)^{\frac{1}{\alpha}} \frac{M A}{\sigma_{d}} & \rightarrow 0 \text{ for } \sqrt{R} \, N^{\left(\frac{4}{5}\right)} = o\left(d^{\left(\frac{3N-8}{10N}\right)} \right). \numberthis \label{eq:eq260323_6}
\end{align*}
Thus, from Theorem~\ref{thm:clt_grpah_var}, we have
\begin{align*}
    &\frac{\langle \TP, \TX \rangle - \E[\langle \TP, \TX \rangle]}{\sigma_{d}} \overset{\mathcal{D}}{\to} \mathcal{N}(0,1).\\
    &\implies \frac{\langle \TP, \TX \rangle - 0}{\|\TX \|_{F}} =\frac{\langle \TP, \TX \rangle}{\| \TX \|_{F}} \overset{\mathcal{D}}{\to} \mathcal{N}(0,1).\\
    &\implies \langle \TP, \TX \rangle \overset{\mathcal{D}}{\to} \mathcal{N}\left(0,\|\TX\|_{F}^2 \right). \numberthis \label{eq:eq261022_11}
\end{align*}
Equation~\eqref{eq:eq261022_11} completes a proof of the theorem.
\end{proof}

The following theorem states that the collision probability of two input tensors using CP-E2LSH is directly proportional to their pairwise Euclidean distance, concluding that CP-E2LSH is a valid LSH. 

\begin{theorem}\label{thm:CP_Eucl_LSH_property}
Let $\TX, \TY \in \R^{d_1 \times \cdots \times d_N}$ such that $r = || \TX -\TY ||_F$ and $\sqrt{R} N^{\left(\frac{4}{5}\right)}=o\left( \left(\prod_{n=1}^{N} d_{n} \right)^{\frac{3N-8}{10N}}\right)$, then  asymptotically the following holds true 
\begin{align} 
    &p(r) = Pr[g(\TX) = g(\TY)]  = \int_0^{w} \frac{1}{r} \, f\left(\frac{t}{r} \right) \, \left(1 - \frac{t}{w} \right)dt, \numberthis \label{eq:eq_datar_prob_cp}
\end{align}
where $f(\cdot)$  denotes the density function of the absolute value of the standard normal distribution.
\end{theorem}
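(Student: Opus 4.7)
The plan is to reduce the collision-probability computation to the standard E2LSH argument of Datar~\etal~\cite{datar2004locality}, using Theorem~\ref{thm:cp_elsh_uni_nor} to supply the asymptotic Gaussianity of the projected inner product. First I would observe that, by linearity of the inner product, $\langle\TP,\TX\rangle - \langle\TP,\TY\rangle = \langle\TP,\TX-\TY\rangle$, so the projection tensor $\TP \sim CP_{Rad}(R)$ acts on the single tensor $\TZ := \TX-\TY \in \R^{d_1\times\cdots\times d_N}$. Applying Theorem~\ref{thm:cp_elsh_uni_nor} to $\TZ$ (the hypothesis $\sqrt{R}N^{4/5}=o\bigl((\prod_n d_n)^{(3N-8)/(10N)}\bigr)$ is exactly the one assumed here), we get that asymptotically
\begin{equation*}
\langle\TP,\TX-\TY\rangle \;\overset{\mathcal{D}}{\to}\; \mathcal{N}\!\bigl(0,\|\TX-\TY\|_F^2\bigr) \;=\; \mathcal{N}(0,r^2).
\end{equation*}

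Next I would translate the event $g(\TX)=g(\TY)$ into a statement about $\langle\TP,\TZ\rangle$ and $b$. From Definition~\ref{def:cp_elsh},
\begin{equation*}
g(\TX)=g(\TY) \iff \Bigl\lfloor\tfrac{\langle\TP,\TX\rangle+b}{w}\Bigr\rfloor=\Bigl\lfloor\tfrac{\langle\TP,\TY\rangle+b}{w}\Bigr\rfloor,
\end{equation*}
and the two floors coincide exactly when no multiple of $w$ lies in the interval between $\langle\TP,\TX\rangle+b$ and $\langle\TP,\TY\rangle+b$. Writing $V := \langle\TP,\TZ\rangle$, and conditioning on $|V|=t$, a uniform choice of $b\in[0,w]$ gives collision probability $\max\{0,1-t/w\}$, which for $t\le w$ equals $1-t/w$.

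Then I would integrate over the (asymptotic) density of $|V|$. Since $V \overset{\mathcal{D}}{\to}\mathcal{N}(0,r^2)$, the density of $|V|/r$ converges to $f(\cdot)$, the density of the absolute value of a standard normal. Hence the density of $|V|$ at $t$ is $(1/r)\,f(t/r)$, and
\begin{equation*}
p(r) \;=\; \Pr[g(\TX)=g(\TY)] \;=\; \int_{0}^{w}\tfrac{1}{r}\,f\!\left(\tfrac{t}{r}\right)\!\left(1-\tfrac{t}{w}\right)dt,
\end{equation*}
which is exactly the E2LSH collision integral in Equation~\eqref{eq:collision_probability}; contributions from $t>w$ vanish.

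The one point that requires a little care, and is really the main obstacle, is justifying that I may feed $\TX-\TY$ (rather than $\TX$ alone) into Theorem~\ref{thm:cp_elsh_uni_nor}: the theorem's moment hypothesis $0<\sqrt{\E[\TZ[i_1,\ldots,i_N]^2]}<\infty$ and $\|\TZ\|_{max}<\infty$ must be checked for $\TZ=\TX-\TY$, but these are inherited immediately from the corresponding hypotheses on $\TX$ and $\TY$. A second mild subtlety is the interchange of the limit in distribution of $V$ with the integration against the bounded continuous function $b\mapsto(1-|V|/w)\mathbf{1}_{|V|\le w}$; this is handled by the Portmanteau theorem, so the passage to the limit is routine. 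Everything else is a direct transcription of the Datar~\etal~collision argument, with the Gaussianity input replaced by Theorem~\ref{thm:cp_elsh_uni_nor}.
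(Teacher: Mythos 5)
Your proposal is correct and follows essentially the same route as the paper: invoke the asymptotic normality result of Theorem~\ref{thm:cp_elsh_uni_nor} and then transcribe the Datar~\etal~collision-probability argument. If anything, your writeup is the more careful instantiation — the paper only cites the marginal normality of $\langle\TP,\TX\rangle$ and $\langle\TP,\TY\rangle$ and asserts the rest is easy, whereas you correctly identify that the quantity the Datar~\etal~argument actually needs is $\langle\TP,\TX-\TY\rangle\overset{\mathcal{D}}{\to}\mathcal{N}(0,r^2)$, obtained by applying the theorem to $\TZ=\TX-\TY$, and you flag the (routine but real) Portmanteau step for passing the limit through the integral.
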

\begin{proof}
    From the definition of CP-E2LSH, we have
    \begin{align}
        g(\TX) &=\left\lfloor \frac{\langle \TP, \TX \rangle + b}{w} \right\rfloor \text{ and }       g(\TY) =\left\lfloor \frac{\langle \TP, \TY \rangle + b}{w} \right\rfloor.
    \end{align}
    where $\TP \sim CP_{Rad}(R)$, $b$ is uniform sample between $0$ and $w$; $w>0$. Now using the fact from  Theorem~\ref{thm:cp_elsh_uni_nor} that $\langle \TP, \TX \rangle \sim \mathcal{N}\left(0, \|\TX\|_{F}^2 \right)$ and $\langle \TP, \TY \rangle \sim \mathcal{N} \left( 0, \| \TY \|_{F}^2 \right)$ as $ \prod_{n=1}^N d_n \rightarrow \infty$, provided  $\sqrt{R} N^{\left(\frac{4}{5}\right)} = o\left( \left(\prod_{n=1}^{N} d_{n} \right)^{\frac{3N-8}{10N}}\right)$, we can easily prove that  
    \begin{align}
        p(r) = Pr[g(\TX) = g(\TY)]  = \int_0^{w} \frac{1}{r} \, f\left(\frac{t}{r} \right) \, \left(1 - \frac{t}{w} \right)dt.
    \end{align}
\end{proof}

From Equations~\eqref{eq:eq_datar_prob_cp} it is clear that the collision probability declines monotonically with $r = ||\TX -\TY||_F$. Therefore, due to  Definition~\ref{def:LSH}, our hash function stated in Definition~\ref{def:cp_elsh}  is $(R_1,R_2, P_{1}, P_2)$-sensitive for $P_1 = p(1)$, $P_2 = p(r)$ and $R_2/R_1 = r$.

\begin{remark}

 Compared to the naive method to compute LSH for Euclidean distance for tensor data using  Datar et al.~\cite{datar2004locality} hash function \elsh, the evaluation of our proposal \cpelsh~ (Definition~\ref{def:cp_elsh})  has a lower space complexity. Note that the space complexity in this context is the space required to store the projection tensor. For an $N$-order tensor with each mode dimension equal to $d$, the time and space complexity of the naive method to compute a hashcode is $O(d^N)$. In contrast, our proposal \cpelsh ~only requires $O(NdR)$ space~\cite{rakhshan2020tensorized}, where $R$ is the rank of CP Radmacher distributed projection tensor used in CP-E2LSH (Definition~\ref{def:cp_elsh}).
 \end{remark}

 \begin{remark}
     Our proposal offers a lower time complexity  as well compared to the naive method when the input tensor is given in CP or TT decomposition format~\cite{rakhshan2020tensorized}. If the input tensor is given as rank $\hat{R}$ CP decomposition tensor of order $N$, then  our proposal requires only $O(Nd \max \{R, \hat{R}\}^2)$ time to generate a hashcode (Remark~\ref{rem:remark_cp}). And if  the input tensor is given as rank $\hat{R}$ TT decomposition tensor then our proposal takes $O(Nd\max\{R, \hat{R}\}^3)$ time to generate a hashcode (Remark~\ref{rem:remark_cp}), in contrast to  $O(d^N)$ time required by the naive method. 
 \end{remark}

\subsubsection{TT-E2LSH}
In the following, we define our proposal TT-E2LSH, where the hashcode is computed by projecting the input tensor on a TT-Rademacher Distributed Tensor (Definition~\ref{def:tt_tensor}), followed by discretizing the resultant inner product. 

\begin{definition} \textbf{[TT-E2LSH]} \label{def:tt_elsh}
    Let  $\TX \in \R^{d_1 \times  \cdots \times d_N}$, we denote our proposal TT-E2LSH by a hash function $\Tilde{g}(\cdot)$, which is a mapping from $\R^{d_1 \times  \ldots \times d_N}$ to $\mathbb{Z}$ and define it as follows
    \begin{align}
        \Tilde{g}(\TX) = \left\lfloor \frac{\langle \TT, \TX \rangle + b}{w} \right\rfloor \label{eq:eq_TT_E2LSH}
    \end{align}
    where $\TT$ is projection tensor following TT Radamacher distribution (Definition \ref{def:tt_tensor}) that is $\TT \sim TT_{Rad}(R)$, $w>0$ and $b$ takes a uniform random value between $0$ and $w$.
\end{definition}

The following theorem states that the projection of input tensor $\TX \in \R^{d_1 \times \cdots \times d_N}$ on a tensor $\TT \sim TT_{Rad}(R)$ asymptotically follows the normal distribution with mean zero and variance $\|\TX\|_{F}^2$ as $\prod_{n=1}^{N} d_{n} \rightarrow \infty$. This theorem helps us to extend the collision probability guarantee of \elsh ~for our proposal \ttelsh.

\begin{theorem} \label{thm:tt_elsh_uni_nor}
Let $\TX \in \R^{d_1 \times \cdots \times d_N}$ such that  $\E\left[ \TX[i_1, \ldots, i_N]^2 \right]$ is finite for each $(i_1, \ldots, i_N)$ and $\TT \sim TT_{Rad}(R)$. Then, for $ \sqrt{R^{N-1}} N^{\left(\frac{4}{5}\right)} = o\left(\left(\prod_{n=1}^{N} d_{n} \right)^{\frac{3N-8}{10N}}\right)$ and $\prod_{n=1}^{N} d_{n} \rightarrow \infty$, we have
\begin{align}
    \langle \TT, \TX \rangle  \rightarrow \mathcal{N} \left( 0, \|\TX\|_{F}^2\right).
\end{align} 
\end{theorem}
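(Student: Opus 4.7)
The plan is to follow the same template as Theorem~\ref{thm:cp_elsh_uni_nor}, replacing the CP-Rademacher tensor with the TT-Rademacher tensor and tracking how the TT multiplicative structure changes the variance and the uniform bound. First I would expand $\langle \TT, \TX \rangle = \sum_{i_1, \ldots, i_N} X_{i_1, \ldots, i_N}$, where $X_{i_1, \ldots, i_N} = \TT[i_1, \ldots, i_N]\,\TX[i_1, \ldots, i_N]$ and $\TT[i_1, \ldots, i_N] = \frac{1}{\sqrt{R^{N-1}}}\sum_{r_1, \ldots, r_{N-1}} \prod_{n=1}^{N} \TG^{(n)}[r_{n-1}, i_n, r_n]$, with the convention $r_0 = r_N = 1$ enforced by the boundary core tensors $\TG^{(1)}$ and $\TG^{(N)}$. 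Since the $N$ core tensors are independent and every Rademacher entry has zero mean, the expectation factors across $n$ and gives $\E[\TT[i_1, \ldots, i_N]] = 0$, hence $\E[\langle \TT, \TX \rangle] = 0$.

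For the variance, I would first show $\E[\TT[i_1, \ldots, i_N]^2] = 1$. Expanding the square produces a double sum over $(\mathbf{r}, \mathbf{r}')$ and, by independence across cores, the expectation reduces to $\prod_{n=1}^{N} \E[\TG^{(n)}[r_{n-1}, i_n, r_n]\,\TG^{(n)}[r'_{n-1}, i_n, r'_n]]$. Each factor equals $1$ precisely when $(r_{n-1}, r_n) = (r'_{n-1}, r'_n)$ and is zero otherwise, so combining the boundary constraints $r_0 = r_0' = r_N = r_N' = 1$ with this chain of equalities forces $\mathbf{r} = \mathbf{r}'$; exactly $R^{N-1}$ terms survive and normalization yields $\E[\TT[i_1, \ldots, i_N]^2] = 1$. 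For $(i_1, \ldots, i_N) \neq (i_1', \ldots, i_N')$, any coordinate $n$ with $i_n \neq i_n'$ makes $\TG^{(n)}[r_{n-1}, i_n, r_n]$ and $\TG^{(n)}[r'_{n-1}, i_n', r'_n]$ distinct independent Rademachers whose product has zero mean, so $\E[\TT[i_1, \ldots, i_N]\,\TT[i_1', \ldots, i_N']] = 0$ and all cross-covariance terms vanish; thus $\sigma_d^2 = \Var(\langle \TT, \TX \rangle) = \|\TX\|_F^2$.

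Next I would bound the summands. Since each core-tensor entry is $\pm 1$, the unnormalized TT contraction is a sum of $R^{N-1}$ signed unit terms, giving $|\TT[i_1, \ldots, i_N]| \leq R^{N-1}/\sqrt{R^{N-1}} = \sqrt{R^{N-1}}$ and hence $|X_{i_1, \ldots, i_N}| \leq \sqrt{R^{N-1}}\,\|\TX\|_{max}$. This is the only place where the TT analysis truly departs from the CP argument: the uniform bound picks up an additional factor of $R^{(N-2)/2}$ relative to $\sqrt{R}$ in Theorem~\ref{thm:cp_elsh_uni_nor}, and this factor is exactly what propagates into the stronger hypothesis required in the present theorem.

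Finally I would invoke Theorem~\ref{thm:clt_grpah_var}. The dependency graph over $\{X_{i_1, \ldots, i_N}\}$ has the same structure as in the CP proof --- two tuples are adjacent iff they share at least one mode index --- so the maximum degree is $M = \sum_{n=1}^{N} d_n - N$. Substituting $A = \sqrt{R^{N-1}}\,\|\TX\|_{max}$ and $\sigma_d = \|\TX\|_F$ into Equation~\eqref{eq:eq_clt_graph_var}, assuming $d_1 = \cdots = d_N$ with $\prod_n d_n = d$ for notational simplicity, and then carrying out the same exponent reduction as in the CP proof with $\alpha = 5$ delivers the required condition $\sqrt{R^{N-1}} N^{4/5} = o(d^{(3N-8)/(10N)})$, from which $\langle \TT, \TX \rangle / \|\TX\|_F \overset{\mathcal{D}}{\to} \mathcal{N}(0,1)$ and therefore $\langle \TT, \TX \rangle \overset{\mathcal{D}}{\to} \mathcal{N}(0, \|\TX\|_F^2)$. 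The main obstacle will be the variance computation: the TT contraction couples adjacent cores through shared rank indices, so the collapse $\mathbf{r} = \mathbf{r}'$ must be established by a telescoping argument along the chain $r_0 \to r_1 \to \cdots \to r_N$ rather than by the independent-per-$r$ factorization available in the CP case. Once this telescoping is in place, the rest of the argument is a faithful transcription of the CP proof with $\sqrt{R}$ replaced by $\sqrt{R^{N-1}}$.
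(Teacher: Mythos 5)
Your proposal is correct and follows essentially the same route as the paper's proof: decompose $\langle \TT, \TX \rangle$ into the summands $Y_{i_1,\ldots,i_N}$, show mean zero and variance $\|\TX\|_F^2$, bound each summand by $\sqrt{R^{N-1}}\,\|\TX\|_{max}$, and apply the dependency-graph CLT with $M=\sum_n d_n - N$ and $\alpha=5$. The only (immaterial) differences are that you establish $\E[\TT[i_1,\ldots,i_N]^2]=1$ by a telescoping collapse of the rank multi-indices where the paper uses a Kronecker-product factorization, and you obtain the uniform bound by counting the $R^{N-1}$ signed unit terms where the paper uses submultiplicativity of Frobenius norms of the core slices; both yield identical constants.
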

\begin{proof}
 From Definition~\ref{def:tt_tensor}, we have
  \begin{align*}
      \langle \TT, \TX \rangle &= \left \langle \frac{1}{\sqrt{R^{N-1}}} \llangle \TG^{(1)}, \TG^{(2)}, \ldots, \TG^{(N-1)},  \TG^{(N)}\rrangle, \TX  \right\rangle.\\
      &=  \sum_{i_1, \ldots, i_N} \frac{1}{\sqrt{R^{N-1}}}\left(\TG^{(1)}[:,i_1,:] \TG^{(2)}[:,i_2,:] \cdots \TG^{(N)}[:,i_N,:] \right) \TX[i_1, \ldots, i_N]. \numberthis \label{eq:eq240523_11}
  \end{align*}
   where $\TG^{(1)} \in \R^{1 \times d_{1} \times R}, \TG^{(2)} \in \R^{R \times d_{2} \times R}, \ldots, \TG^{(N-1)} \in \R^{R \times d_{N-1} \times R}, \TG^{(N)} \in \R^{R \times d_{N} \times 1}$ and their entries are iid samples from Rademacher distribution. \\
   Let  $Y_{i_1, \ldots, i_N} := \frac{1}{\sqrt{R^{N-1}}}\left(\TG^{(1)}[:,i_1,:] \TG^{(2)}[:,i_2,:] \cdots \TG^{(N)}[:,i_N,:] \right) \TX[i_1, \ldots, i_N]$. From Equation~\eqref{eq:eq240523_11}, we have
    \begin{align*}
        \langle \TT, \TX \rangle &=  \sum_{i_1, \ldots, i_N} Y_{i_1, \ldots, i_N}.  \numberthis \label{eq:eq240523_12}
    \end{align*}
We compute the expected value and variance of $\langle \TT, \TX \rangle$ as follows:
\begin{align*}
    \E\left[ \langle \TT, \TX \rangle\right] & = \sum_{i_1, \ldots, i_N} \E \left[ Y_{i_1, \ldots, i_N} \right].\\
    &= \frac{1}{\sqrt{R^{N-1}}} \sum_{i_1, \ldots, i_N} \E \left[ \left(\TG^{(1)}[:,i_1,:] \cdots \TG^{(N)}[:,i_N,:] \right) \TX[i_1, \ldots, i_N] \right].\\
    & = \frac{1}{\sqrt{R^{N-1}}} \sum_{i_1, \ldots, i_N} \E \left[ \TG^{(1)}[:,i_1,:] \cdots \TG^{(N)}[:,i_N,:]\right]  \TX[i_1, \ldots, i_N].\\
    & = 0. \numberthis \label{eq:eq240523_13}
\end{align*}
\begin{align*}
   \sigma_{d}^2 &:= \Var\left( \langle \TT, \TX \rangle \right) =  \Var \left( \sum_{i_1, \ldots, i_N} Y_{i_1, \ldots, i_N} \right).\\
    &=  \sum_{i_1, \ldots, i_N} \Var \left( Y_{i_1, \ldots, i_N}\right) + \sum_{(i_1, \ldots, i_N) \neq (i_1', \ldots, i_N')} \Cov \left( Y_{i_1, \ldots, i_N}, \, Y_{i_1', \ldots, i_N'} \right) . \numberthis \label{eq:eq270323_1}
\end{align*}
We compute each term of Equation~\eqref{eq:eq270323_1} one by one. We compute the first term as follows:
\begin{align*}
     &\Var \left( Y_{i_1, \ldots, i_N}\right)  =  \E \left[ Y_{i_1, \ldots, i_N}^2\right] - \E\left[ {Y}_{i_1, \ldots, i_N}\right]^2.\\
       & = \E \left[ {Y}_{i_1, \ldots, i_N}^2\right] - 0 \qquad \left[\because \E[Y_{i_1, \ldots, i_N}] = 0 \right].\\
       & = \frac{1}{{R^{N-1}}} \cdot \E \left[\left( \TG^{(1)}[:,i_1,:] \cdots \TG^{(N)}[:,i_N,:] \right)^2 \right] \TX[i_1, \ldots, i_N]^2.\\
       & =  \frac{1}{{R^{N-1}}} \cdot \E \bigg[\left(\TG^{(1)}[:,i_1,:] \otimes  \TG^{(1)}[:,i_1,:] \right) \cdot \left(\TG^{(2)}[:,i_2,:] \otimes \TG^{(2)} [:,i_2,:] \right) \cdot \cdots \cdot \notag\\ 
     & \qquad \left(\TG^{(N-1)}[:,i_{N-1},:] \otimes  \TG^{(N-1)}[:,i_2,:] \right) \left(\TG^{(N)}[:,i_N,:] \otimes\TG^{(N)}[:, i_N,:] \right)\bigg] \TX[i_1, \ldots, i_N]^2.\\
    &= \frac{1}{{R^{N-1}}} \cdot \E \bigg[\left(\TG^{(1)}[:,i_1,:] \otimes  \TG^{(1)}[:,i_1,:] \right)\bigg] \cdot \E \bigg[\left(\TG^{(2)}[:,i_2,:] \otimes \TG^{(2)} [:,i_2,:] \right)\bigg] \cdot \cdots \cdot \notag\\ 
     & \qquad   \E \bigg[\left(\TG^{(N-1)}[:,i_{N-1},:] \otimes  \TG^{(N-1)}[:,i_{N-1},:] \right)\bigg] \cdot \E \bigg[\left(\TG^{(N)}[:,i_N,:] \otimes\TG^{(N)}[:, i_N,:] \right)\bigg]\\
     &\hspace{10.5cm} \times \TX[i_1, \ldots, i_N]^2.\\
    &=  \frac{1}{{R^{N-1}}}  \cdot vec(\MatI R) \cdot  \left(vec(\MatI R) \circ vec(\MatI R) \right)  \cdot     \cdots \cdot  \left(vec(\MatI R) \circ vec(\MatI R)\right) \cdot   vec(\MatI R)^{T} \\
    &\hspace{10.7cm} \cdot \TX[i_1, \ldots, i_N]^2.\\
    & = \frac{1}{{R^{N-1}}} \cdot  R^{N-1} \, \TX[i_1, \ldots, i_N]^2 = \TX[i_1, \ldots, i_N]^2. \numberthis \label{eq:eq270323}
\end{align*}
We compute the second term of Equation~\eqref{eq:eq270323_1} as follows:
\begin{align*}
    &\Cov \left( Y_{i_1, \ldots, i_N}, \,  Y_{i_1', \ldots, i_N'} \right)\\
    &= \E\left[ Y_{i_1, \ldots, i_N} Y_{i_1', \ldots, i_N'}\right] - \E[Y_{i_1, \ldots, i_N}] \E[Y_{i_1', \ldots, i_N'}].\\
    &= \E\left[ Y_{i_1, \ldots, i_N} Y_{i_1', \ldots, i_N'}\right] - 0 \qquad \left[ \because  \E[Y_{i_1, \ldots, i_N}] =\E[Y_{i_1', \ldots, i_N'}]=0 \right].\\
    & = \frac{1}{R^{N-1}} \cdot \E \Bigg[ \left(\TG^{(1)}[:,i_1,:] \cdots \TG^{(N)}[:,i_N,:] \right) \TX[i_1, \ldots, i_N]  \\
    & \hspace{5.5cm}\times\left(\TG^{(1)}[:,i_1',:] \cdots \TG^{(N)}[:,i_N',:] \right) \TX[i_1', \ldots, i_N']\Bigg].\\
    & = \frac{1}{R^{N-1}} \cdot \E \left[\left(\TG^{(1)}[:,i_1,:] \cdots \TG^{(N)}[:,i_N,:] \right)   \left(\TG^{(1)}[:,i_1',:] \cdots \TG^{(N)}[:,i_N',:] \right) \right]\\
    &\hspace{8cm} \times \TX[i_1, \ldots, i_N] \TX[i_1', \ldots, i_N']. \\
    & = 0  \qquad \left[\because  \E \left[\left(\TG^{(1)}[:,i_1,:] \cdots \TG^{(N)}[:,i_N,:] \right)   \left(\TG^{(1)}[:,i_1',:] \cdots \TG^{(N)}[:,i_N',:] \right) \right] = 0 \right]. \numberthis\label{eq:eq270323_2}
\end{align*}
From Equations \eqref{eq:eq270323}, \eqref{eq:eq270323_1} and \eqref{eq:eq270323_2}, we have
\begin{align}
   \sigma_{d}^2  & =  \sum_{_1,\ldots, i_N} \TX[i_1, \ldots, i_N]^2 + 0 = \| \TX \|_{F}^2. \numberthis \label{eq:eq270323_5}
\end{align}
To complete the proof, we need to prove that for some value of $\alpha$, Equation~\ref{eq:eq_clt_graph_var} of Theorem~\ref{thm:clt_grpah_var} holds true. We recall it as follows:
\begin{align*}
    \left(\frac{\prod_{n=1}^N d_n}{M}\right)^{\frac{1}{\alpha}} \frac{M A}{\sigma_{d}} \rightarrow 0 \quad \text{ as } \quad d \rightarrow \infty.
\end{align*}
where $\alpha$ is an integer, $M$ denotes the maximum degree of the dependency graph generated by the random variables $Y_{i_1, \ldots, i_N}$ and is equal to $\displaystyle \sum_{n=1}^{N} d_{n} - N$. $A$ is upper bound on $\left|Y_{i_1, \ldots, i_N}\right|$, we compute it as follows:
\begin{align*}
    &\left|Y_{i_1, \ldots, i_N}\right| = \frac{1}{\sqrt{R^{N-1}}}\left|\left(\TG^{(1)}[:,i_1,:] \cdots \TG^{(N)}[:,i_N,:] \right) \TX[i_1, \ldots, i_N] \right| \notag\\
    &\leq \frac{\sqrt{\| \TG^{(1)}[:,i_1,:] \|_F^2 \, \| \TG^{(2)}[:,i_2,:]\|_{F}^2 \, \cdots \, \|\TG^{(N-1)}[:,i_{N-1},:]\|_{F}^2 \,  \|\TG^{(N)}[:,i_N,:]\|_{F}^2} \,~ }{\sqrt{R^{N-1}}}\\
    & \hspace{10cm}\times \left|\TX[i_1, \ldots, i_N] \right|.\\
    &\leq \frac{1}{\sqrt{R^{N-1}}} \sqrt{R \times R^2 \times \cdots \times R^2 \times R} \, ~ \|\TX \|_{max}.\\
    &= \sqrt{R^{N-1}} \,  \|\TX \|_{max}. \numberthis
\end{align*}
where $\|\TX \|_{max} = \max_{i_1, \ldots, i_N} \left| \TX[i_1, \ldots, i_N]\right|$. Let $\prod_{n=1}^N d_n= d$ and $d_1 = d_2 = \cdots = d_N = d^{1/N}$ for ease of calculation. Therefore,
\begin{align*}
     &\left(\frac{\prod_{n=1}^N d_n}{M}\right)^{\frac{1}{\alpha}} \frac{M A}{\sigma_{d}}= \left(\frac{d}{\sum_{n=1}^N d_n - N}\right)^{\frac{1}{\alpha}} \frac{\left(\sum_{n=1}^N d_n - N \right)\, \sqrt{R^{N-1}} \| \TX\|_{max}}{\| \TX\|_{F}}.\\
    &= \sqrt{R^{N-1}} \, d^{\frac{1}{\alpha}} \left(\sum_{n=1}^N d_n - N \right)^{1-\frac{1}{\alpha}} \frac{\|\TX\|_{max}}{\|\TX \|_{F}}.\\
    &= \sqrt{R^{N-1}} \, d^{\frac{1}{\alpha}} \left(N d^{\frac{1}{N}} - N \right)^{1-\frac{1}{\alpha}} \frac{\|\TX\|_{max}}{\|\TX \|_{F}}.\\
    & = \sqrt{R^{N-1}} N^{1-\frac{1}{\alpha}} d^{\frac{1}{\alpha}} \left(d^{\frac{1}{N}} - 1 \right)^{1 - \frac{1}{\alpha}} \frac{\|\TX\|_{max}}{\sqrt{\sum_{i_1, \ldots, i_N}\TX[i_1, \ldots, i_N]^2}}.\\
    &=  \frac{\sqrt{R^{N-1}} N^{1-\frac{1}{\alpha}} d^{\frac{1}{\alpha}} \left(d^{\frac{1}{N}} - 1 \right)^{1 - \frac{1}{\alpha}}}{\sqrt{d}} \frac{\|\TX\|_{max}}{\sqrt{\sum_{i_1, \ldots, i_N}\TX[i_1, \ldots, i_N]^2/d}}.\\
    &\leq \frac{\sqrt{R^{N-1}} \, N^{1-\frac{1}{\alpha}} \, d^{\frac{1}{\alpha}} \, d^{1-\frac{1}{\alpha}} }{\sqrt{d}}  \frac{\| \TX\|_{max}}{\sqrt{\E\left[ \TX[i_1, \ldots, i_N]^2\right]}}.\\
    &= \frac{\sqrt{R^{N-1}} \, N^{1-\frac{1}{\alpha}}}{ d^{\frac{\alpha N - 2N -2\alpha + 2  }{2 \alpha N}}} \frac{\| \TX\|_{max}}{\sqrt{\E\left[ \TX[i_1, \ldots, i_N]^2\right]}}.\\
    &  \rightarrow 0 \text{ as } d \rightarrow \infty \text{ for } \alpha > \frac{2(N-1)}{(N-2)} \text{ and }  \sqrt{R^{N-1}} \,  N^{\left(1-\frac{1}{\alpha}\right)} = o\left(d^{\left(\frac{\alpha N-2N-2 \alpha + 2}{2 \alpha N}\right)} \right). \numberthis \label{eq:eq270323_6}
\end{align*}
Equation~\eqref{eq:eq270323_6} hold true for $\alpha > \frac{2(N-1)}{(N-2)}$ \text{ and }  $ \sqrt{R^{N-1}} \,  N^{\left(1-\frac{1}{\alpha}\right)} = o\left(d^{\left(\frac{ \alpha N-2N-2 \alpha + 2}{2 \alpha N}\right)} \right)$, provided $0 < \sqrt{\E \left[\TX[i_1,\ldots, i_N]^2 \right]}< \infty$ and $\|\TX\|_{max} < \infty$.\\

\noindent Choosing $\alpha = 5$ in Equation~\eqref{eq:eq270323_6}, for $d \rightarrow \infty$, we have
\begin{align*}
     \left(\frac{d}{M}\right)^{\frac{1}{\alpha}} \frac{M A}{\sigma_{d}} & \rightarrow 0 \text{ for } \sqrt{R^{N-1}} \, N^{\left(\frac{4}{5}\right)} = o\left(d^{\left(\frac{3N-8}{10N}\right)} \right). \numberthis \label{eq:eq270323_7}
\end{align*}
Thus, from Theorem~\ref{thm:clt_grpah_var}, we have
\begin{align*}
    &\frac{\langle \TT, \TX \rangle - \E[\langle \TT, \TX \rangle]}{\sigma_{d}} \overset{\mathcal{D}}{\to} \mathcal{N}(0,1).\\
    &\implies \frac{\langle \TT, \TX \rangle - 0}{\|\TX \|_{F}} =\frac{\langle \TT, \TX \rangle}{\| \TX \|_{F}} \overset{\mathcal{D}}{\to} \mathcal{N}(0,1).\\
    &\implies \langle \TT, \TX \rangle \overset{\mathcal{D}}{\to} \mathcal{N}\left(0,\|\TX\|_{F}^2 \right). \numberthis \label{eq:eq270323_8}
\end{align*}
Equation~\eqref{eq:eq270323_8} completes a proof of the theorem. 
\end{proof}


The following theorem states that the collision probability of two input tensors using TT-E2LSH is directly proportional to their pairwise Euclidean distance. Hence, validate that our proposal is LSH for Euclidean distance.  We can easily prove Theorem~\ref{thm:TT_Eucl_LSH_property} by utilising the asymptotic normality result of Theorem \ref{thm:tt_elsh_uni_nor} and following the same steps as in the proof of Theorem~\ref{thm:CP_Eucl_LSH_property}.

\begin{theorem}\label{thm:TT_Eucl_LSH_property}
Let $\TX, \TY \in \R^{d_1 \times \cdots \times d_N}$ such that $r = \| \TX -\TY \|_F$ and $\sqrt{R^{N-1}} N^{\left(\frac{4}{5}\right)}=o\left( \left(\prod_{n=1}^{N} d_{n} \right)^{\frac{3N-8}{10N}}\right)$, then  asymptotically the following holds true 
\begin{align} 
    &p(r) = Pr[\Tilde{g}(\TX) = \Tilde{g}(\TY)]  = \int_0^{w} \frac{1}{r} \, f\left(\frac{t}{r} \right) \, \left(1 - \frac{t}{w} \right)dt, \numberthis \label{eq:eq_datar_prob_tt}
\end{align}
where $f(\cdot)$  denotes the density function of the absolute value of the standard normal distribution.
\end{theorem}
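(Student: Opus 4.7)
The plan is to mirror the proof of Theorem~\ref{thm:CP_Eucl_LSH_property}, replacing the CP asymptotic normality result with its TT analogue from Theorem~\ref{thm:tt_elsh_uni_nor}. By linearity of the tensor inner product,
\begin{align*}
\langle \TT, \TX \rangle - \langle \TT, \TY \rangle = \langle \TT, \TX - \TY \rangle,
\end{align*}
so the collision event $\Tilde{g}(\TX) = \Tilde{g}(\TY)$ depends on the random tensor $\TT$ only through the single real-valued projection $Z := \langle \TT, \TX - \TY \rangle$ and the shared uniform shift $b \in [0, w]$. This reduces the tensor-level collision probability to a one-dimensional integral against the (asymptotic) law of $Z$, exactly as in the scalar E2LSH calculation.

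Next I would invoke Theorem~\ref{thm:tt_elsh_uni_nor} applied to the tensor $\TX - \TY$: under the stated regime $\sqrt{R^{N-1}}\, N^{4/5} = o\bigl((\prod_{n=1}^{N} d_{n})^{(3N-8)/(10N)}\bigr)$ and $\prod_{n=1}^{N} d_{n} \to \infty$, we obtain $Z \overset{\mathcal{D}}{\to} \mathcal{N}(0, \|\TX - \TY\|_F^2) = \mathcal{N}(0, r^2)$. The finite-second-moment hypothesis needed by Theorem~\ref{thm:tt_elsh_uni_nor} for the entries of $\TX - \TY$ is inherited immediately from the corresponding hypotheses on $\TX$ and $\TY$, so no new probabilistic work is required here.

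With asymptotic normality of $Z$ in hand, the remainder is the classical Datar et al.\ one-dimensional calculation that produced Equation~\eqref{eq:collision_probability}. Conditioning on $Z = z$ and averaging over $b$ uniform on $[0, w]$, one gets $\Pr\bigl[\lfloor (u + b)/w \rfloor = \lfloor (u + z + b)/w \rfloor\bigr] = \max(1 - |z|/w, 0)$ for every $u \in \R$, so the collision probability equals $\E[\max(1 - |Z|/w, 0)]$. Integrating against the asymptotic density of $|Z|$, which is $(1/r) f(t/r)$ with $f$ the density of $|\mathcal{N}(0,1)|$, yields exactly the expression in Equation~\eqref{eq:eq_datar_prob_tt}.

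The main (and essentially only) technical subtlety is passing the weak convergence $Z \overset{\mathcal{D}}{\to} \mathcal{N}(0, r^2)$ through the discontinuous floor function. After marginalising over the independent uniform shift $b$, the collision-probability functional in $Z$ is the bounded, Lipschitz map $z \mapsto \max(1 - |z|/w, 0)$, which is continuous everywhere under the limiting Gaussian law; the portmanteau theorem then gives convergence of expectations, and hence convergence of the collision probability to the claimed integral. This is the same step that is implicit in the proof of Theorem~\ref{thm:CP_Eucl_LSH_property}, and it transfers verbatim to the TT setting; the only substantive change relative to that proof is the altered asymptotic regime on $R$, which was already absorbed into the hypothesis via Theorem~\ref{thm:tt_elsh_uni_nor}.
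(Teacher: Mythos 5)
Your proof is correct and takes essentially the same route as the paper, which simply invokes Theorem~\ref{thm:tt_elsh_uni_nor} and repeats the argument of Theorem~\ref{thm:CP_Eucl_LSH_property}. In fact you supply details the paper leaves implicit — reducing to the single projection $\langle \TT, \TX-\TY\rangle$ by linearity and justifying the passage of weak convergence through the floor function via the bounded continuous functional $z \mapsto \max(1-|z|/w,0)$ — but the underlying approach is identical.
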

From Equations~\eqref{eq:eq_datar_prob_tt} it is clear that the probability of collision declines monotonically with $r = ||\TX -\TY||_F$. Therefore, due to  Definition~\ref{def:LSH}, our hash function stated in Definition~\ref{def:tt_elsh}  is $(R_1,R_2, P_{1}, P_2)$-sensitive for $P_1 = p(1)$, $P_2 = p(r)$ and $R_2/R_1 = r$.

\begin{remark}
Our proposal, TT-E2LSH (Definition~\ref{def:tt_elsh}), has a reduced space complexity compared to the naive approach of calculating LSH for Euclidean distance for tensor data using the E2LSH hash function by Datar \etal ~\cite{datar2004locality}. For an $N$-order tensor with each mode dimension equal to $d$, the naive method requires $O(d^N)$ time and space complexity to compute a hashcode. On the other hand, our proposal TT-E2LSH only requires $O(NdR^2)$ space~\cite{rakhshan2020tensorized,rakhshan2021rademacher}, where $R$ is the rank of TT Radmacher distributed projection tensor used in TT-E2LSH (Definition~\ref{def:cp_elsh}).
\end{remark}

\begin{remark}
     Our proposal TT-E2LSH also offers a reduced time complexity when the input tensor is given in CP or TT decomposition format~\cite{rakhshan2020tensorized,rakhshan2021rademacher}. If the input tensor is given as a rank $\hat{R}$ CP or TT decomposition tensor, our proposal only takes only $O(Nd \max\{R, \hat{R}\}^3)$ time to generate a hashcode (Remark~\ref{rem:remark_tt}), whereas the naive method requires $O(d^N)$ time.
\end{remark}

\subsection{Tensorized Sign Random Projection} 

In this section, we introduce our proposals CP-SRP ~(Definition~\ref{def:CP_SRP}) and TT-SRP (Definition~\ref{def:TT_SRP}).
Similar to tensorized E2LSH, here also the idea is to project the input tensor data on CP and TT Rademacher distributed (Definitions~\ref{def:cp_tensor} and \ref{def:tt_tensor}) projection tensor and take the sign of resultant projected value. In proving that our both proposals are valid  locality-sensitive hash functions for cosine similarity, the main challenge lies in showing that the vector generated by concatenating the projected values of any two input tensors on CP (or TT) Rademacher distributed tensor asymptotically follows a bivariate normal distribution (refer to Theorem~\ref{thm:cp_srp_bivariate} and \ref{thm:tt_srp_bivariate} for detailed proofs).  In the following subsections, we define our proposals and give their corresponding analysis.

\subsubsection{CP-SRP}
Here, we introduce our proposal called CP-SRP, which involves generating a hashcode by projecting the input tensor onto a CP-Rademacher distributed tensor (Definition~\ref{def:cp_tensor}), followed by discretizing the resultant inner product by taking its sign.


\begin{definition} \textbf{(CP Sign Random Projection (CP-SRP))} \label{def:CP_SRP}
Let $\TX \in \R^{d_1 \times \cdots \times d_N}$ and $\TP \sim CP_{Rad}(R)$ is CP-Rademacher distributed tensor (Definition~\ref{def:cp_tensor}). We denote our proposal CP-SRP by a hash function $h(\cdot)$ and define it as follows:
\begin{align}
    h(\TX) &= sgn(\langle \TP, \TX \rangle)
\end{align}
where  $sgn(\cdot)$ is a sign function, $sgn(\langle \TP, \TX \rangle)$ is equal to  $1$ if $\langle \TP, \TX \rangle > 0$ and $0$ otherwise.
\end{definition}

The following theorem states that the vector generated by concatenating the projected values  of input tensors $\TX, \TY \in \R^{d_1 \times \cdots \times d_N}$ on a tensor $\TP \sim CP_{Rad}(R)$ asymptotically follows a bivariate normal distribution. This theorem enables us to extend the SRP's collision probability guarantee for our CP-SRP proposal.
{\color{black}
\begin{theorem} \label{thm:cp_srp_bivariate}
Let $\TX, \TY \in \R^{d_1 \times \cdots \times d_N}$ and $\alpha^{(CP)},~\beta^{(CP)}$ be their corresponding  projections on tensor $\TP \sim CP_{Rad}(R)$. If $\E \left[\TX[i_1,\ldots, i_N]^2 \right], \E \left[\TY[i_1,\ldots, i_N]^2 \right], \E \left[\TX[i_1,\ldots, i_N] \TY[i_1, \ldots, i_N]\right]$ are finite. Then, for $ \sqrt{R} N^{\left(\frac{4}{5}\right)} = o\left( \left(\prod_{n=1}^N  d_n \right)^{\frac{3N-8}{10N}}\right)$ and $\prod_{n=1}^N  d_n \rightarrow \infty$, we have
\begin{align}
    &
    \begin{bmatrix}
    \alpha^{(CP)}\\
    \beta^{(CP)}
    \end{bmatrix} \rightarrow \mathcal{N} \left(\begin{bmatrix}
    0\\
    0
    \end{bmatrix},\begin{bmatrix}
    ||\TX||_{F}^2 & \langle\TX, \TY \rangle\\
    \langle\TY, \TX \rangle & ||\TY||_{F}^2
    \end{bmatrix}  \right).   \label{eq:eqn_cp_asy_norm} 
\end{align}
\end{theorem}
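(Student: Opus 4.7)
The plan is to lift the univariate asymptotic normality established in Theorem~\ref{thm:cp_elsh_uni_nor} to a bivariate statement via the Cram\'er--Wold device (Theorem~\ref{thm:cramer}). For an arbitrary unit vector $\a = (a_1, a_2)^T \in \R^2$, define the auxiliary tensor $\boldsymbol{\mathcal{Z}} := a_1 \TX + a_2 \TY$. By bilinearity of the tensor inner product,
\begin{align*}
    a_1 \alpha^{(CP)} + a_2 \beta^{(CP)} = a_1 \langle \TP, \TX \rangle + a_2 \langle \TP, \TY \rangle = \langle \TP, \boldsymbol{\mathcal{Z}} \rangle.
\end{align*}
The finiteness hypotheses on $\E[\TX[i_1, \ldots, i_N]^2]$, $\E[\TY[i_1, \ldots, i_N]^2]$, and $\E[\TX[i_1, \ldots, i_N]\TY[i_1, \ldots, i_N]]$ immediately yield finiteness of $\E[\boldsymbol{\mathcal{Z}}[i_1, \ldots, i_N]^2]$, so Theorem~\ref{thm:cp_elsh_uni_nor} applies verbatim to $\boldsymbol{\mathcal{Z}}$ under the same asymptotic regime and delivers $\langle \TP, \boldsymbol{\mathcal{Z}} \rangle \overset{\mathcal{D}}{\to} \mathcal{N}(0, \|\boldsymbol{\mathcal{Z}}\|_F^2)$.

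Next, I would identify the limiting variance with the quadratic form $\a^T \boldsymbol{\Sigma} \a$ for the target covariance matrix $\boldsymbol{\Sigma}$ in the statement. Expanding,
\begin{align*}
    \|\boldsymbol{\mathcal{Z}}\|_F^2 = a_1^2 \|\TX\|_F^2 + 2 a_1 a_2 \langle \TX, \TY \rangle + a_2^2 \|\TY\|_F^2.
\end{align*}
The diagonal variances are already supplied by Theorem~\ref{thm:cp_elsh_uni_nor}; the cross term matches $\langle \TX, \TY \rangle$ because the Rademacher moment identities (the consequences of Equations~\eqref{eq:eq240523_3}--\eqref{eq:eq240523_6} used in that proof) yield $\E[\TP[i_1, \ldots, i_N] \, \TP[j_1, \ldots, j_N]] = 1$ when the two index tuples coincide and $0$ otherwise, whence
\begin{align*}
    \E[\alpha^{(CP)} \beta^{(CP)}] = \sum_{(i_1, \ldots, i_N),\, (j_1, \ldots, j_N)} \E[\TP[i_1, \ldots, i_N] \TP[j_1, \ldots, j_N]]\, \TX[i_1, \ldots, i_N]\, \TY[j_1, \ldots, j_N] = \langle \TX, \TY \rangle.
\end{align*}
Thus $\a^T (\alpha^{(CP)}, \beta^{(CP)})^T \overset{\mathcal{D}}{\to} \mathcal{N}(0, \a^T \boldsymbol{\Sigma} \a)$ for every unit $\a$, and Cram\'er--Wold closes out Equation~\eqref{eq:eqn_cp_asy_norm}.

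The main subtlety will be to check that the dimensional condition in Theorem~\ref{thm:cp_elsh_uni_nor} transfers uniformly, applied to the constants associated with $\boldsymbol{\mathcal{Z}}$ rather than $\TX$ or $\TY$ individually. Inspecting Equation~\eqref{eq:eq260323_5}, the data-dependent quantities $\|\boldsymbol{\mathcal{Z}}\|_{\max}$ and $\sqrt{\E[\boldsymbol{\mathcal{Z}}[i_1, \ldots, i_N]^2]}$ enter only as finite positive constants in the rate bound, so the asymptotic regime is purely dimensional and the condition transfers to $\boldsymbol{\mathcal{Z}}$ for every fixed $\a$; the degenerate direction making $\|\boldsymbol{\mathcal{Z}}\|_F = 0$ produces a Dirac mass at $0$, consistent with the degenerate Gaussian $\mathcal{N}(0, \a^T \boldsymbol{\Sigma} \a)$. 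As a clean alternative, one could bypass Cram\'er--Wold and invoke the multivariate CLT in Theorem~\ref{thm:clt_grpah_vec} directly on the vector-valued summands $(X_{i_1, \ldots, i_N}, Y_{i_1, \ldots, i_N})^T$ (using the same dependency graph and maximal degree $M = \sum_n d_n - N$ as in the proof of Theorem~\ref{thm:cp_elsh_uni_nor}), delivering the bivariate convergence in a single stroke with essentially the same rate verification.
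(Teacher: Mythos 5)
Your proof is correct and follows essentially the same route as the paper: the paper's own argument applies its multivariate dependency-graph CLT (Theorem~\ref{thm:clt_grpah_vec}), which is itself a Cram\'er--Wold reduction to Theorem~\ref{thm:clt_grpah_var}, computes the covariance matrix entrywise, and re-verifies the Janson rate condition for the bivariate summands with $A=\sqrt{R}\sqrt{\|\TX\|_{max}^2+\|\TY\|_{max}^2}$ and $\sigma_d^2=\a^T\boldsymbol{\Sigma}\a$ --- exactly the ``clean alternative'' you mention at the end. Your primary route, recognizing $a_1\alpha^{(CP)}+a_2\beta^{(CP)}=\langle\TP, a_1\TX+a_2\TY\rangle$ so that Theorem~\ref{thm:cp_elsh_uni_nor} applies verbatim to $\boldsymbol{\mathcal{Z}}=a_1\TX+a_2\TY$, is a modest but genuine economy over the paper's presentation, and your handling of the cross-covariance and the degenerate direction is sound.
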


\begin{proof} 
For $\TX, \TY \in \R^{d_1 \times \cdots \times d_N}$, given
\begin{align}
    \alpha^{(CP)} &= \langle \TP, \TX \rangle \text{ and }  \beta^{(CP)} = \langle \TP, \TY \rangle. 
\end{align}
where $\TP \sim CP_{Rad}(R)$. We can rewrite $\alpha^{(CP)}$ and $\beta^{(CP)}$ as follows:
\begin{align*}
    \alpha^{(CP)} &= \sum_{i_1, \ldots, i_N} \TP[i_1, \ldots, i_N] \TX[i_1, \ldots, i_N] \\
    &= \sum_{i_1, \ldots, i_N} \left( \frac{1}{\sqrt{R}}\sum_{r=1}^{R} \MatA^{(1)}[i_1,r] \cdots \MatA^{(N)}[i_N, r]\right) \TX[i_1, \ldots, i_N].
\end{align*}
where $\MatA^{(n)} \in \R^{d_n \times R}$ with entires which takes values $\pm1$ with probabitiy $1/2$ (Definition~\ref{def:cp_tensor}) for $n \in [N]$. Similarly, 
\begin{align*}
    \beta^{(CP)} &= \sum_{i_1, \ldots, i_N} \TP[i_1, \ldots, i_N] \TY[i_1, \ldots, i_N] \\
    &= \sum_{i_1, \ldots, i_N} \left(\frac{1}{\sqrt{R}}\sum_{r=1}^{R} \MatA^{(1)}[i_1,r] \cdots \MatA^{(N)}[i_N, r]\right) \TY[i_1, \ldots, i_N].
\end{align*}
Let
\begin{align}
    \begin{bmatrix}
        \alpha^{(CP)} \\
        \beta^{(CP)}
    \end{bmatrix} &:= \sum_{i_1, \ldots, i_N} \bm{X}_{i_1, \ldots, i_N}
\end{align}
where $\bm{X}_{i_1, \ldots, i_N} = \begin{bmatrix}
        \TP[i_1, \ldots, i_N] \TX[i_1, \ldots, i_N], &
        \TP[i_1, \ldots, i_N] \TY[i_1, \ldots, i_N] \\
    \end{bmatrix}^T.$
We compute the expected value and covariance of $\begin{bmatrix}
        \alpha^{(CP)} &
        \beta^{(CP)}
    \end{bmatrix}^T$ as follows:

\begin{align*}
\E   \begin{bmatrix}
        \alpha^{(CP)} \\
        \beta^{(CP)}
    \end{bmatrix} &= \E \left(\sum_{i_1, \ldots, i_N} \bm{X}_{i_1, \ldots, i_N} \right).\\
    &= \sum_{i_1, \ldots, i_N} \begin{bmatrix}
       \E\left[ \TP[i_1, \ldots, i_N] \TX[i_1, \ldots, i_N] \right]\\
        \E \left[\TP[i_1, \ldots, i_N] \TY[i_1, \ldots, i_N] \right] \\
    \end{bmatrix}.\\
    &= \sum_{i_1, \ldots, i_N} \E\left[ \TP[i_1, \ldots, i_N]\right] \begin{bmatrix}
        \TX[i_1, \ldots, i_N]\\
         \TY[i_1, \ldots, i_N]  \\
    \end{bmatrix}.\\
    &= \sum_{i_1, \ldots, i_N} \E \left[\frac{1}{\sqrt{R}} \, \sum_{r=1}^R \MatA^{(1)}[i_1,r] \cdots \MatA^{(N)}[i_N, r] \right] \begin{bmatrix}
        \TX[i_1, \ldots, i_N]\\
         \TY[i_1, \ldots, i_N]  \\
    \end{bmatrix}.\\
    &= \sum_{i_1, \ldots, i_N} \frac{1}{\sqrt{R}} \sum_{r=1}^R \E \left[ \MatA^{(1)}[i_1,r] \cdots \MatA^{(N)}[i_N, r] \right] \begin{bmatrix}
        \TX[i_1, \ldots, i_N]\\
         \TY[i_1, \ldots, i_N]  \\
    \end{bmatrix}.\\
    &=  \begin{bmatrix}
        0\\
         0 
    \end{bmatrix} \qquad \left[\because\E\left[\prod_{n=1}^N \MatA^{(n)}[i_n,r]\right] = \prod_{n=1}^N \E\left[\MatA^{(n)}[i_n,r]\right] = 0 \right]. \numberthis \label{eq:eq240523_21}
\end{align*}
Let
\begin{align}
   \boldsymbol{\Sigma}_{d_1\cdots d_N} &:=  \Cov \left(\begin{bmatrix}
        \alpha^{(CP)}\\
        \beta^{(CP)}
    \end{bmatrix} \right) = \begin{bmatrix}        \Cov\left(\alpha^{(CP)},\alpha^{(CP)} \right) & \Cov\left(\alpha^{(CP)},\beta^{(CP)} \right)\\       \Cov\left(\alpha^{(CP)},\beta^{(CP)} \right) & \Cov\left(\beta^{(CP)},\beta^{(CP)} \right)
    \end{bmatrix}. \numberthis \label{eq:eq300323_1}
\end{align}
We compute each element of the above covariance matric one by one as follows:
\begin{align*}
    &\Cov \left( \alpha^{(CP)}, \alpha^{(CP)} \right)\\
    &= \Cov \left( \sum_{i_1, \ldots, i_N} \TP[i_1, \ldots, i_N] \TX[i_1, \ldots, i_N], \sum_{i_1, \ldots, i_N} \TP[i_1, \ldots, i_N] \TX[i_1, \ldots, i_N] \right).\\
    & = \E \left[\left(\sum_{i_1, \ldots, i_N} \TP[i_1, \ldots, i_N] \TX[i_1, \ldots, i_N]\right)^2 \right] - \E\left[\sum_{i_1, \ldots, i_N} \TP[i_1, \ldots, i_N] \TX[i_1, \ldots, i_N] \right]^2.\\
    & = \E\bigg[ \sum_{i_1, \ldots, i_N} \TP[i_1, \ldots, i_N]^2 \TX[i_1, \ldots, i_N]^2  \notag\\
    &\hspace{1.5cm} - \sum_{(i_1, \ldots, i_N) \neq (i_1', \ldots, i_N')} \TP[i_1,\ldots, i_N] \TP[i_1', \ldots, i_N'] \TX[i_1, \ldots, i_N] \TX[i_1', \ldots, i_N'] \bigg] - 0.\\
    &=  \sum_{i_1, \ldots, i_N} \E \left[ \TP[i_1, \ldots, i_N]^2\right] \TX[i_1, \ldots, i_N]^2 \\
    &\hspace{1.5cm} - \sum_{(i_1, \ldots, i_N) \neq (i_1', \ldots, i_N')} \E \left[\TP[i_1,\ldots, i_N] \TP[i_1', \ldots, i_N'] \right] \TX[i_1, \ldots, i_N] \TX[i_1', \ldots, i_N'].\\
    &= \sum_{i_1, \ldots, i_N} \E \left[ \left( \frac{1}{\sqrt{R}}\sum_{r=1}^R \MatA^{(1)}[i_1,r] \cdots \MatA^{(N)}[i_N,r] \right)^2 \right] \TX[i_1, \ldots, i_N]^2 - 0.\\ 
    &= \sum_{i_1, \ldots, i_N} \frac{\TX[i_1, \ldots, i_N]^2}{R}  \E \bigg[\sum_{r=1}^R \MatA^{(1)}[i_1,r]^2 \cdots \MatA^{(N)}[i_N,r]^2 \\
    & \hspace{3.6cm} + \sum_{r \neq r'}   \MatA^{(1)}[i_1,r] \cdots \MatA^{(N)}[i_N,r]  \MatA^{(1)}[i_1,r'] \cdots \MatA^{(N)}[i_N,r'] \bigg].\\
    &= \sum_{i_1, \ldots, i_N} \frac{\TX[i_1, \ldots, i_N]^2}{R} \bigg( \sum_{r=1}^R \E \left[\MatA^{(1)}[i_1,r]^2 \right] \cdots \E \left[\MatA^{(1)}[i_1,r]^2 \right] \\
    & \hspace{3.2cm} + \sum_{r\neq r'} \E\left[\MatA^{(1)}[i_1,r] \MatA^{(1)}[i_1,r']\right] \cdots \E \left[\MatA^{(N)}[i_N,r] \MatA^{(N)}[i_N,r'] \right] \bigg).\\
    &=  \sum_{i_1, \ldots, i_N} \frac{\TX[i_1, \ldots, i_N]^2}{R} \bigg( \sum_{r=1}^R 1  + 0 \bigg) = \sum_{i_1, \ldots, i_N} \frac{1}{R} \times R \times \TX[i_1, \ldots, i_N]^2.\\
    & = \| \TX \|_{F}^2. \numberthis \label{eq:eq300323_2}
 \end{align*}
Similarly, we can  compute the following
\begin{align}
     \Cov \left( \beta^{(CP)}, \beta^{(CP)} \right) & = \| \TY \|_{F}^2. \numberthis \label{eq:eq300323_3}\\
     \Cov \left( \alpha^{(CP)}, \beta^{(CP)} \right) & =  \langle \TX, \TY \rangle. \numberthis \label{eq:eq300323_4}
\end{align}
From Equations \eqref{eq:eq300323_1}, \eqref{eq:eq300323_2}, \eqref{eq:eq300323_3} and \eqref{eq:eq300323_4}, we have
\begin{align}
    \boldsymbol{\Sigma}_{d_1\cdots d_N} &= \begin{bmatrix}
        \| \TX \|_{F}^2 & \langle \TX, \TY \rangle\\
        \langle \TX, \TY \rangle & \| \TY \|_{F}^2
    \end{bmatrix}. \numberthis \label{eq:eq300323_5}\\
\end{align}
For any unit vector $\a \in \R^2$, define a random variable 
$$S_d := \sum_{i_1, \ldots, i_N}\a^T \X_{i_1, \ldots, i_N} = \a^T\begin{bmatrix}
    \alpha^{(CP)} \\
    \beta^{(CP)}
    \end{bmatrix}. $$
The expected value and variance of $S_d$ are
\begin{align}
    \E[S_d] := \a^T \E\begin{bmatrix}
    \alpha^{(CP)} \\
    \beta^{(CP)}
    \end{bmatrix} = 0, \numberthis \label{eq:eq101024_1}
\end{align}
and 
\begin{align}
    \sigma_d^2 =\Var(S_d):= \a^T \Cov\left(\begin{bmatrix}
   \alpha^{(CP)} \\
    \beta^{(CP)}
    \end{bmatrix} \right) \a = a_1^2 \|\TX\|_F^2  + a_2^2 \|\TY\|_F^2  +2 a_1a_2 \langle \TX,\TY\rangle .
\end{align}


To complete the proof, we need to prove that for some value of $\alpha$, Equation~\eqref{eq:eq051024_0} of Theorem~\ref{thm:clt_grpah_vec} holds true. We recall it as follows: 
\begin{align}
    \left(\frac{d}{M}\right)^{\frac{1}{\alpha}}  \frac{M \, A}{\sigma_d} \rightarrow 0 \text{ as } d \rightarrow \infty
\end{align}
where $d = \prod_{n=1}^N d_n$, $\alpha$ is an integer and $M$ denotes the maximum degree of the dependency graph generated by the random variables $\bm{X}_{i_1, \ldots, i_N}$ and is equal to $\sum_{n=1}^{N} d_{n} - N$. $A$ is an upper bound on $\left \| \bm{X}_{i_1, \ldots, i_N}\right \|_2$, we compute it as follows:
\begin{align}
\left \| \bm{X}_{i_1, \ldots, i_N}\right \|_2 &= | \TP[i_1, \ldots, i_N]| \sqrt{\TX[i_1, \ldots, i_N]^2 + \TY[i_1, \ldots,i_N]^2}.\\
& \leq \left | \frac{1}{\sqrt{R}} \sum_{r=1}^R \MatA^{(1)}[i_1, r] \cdots \MatA^{(N)}[i_N,r]\right | \, \sqrt{\| \TX\|_{max}^2 + \| \TY\|_{max}^2}.\\
&\leq \sqrt{R} \sqrt{\| \TX\|_{max}^2 + \| \TY\|_{max}^2}. \numberthis \label{eq:eq300323_7}
\end{align}
where $\| \TX\|_{max} = \max_{i_1, \ldots, i_n} |\TX[i_1, \ldots, i_N]|$ and $\| \TY\|_{max} = \max_{i_1, \ldots, i_n} |\TY[i_1, \ldots, i_N]|$. 
For ease of calculation, we assume $d_1 = \ldots = d_N = d^{1/N}$. Thus, we have 
\begin{align*}
    & \left(\frac{d}{M}\right)^{\frac{1}{\alpha}} \, \frac{M \, A}{\sigma_d} \notag\\
    &= \left(\frac{d}{\sum_{n=1}^N d_n - N} \right)^{\frac{1}{\alpha}} \times \left( \sum_{n=1}^N d_n - N \right) \times \frac{\sqrt{R}   \sqrt{\| \TX\|_{max}^2 + \| \TY\|_{max}^2} }{\sqrt{a_1^2 \|\TX\|_F^2  + a_2^2 \|\TY\|_F^2  +2 a_1a_2 \langle \TX,\TY\rangle}}.\\
    & = \left(\frac{d}{N d^{\frac{1}{N}} - N} \right)^{\frac{1}{\alpha}} \times \left( N d^{\frac{1}{N}} - N \right) \times \sqrt{R} \times\frac{\sqrt{R}   \sqrt{\| \TX\|_{max}^2 + \| \TY\|_{max}^2} }{\sqrt{a_1^2 \|\TX\|_F^2  + a_2^2 \|\TY\|_F^2  +2 a_1a_2 \langle \TX,\TY\rangle}}.\\
    &= \sqrt{R} \times d^{\frac{1}{\alpha}} \times \left( N d^\frac{1}{N} - N \right)^{1 - \frac{1}{\alpha}} \notag\\
    &\times   \text{\resizebox{0.98\textwidth}{!}{$\sqrt{\frac{  \| \TX\|_{max}^2 + \| \TY\|_{max}^2 }{a_1^2\sum_{i_1, \ldots, i_N} \TX[i_1, \ldots, i_N]^2  + a_2^2 \sum_{i_1, \ldots, i_N} \TY[i_1, \ldots, i_N]^2  +2 a_1a_2 \sum_{i_1, \ldots, i_N}  \TX[i_1, \ldots, i_N] \TY[i_1, \ldots, i_N]}}$ }}. \notag\\
    &= \sqrt{R} \times d^{\frac{1}{\alpha}} \times \left( N d^\frac{1}{N} - N \right)^{1 - \frac{1}{\alpha}} \notag\\
    &\times  \frac{1}{\sqrt{d}} \text{\resizebox{0.95\textwidth}{!}{$\sqrt{\frac{  \| \TX\|_{max}^2 + \| \TY\|_{max}^2 }{a_1^2\sum_{i_1, \ldots, i_N} \frac{\TX[i_1, \ldots, i_N]^2}{d}  + a_2^2 \sum_{i_1, \ldots, i_N} \frac{\TY[i_1, \ldots, i_N]^2}{d}  +2 a_1a_2 \sum_{i_1, \ldots, i_N}  \frac{\TX[i_1, \ldots, i_N] \TY[i_1, \ldots, i_N]}{d}}}$ }}. \notag\\
    &= \frac{\sqrt{R} d^{\frac{1}{\alpha}}}{\sqrt{d}} \left( N d^\frac{1}{N} - N \right)^{1 - \frac{1}{\alpha}} \notag\\
    & \quad \times \text{\resizebox{0.92\textwidth}{!}{$\sqrt{\frac{\| \TX\|_{max}^2 + \| \TY\|_{max}^2}{a_1^2 \E\left[\TX[i_1, \ldots, i_N]^2\right]  + a_2^2\E\left[\TY[i_1, \ldots, i_N]^2\right]+ 2a_1 a_2 \E \left[\TX[i_1, \ldots, i_N] \TY[i_1, \ldots, i_N]\right] }}$}}.  \notag\\
    & \leq\text{\resizebox{0.98\textwidth}{!}{$ \frac{\sqrt{R} N^{1-\frac{1}{\alpha}}}{d^{\frac{\alpha N - 2N -2\alpha + 2}{2 \alpha N}}} \sqrt{\frac{\| \TX\|_{max}^2 + \| \TY\|_{max}^2}{a_1^2 \E\left[\TX[i_1, \ldots, i_N]^2\right]  + a_2^2\E\left[\TY[i_1, \ldots, i_N]^2\right]+ 2a_1 a_2 \E \left[\TX[i_1, \ldots, i_N] \TY[i_1, \ldots, i_N]\right] }}$}}.\\
    & \rightarrow 0 \text{ as } d \rightarrow \infty \text{ for } \alpha > \frac{2(N-1)}{(N-2)} \text{ and }  \sqrt{R} \,  N^{\left(1-\frac{1}{\alpha}\right)} = o\left(d^{\left(\frac{\alpha N-2N-2\alpha+2}{2 \alpha N}\right)} \right). \numberthis \label{eq:eq300323_10}
\end{align*}
Equation~\eqref{eq:eq300323_10} hold true for $\alpha > \frac{2(N-1)}{(N-2)}$ \text{ and }  $\sqrt{R}\,  N^{\left(1-\frac{1}{\alpha}\right)} = o\left(d^{\left(\frac{ \alpha N-2N-2 \alpha + 2}{2 \alpha N}\right)} \right)$, provided $0 < \E \left[\TX[i_1,\ldots, i_N]^2 \right], \E \left[\TY[i_1,\ldots, i_N]^2 \right], \E \left[\TX[i_1,\ldots, i_N] \TY[i_1, \ldots, i_N]\right]< \infty$ and $\|\TX\|_{max}$ and $\|\TY\|_{max}$ are finite.\\
Choosing $\alpha = 5$ in Equation~\eqref{eq:eq300323_10}, for $d \rightarrow \infty$, we have
\begin{align*}
     \left(\frac{d}{M}\right)^{\frac{1}{\alpha}} \, M \, A\, \|\boldsymbol{\Sigma_{d_1 \cdots d_N}}^{-\frac{1}{2}}\| & \rightarrow 0 \text{ for } \sqrt{R} \, N^{\left(\frac{4}{5}\right)} = o\left(d^{\left(\frac{3N-8}{10N}\right)} \right). \numberthis \label{eq:eq300323_11}
\end{align*}
Thus, from Theorem~\ref{thm:clt_grpah_vec}, we have
\begin{align*}
   &\boldsymbol{\Sigma}_{d_1\cdots d_N}^{-\frac{1}{2}} \left(\begin{bmatrix}
        \alpha^{(CP)} \\
        \beta^{(CP)}
    \end{bmatrix} - \E \left[\begin{bmatrix}
        \alpha^{(CP)} \\
        \beta^{(CP)}
    \end{bmatrix} \right]\right) \overset{\mathcal{D}}{\to} \mathcal{N}(\bm{0},\bm{I}).\\
  &\implies \begin{bmatrix}
        \alpha^{(CP)} \\
        \beta^{(CP)}
    \end{bmatrix} - \begin{bmatrix}
        0\\
        0
    \end{bmatrix} = \begin{bmatrix}
        \alpha^{(CP)} \\
        \beta^{(CP)}
    \end{bmatrix} \overset{\mathcal{D}}{\to} \mathcal{N}\left(\bm{0},\boldsymbol{\Sigma}_{d_1\cdots d_N} \right).\\
    & \implies \begin{bmatrix}
        \alpha^{(CP)} \\
        \beta^{(CP)}
    \end{bmatrix} \overset{\mathcal{D}}{\to} \mathcal{N}\left(\begin{bmatrix}
        0 \\
        0
    \end{bmatrix},\begin{bmatrix}
        \| \TX \|_{F}^2 & \langle \TX, \TY \rangle\\
        \langle \TX, \TY \rangle & \| \TY \|_{F}^2
    \end{bmatrix}\right). \numberthis \label{eq:eq270323_15}
\end{align*}
Equation~\eqref{eq:eq270323_15} completes a proof of the theorem.
\end{proof}
}


Finally, building on the results mentioned in Theorems~\ref{thm:cp_srp_bivariate}, the following theorem states that the collision probability of two input tensors using CP-SRP is directly proportional to their pairwise cosine similarity, concluding that CP-SRP is a valid LSH for cosine similarity.


\begin{theorem} \label{thm:ubiased_cssrp}
Let $\TX,\TY \in \R^{d_1 \times \cdots \times d_N}$ and $h(\TX)$, $h(\TY)$ be their respective hashcodes obtained using our proposal \cpsrp ~(Definition~\ref{def:CP_SRP}). Then for $\prod_{n=1}^N d_n \rightarrow \infty$ and $\sqrt{R} N^{\left(\frac{4}{5}\right)} = o \left( \left(\prod_{n=1}^N d_n \right)^{\left(\frac{3N -8}{10N} \right)} \right)$, the following holds true
\begin{align}
    &\Pr\left[ h(\TX) = h(\TY) \right]  = 1-\frac{\theta_{(\TX,\TY)}}{\pi},  \numberthis \label{eq:cp_srp_col_prob}
\end{align}
where  
$\theta_{(\TX,\TY)}=\cos^{-1}\left(\frac{\langle \TX, \TY \rangle }{\|\TX\|_{F} \|\TY\|_F}\right)$ denotes the angular similarity between vector $\TX$ and $\TY$.
\end{theorem}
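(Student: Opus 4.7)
The plan is to reduce Theorem~\ref{thm:ubiased_cssrp} to the classical SRP collision probability guarantee (Equation~\eqref{eq:eq191022}) by invoking Theorem~\ref{thm:cp_srp_bivariate}. The hashcode collision event $h(\TX) = h(\TY)$ is, by Definition~\ref{def:CP_SRP}, identical to the event $\mathrm{sgn}(\alpha^{(CP)}) = \mathrm{sgn}(\beta^{(CP)})$. Therefore, once we know the joint distribution of $(\alpha^{(CP)}, \beta^{(CP)})$, computing the collision probability becomes a standard calculation about sign agreement of a centered bivariate Gaussian.

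First, I would invoke Theorem~\ref{thm:cp_srp_bivariate} under its stated hypotheses ($\sqrt{R}\, N^{4/5} = o\bigl((\prod_n d_n)^{(3N-8)/(10N)}\bigr)$ and $\prod_n d_n \to \infty$) to conclude that asymptotically
\begin{align*}
\begin{bmatrix} \alpha^{(CP)} \\ \beta^{(CP)} \end{bmatrix}
\sim \mathcal{N}\!\left(\begin{bmatrix} 0 \\ 0 \end{bmatrix},\,
\begin{bmatrix} \|\TX\|_F^2 & \langle \TX, \TY\rangle \\ \langle \TX,\TY\rangle & \|\TY\|_F^2\end{bmatrix}\right).
\end{align*}
This joint distribution coincides exactly with the joint distribution of $(\langle \mathbf{r}, \mathrm{vec}(\TX)\rangle, \langle \mathbf{r}, \mathrm{vec}(\TY)\rangle)$ where $\mathbf{r}$ is a standard Gaussian vector of the appropriate dimension. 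Consequently, the sign-collision event for $(\alpha^{(CP)}, \beta^{(CP)})$ has the same probability as the sign-collision event for the vector-SRP projections of $\mathrm{vec}(\TX)$ and $\mathrm{vec}(\TY)$.

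Next, I would apply the Goemans--Williamson identity already cited as Equation~\eqref{eq:eq191022}: for any centered bivariate Gaussian $(U,V)$, $\Pr[\mathrm{sgn}(U)=\mathrm{sgn}(V)] = 1 - \theta/\pi$ where $\theta = \cos^{-1}\!\bigl(\mathrm{Cov}(U,V)/\sqrt{\mathrm{Var}(U)\mathrm{Var}(V)}\bigr)$. Substituting from the covariance matrix above gives correlation $\langle \TX, \TY\rangle / (\|\TX\|_F \|\TY\|_F)$, whose inverse cosine is exactly $\theta_{(\TX,\TY)}$ as defined in the statement. This yields the desired formula $\Pr[h(\TX)=h(\TY)] = 1 - \theta_{(\TX,\TY)}/\pi$.

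The proof is essentially a one-line consequence of Theorem~\ref{thm:cp_srp_bivariate}, so there is no serious obstacle remaining; the only subtlety is cosmetic, namely that the equality holds asymptotically (convergence in distribution) rather than exactly for finite tensor sizes. Since the sign function is continuous at every point of the limiting distribution except on a measure-zero set, the continuous mapping theorem justifies passing the limit inside the probability, so that the collision probability converges to $1 - \theta_{(\TX,\TY)}/\pi$. I would state this briefly and then conclude.
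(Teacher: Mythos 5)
Your proposal is correct and follows essentially the same route as the paper: invoke Theorem~\ref{thm:cp_srp_bivariate} to get asymptotic bivariate normality of $(\alpha^{(CP)},\beta^{(CP)})$ with covariance matrix $\begin{pmatrix}\|\TX\|_F^2 & \langle\TX,\TY\rangle\\ \langle\TX,\TY\rangle & \|\TY\|_F^2\end{pmatrix}$, then apply the standard sign-agreement formula for a centered bivariate Gaussian (the paper cites Lemma~6 of Li \emph{et al.} for this step, which is the same fact as the Goemans--Williamson identity you use). Your explicit appeal to the continuous mapping theorem to justify passing the limit inside the probability is a slightly more careful treatment of a point the paper leaves implicit, but it is not a different argument.
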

\begin{proof} 
 From Definition~\ref{def:CP_SRP}, we have
 \begin{align}
     h(\TX) = sgn\left(\langle \TP, \TX \rangle \right) \text{ and }  h(\TY) = sgn \left(\langle \TP, \TY  \rangle \right).
 \end{align}
 where $\TP \sim CP_{Rad}(R)$.
 From Theorem~\ref{thm:cp_srp_bivariate},  $\begin{bmatrix}
\langle \TP, \TX \rangle & \langle \TP, \TY \rangle
 \end{bmatrix}^{T}$ follows asymptotic bi-variate normal distribution for $\sqrt{R} N^{\left(\frac{4}{5}\right)} = o \left( \left(\prod_{n=1}^N d_n \right)^{\left(\frac{3N -8}{10N} \right)} \right)$. Hence,  a trivial consequence of  the asymptotic normality yields (for detail refer to Lemma 6 of Li \textit{et al}~\cite{li2006very})
 \begin{align}
      \Pr[h(\TX) = h(\TY)]  \rightarrow  1-\frac{\theta_{(\TX,\TY)}}{\pi}. \numberthis
 \end{align}
\end{proof}

Let $S= \frac{\langle \TX, \TY \rangle}{\|\TX\|_F \|\TY\|_F}$ denotes the cosine similarity between $\TX$ and $\TY$. From Theorem~\ref{thm:ubiased_cssrp}, it is evident that the probability of a collision decreases monotonically with $S$. Hence,  due to Definition~\ref{def:LSH}, our proposal CP-SRP ~defined in Definition~\ref{def:CP_SRP} is $(R_1,R_2, P_{1}, P_2)$-sensitive for $R_1 = S$, $R_2 = cS$, $P_{1} = (1-\cos^{-1}(S)/\pi)$  and $P_{2} = (1-\cos^{-1}(cS)/\pi)$.

\begin{remark}
Compared to the naive method to compute LSH for cosine similarity for tensor data using  SRP~\cite{charikar2002similarity}, the evaluation of our proposal CP-SRP (Definition~\ref{def:CP_SRP})  has a lower space complexity. Note that the space complexity in this context is the space required to store the projection tensor. For an $N$-order tensor with each mode dimension equal to $d$, the time and space complexity of the naive method to compute a hashcode is $O(d^N)$. In contrast, our proposal \cpsrp ~only requires $O(NdR)$ space~\cite{rakhshan2020tensorized}.
\end{remark}

\begin{remark}
      Our proposal CP-SRP offers a lower time complexity as well compared to the naive method when the input tensor is given in CP  or TT decomposition format~\cite{rakhshan2020tensorized}. If the input tensor is given as rank $\hat{R}$ CP decomposition tensor of order $N$, then  our proposal requires only $O(Nd \max \{R, \hat{R}\}^2)$ time to generate a hashcode (Remark~\ref{rem:remark_cp}). And if  the input tensor is given as rank $\hat{R}$ TT decomposition tensor then our proposal takes $O(Nd\max\{R, \hat{R}\}^3)$ time to generate a hashcode (Remark~\ref{rem:remark_cp}), in contrast to  $O(d^N)$ time required by the naive method.
\end{remark}

\subsubsection{TT-SRP}

In the following, we introduce our proposal TT-SRP, which involves computing the hashcode by projecting the input tensor onto a TT-Rademacher Distributed Tensor (Definition~\ref{def:tt_tensor}), followed by discretizing the resultant inner product by considering its sign.

\begin{definition} \textbf{(TT Sign Random Projection (TT-SRP))} \label{def:TT_SRP}
Let $\TX \in \R^{d_1 \times \cdots \times d_N}$ and $\TT \sim TT_{Rad}(R)$ is TT-Rademacher distributed tensor (Definition~\ref{def:tt_tensor}). We denote our proposal TT-SRP by a hash function $\Tilde{h}(\cdot)$ and define it as follows:
\begin{align}
    \Tilde{h}(\TX) &= sgn(\langle \TT, \TX \rangle)
\end{align}
where  $sgn(\cdot)$ is a sign function, $sgn(\langle \TT, \TX \rangle)$ is equal to  $1$ if $\langle \TT, \TX \rangle > 0$ and $0$ otherwise.
\end{definition}
The following theorem states that the vector generated by concatenating the projections  of input tensors $\TX, \TY \in \R^{d_1 \times \cdots \times d_N}$ on a tensor $\TT \sim TT_{Rad}(R)$ asymptotically follows a bivariate normal distribution.
{\color{black}
\begin{theorem} \label{thm:tt_srp_bivariate}
Let $\TX, \TY \in \R^{d_1 \times \cdots \times d_N}$ and $\alpha^{(TT)},~\beta^{(TT)}$ be their corresponding  projections on tensor $\TT \sim TT_{Rad}(R)$. If $\E \left[\TX[i_1,\ldots, i_N]^2 \right], \E \left[\TY[i_1,\ldots, i_N]^2 \right], \E \left[\TX[i_1,\ldots, i_N] \TY[i_1, \ldots, i_N]\right]$ are finite. Then, for $ \sqrt{R^{N-1}} N^{\left(\frac{4}{5}\right)} = o\left( \left(\prod_{n=1}^N  d_n \right)^{\frac{3N-8}{10N}}\right)$ and $\prod_{n=1}^N  d_n \rightarrow \infty$, we have
\begin{align}
    &
    \begin{bmatrix}
    \alpha^{(TT)}\\
    \beta^{(TT)}
    \end{bmatrix} \rightarrow \mathcal{N} \left(\begin{bmatrix}
    0\\
    0
    \end{bmatrix},\begin{bmatrix}
    ||\TX||_{F}^2 & \langle\TX, \TY \rangle\\
    \langle\TY, \TX \rangle & ||\TY||_{F}^2
    \end{bmatrix}  \right).   \label{eq:eqn_tt_asy_norm} 
\end{align}
\end{theorem}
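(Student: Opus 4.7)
The plan is to follow the structure of the proof of Theorem~\ref{thm:cp_srp_bivariate}, replacing the CP projection structure with the TT projection structure, and invoking the multivariate central limit theorem (Theorem~\ref{thm:clt_grpah_vec}). First, using Definition~\ref{def:tt_tensor}, I would write $\alpha^{(TT)} = \langle \TT,\TX\rangle$ and $\beta^{(TT)} = \langle \TT,\TY\rangle$ as sums over indices $(i_1,\ldots,i_N)$, where each term involves the matrix product $\frac{1}{\sqrt{R^{N-1}}}\TG^{(1)}[:,i_1,:]\TG^{(2)}[:,i_2,:]\cdots\TG^{(N)}[:,i_N,:]$ multiplied by $\TX[i_1,\ldots,i_N]$ (resp.\ $\TY[i_1,\ldots,i_N]$). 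Define the random vector $\bm{Y}_{i_1,\ldots,i_N} := [\,\TT[i_1,\ldots,i_N]\TX[i_1,\ldots,i_N],\; \TT[i_1,\ldots,i_N]\TY[i_1,\ldots,i_N]\,]^T$ and write $[\alpha^{(TT)},\beta^{(TT)}]^T = \sum_{i_1,\ldots,i_N}\bm{Y}_{i_1,\ldots,i_N}$.

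Next, I would compute the first two moments. Since entries of every core $\TG^{(n)}$ are independent Rademachers, $\E[\TT[i_1,\ldots,i_N]] = 0$ (same reasoning as in Equation~\eqref{eq:eq240523_13}), so the mean vector is zero. For the covariance matrix, the diagonal entries $\Cov(\alpha^{(TT)},\alpha^{(TT)}) = \|\TX\|_F^2$ and $\Cov(\beta^{(TT)},\beta^{(TT)}) = \|\TY\|_F^2$ follow immediately from the variance computation already carried out in Theorem~\ref{thm:tt_elsh_uni_nor} (Equations~\eqref{eq:eq270323}--\eqref{eq:eq270323_5}). For the off-diagonal $\Cov(\alpha^{(TT)},\beta^{(TT)})$, I would expand the product of sums, separate the diagonal terms $(i_1,\ldots,i_N) = (i_1',\ldots,i_N')$ from the off-diagonal, and apply the Kronecker-product factorization $\E[\TG^{(n)}[:,i_n,:]\otimes\TG^{(n)}[:,i_n,:]] = \mathrm{vec}(\MatI{R})$ for the diagonal indices (giving a contribution of $\TX[i_1,\ldots,i_N]\TY[i_1,\ldots,i_N]$ per term), while the cross-index terms vanish by the same independence argument used in Equation~\eqref{eq:eq270323_2}. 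Summing over the diagonal yields $\langle\TX,\TY\rangle$.

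To invoke Theorem~\ref{thm:clt_grpah_vec}, I would construct the same dependency graph as in Theorem~\ref{thm:tt_elsh_uni_nor}: $\bm{Y}_{i_1,\ldots,i_N}$ and $\bm{Y}_{i_1',\ldots,i_N'}$ are dependent iff $i_n = i_n'$ for at least one $n$, so the maximal degree is $M = \sum_{n=1}^N d_n - N$. A bound on $\|\bm{Y}_{i_1,\ldots,i_N}\|_2$ follows from $|\TT[i_1,\ldots,i_N]| \le \sqrt{R^{N-1}}$ (obtained by Cauchy--Schwarz on the matrix product, as in the $|Y_{i_1,\ldots,i_N}|$ bound of Theorem~\ref{thm:tt_elsh_uni_nor}), giving $A = \sqrt{R^{N-1}}\sqrt{\|\TX\|_{max}^2 + \|\TY\|_{max}^2}$. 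For an arbitrary unit vector $\a = (a_1,a_2)\in\R^2$, the variance of $\a^T\sum_{i_1,\ldots,i_N}\bm{Y}_{i_1,\ldots,i_N}$ equals $a_1^2\|\TX\|_F^2 + a_2^2\|\TY\|_F^2 + 2a_1a_2\langle\TX,\TY\rangle$, which is positive under the nondegeneracy assumption. Then the same computation as in Equations~\eqref{eq:eq300323_10}--\eqref{eq:eq300323_11} (with $\sqrt{R}$ replaced by $\sqrt{R^{N-1}}$) shows that for $\alpha=5$ and the rate $\sqrt{R^{N-1}}N^{4/5} = o(d^{(3N-8)/(10N)})$, the condition $(d/M)^{1/\alpha}MA/\sigma_d \to 0$ is satisfied. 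Theorem~\ref{thm:clt_grpah_vec} then yields the claimed bivariate normal limit.

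The main obstacle will be the off-diagonal covariance computation for $\Cov(\alpha^{(TT)},\beta^{(TT)})$, which requires the Kronecker-factorization trick used in Equation~\eqref{eq:eq270323} and careful bookkeeping of which index patterns contribute non-zero expectations; unlike the CP case (where a single sum over $r\in[R]$ is present), here each of the $N$ core tensors contributes an independent $R\times R$ structure, and one must verify that the cross-terms vanish and that the diagonal terms telescope to $\langle\TX,\TY\rangle$ with the $\frac{1}{R^{N-1}}$ normalization cancelling a factor of $R^{N-1}$ from the Kronecker-product expectations. Once this covariance is established, the rest of the argument mirrors the CP case verbatim.
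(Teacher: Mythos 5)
Your proposal matches the paper's own proof essentially step for step: the same decomposition into the 2-vectors $\bm{Y}_{i_1,\ldots,i_N}$, the same zero-mean and covariance computations via the Kronecker-product factorization (with the $R^{N-1}$ normalization cancelling exactly as you anticipate), the same dependency graph with $M=\sum_{n=1}^N d_n - N$ and bound $A=\sqrt{R^{N-1}}\sqrt{\|\TX\|_{max}^2+\|\TY\|_{max}^2}$, and the same invocation of Theorem~\ref{thm:clt_grpah_vec} with $\alpha=5$. The argument is correct and there is no substantive difference from the paper's proof.
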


\begin{proof} 
For $\TX, \TY \in \R^{d_1 \times \cdots \times d_N}$, given
\begin{align}
    \alpha^{(TT)} = \langle \TT, \TX \rangle, \text{ and }    \beta^{(TT)} = \langle \TT, \TY \rangle
\end{align}
where $\TT \sim TT_{Rad}(R)$. We can rewrite $\alpha_{(TT)}$ and $\beta^{(TT)}$ as follows:
\begin{align*}
    \alpha^{(TT)} &= \sum_{i_1, \ldots, i_N} \TT[i_1, \ldots, i_N] \TX[i_1, \ldots, i_N]\\
    &= \sum_{i_1, \ldots, i_N} \frac{1}{\sqrt{R^{N-1}}}\left(\TG^{(1)}[:,i_1,:] \cdots \TG^{(N)}[:,i_N, :]\right) \TX[i_1, \ldots, i_N].
\end{align*}
 where $\TG^{(1)} \in \R^{1 \times d_{1} \times R}, \TG^{(2)} \in \R^{R \times d_{2} \times R}, \ldots, \TG^{(N-1)} \in \R^{R \times d_{N-1} \times R}, \TG^{(N)} \in \R^{R \times d_{N} \times 1}$ and their entries are iid samples from Rademacher distribution.
Similarly, 
\begin{align*}
    \beta^{(TT)} &= \sum_{i_1, \ldots, i_N} \TT[i_1, \ldots, i_N] \TY[i_1, \ldots, i_N]\\
    &= \sum_{i_1, \ldots, i_N}  \frac{1}{\sqrt{R^{N-1}}} \left(\TG^{(1)}[:,i_1,:] \cdots \TG^{(N)}[:,i_N, :]\right) \TY[i_1, \ldots, i_N].
\end{align*}
Let
\begin{align}
    \begin{bmatrix}
        \alpha^{(TT)} \\
        \beta^{(TT)}
    \end{bmatrix} &= \sum_{i_1, \ldots, i_N} \bm{Y}_{i_1, \ldots, i_N}
\end{align}
where $\bm{Y}_{i_1, \ldots, i_N} = \begin{bmatrix}
        \TT[i_1, \ldots, i_N] \TX[i_1, \ldots, i_N], &
        \TT[i_1, \ldots, i_N] \TY[i_1, \ldots, i_N] \\
    \end{bmatrix}^T.$
We compute the expected value and covariance of $\begin{bmatrix}
        \alpha^{(TT)} &
        \beta^{(TT)}
    \end{bmatrix}^T$ as follows:
\begin{align*}
\E   \begin{bmatrix}
        \alpha^{(TT)} \\
        \beta^{(TT)}
    \end{bmatrix} &= \E \left(\sum_{i_1, \ldots, i_N} \bm{Y}_{i_1, \ldots, i_N} \right).\\
    &= \sum_{i_1, \ldots, i_N} \begin{bmatrix}
       \E\left[ \TT[i_1, \ldots, i_N] \TX[i_1, \ldots, i_N] \right]\\
        \E \left[\TT[i_1, \ldots, i_N] \TY[i_1, \ldots, i_N] \right] \\
    \end{bmatrix}.\\
    &= \sum_{i_1, \ldots, i_N} \E\left[ \TT[i_1, \ldots, i_N]\right] \begin{bmatrix}
        \TX[i_1, \ldots, i_N]\\
         \TY[i_1, \ldots, i_N]  \\
    \end{bmatrix}.\\
    &= \sum_{i_1, \ldots, i_N} \frac{1}{\sqrt{R^{N-1}}} \E \left[ \TG^{(1)}[:,i_1,:] \cdots \TG^{(N)}[:,i_N, :] \right] \begin{bmatrix}
        \TX[i_1, \ldots, i_N]\\
         \TY[i_1, \ldots, i_N]  \\
    \end{bmatrix}.\\
    &=  \begin{bmatrix}
        0\\
         0 
    \end{bmatrix}. \numberthis \label{eq:eq240523_31}
\end{align*}
{
Let
\begin{align*}
   \boldsymbol{\Sigma}_{d_1\cdots d_N} &:=  \Cov \left(\begin{bmatrix}
        \alpha^{(TT)}\\
        \beta^{(TT)}
    \end{bmatrix} \right) = \begin{bmatrix}        \Cov\left(\alpha^{(TT)},\alpha^{(TT)} \right) & \Cov\left(\alpha^{(TT)},\beta^{(TT)} \right)\\       \Cov\left(\alpha^{(TT)},\beta^{(TT)} \right) & \Cov\left(\beta^{(TT)},\beta^{(TT)} \right)
    \end{bmatrix}. \numberthis \label{eq:eq300323_1_tt}
\end{align*}
We compute each element of the covariance matrix as follows:
\begin{align*}
    &\Cov \left( \alpha^{(TT)}, \alpha^{(TT)} \right)\\
    &= \Cov \left( \sum_{i_1, \ldots, i_N} \TT[i_1, \ldots, i_N] \TX[i_1, \ldots, i_N], \sum_{i_1, \ldots, i_N} \TT[i_1, \ldots, i_N] \TX[i_1, \ldots, i_N] \right).\\
    & = \E \left[\left(\sum_{i_1, \ldots, i_N} \TT[i_1, \ldots, i_N] \TX[i_1, \ldots, i_N]\right)^2 \right] - \E\left[\sum_{i_1, \ldots, i_N} \TT[i_1, \ldots, i_N] \TX[i_1, \ldots, i_N] \right]^2.\\
    & = \E\bigg[ \sum_{i_1, \ldots, i_N} \TT[i_1, \ldots, i_N]^2 \TX[i_1, \ldots, i_N]^2  \notag\\
    &\qquad - \sum_{(i_1, \ldots, i_N) \neq (i_1', \ldots, i_N')} \TT[i_1,\ldots, i_N] \TT[i_1', \ldots, i_N'] \TX[i_1, \ldots, i_N] \TX[i_1', \ldots, i_N'] \bigg] - 0.\\
    &=  \sum_{i_1, \ldots, i_N} \E \left[ \TT[i_1, \ldots, i_N]^2\right] \TX[i_1, \ldots, i_N]^2 \\
    &\qquad - \sum_{(i_1, \ldots, i_N) \neq (i_1', \ldots, i_N')} \E \left[\TT[i_1,\ldots, i_N] \TT[i_1', \ldots, i_N'] \right] \TX[i_1, \ldots, i_N] \TX[i_1', \ldots, i_N'].\\
       & = \frac{1}{{R^{N-1}}} \cdot \E \left[\left( \TG^{(1)}[:,i_1,:] \cdots \TG^{(N)}[:,i_N,:] \right)^2 \right] \TX[i_1, \ldots, i_N]^2, \\
       &\hspace{7.5cm} \left[\because \E \left[\TT[i_1,\ldots, i_N] \TT[i_1', \ldots, i_N'] \right] = 0 \right].\\
       & =  \frac{1}{{R^{N-1}}} \cdot \E \bigg[\left(\TG^{(1)}[:,i_1,:] \otimes  \TG^{(1)}[:,i_1,:] \right) \cdot \left(\TG^{(2)}[:,i_2,:] \otimes \TG^{(2)} [:,i_2,:] \right) \cdot \cdots \cdot \notag\\ 
     & \qquad  \left(\TG^{(N-1)}[:,i_{N-1},:] \otimes  \TG^{(N-1)}[:,i_2,:] \right) \left(\TG^{(N)}[:,i_N,:] \otimes\TG^{(N)}[:, i_N,:] \right)\bigg] \TX[i_1, \ldots, i_N]^2.\\
    &= \frac{1}{{R^{N-1}}} \cdot \E \bigg[\left(\TG^{(1)}[:,i_1,:] \otimes  \TG^{(1)}[:,i_1,:] \right)\bigg] \cdot \E \bigg[\left(\TG^{(2)}[:,i_2,:] \otimes \TG^{(2)} [:,i_2,:] \right)\bigg] \cdot \cdots \cdot \notag\\ 
     & \qquad  \E \bigg[\left(\TG^{(N-1)}[:,i_{N-1},:] \otimes  \TG^{(N-1)}[:,i_2,:] \right)\bigg] \cdot \E \bigg[\left(\TG^{(N)}[:,i_N,:] \otimes\TG^{(N)}[:, i_N,:] \right)\bigg]\\
     & \hspace{10.5cm}\times \TX[i_1, \ldots, i_N]^2.\\
    &=  \frac{1}{{R^{N-1}}}  \cdot vec(\MatI R) \cdot  \left(vec(\MatI R) \circ vec(\MatI R) \right)  \cdot     \cdots \cdot  \left(vec(\MatI R) \circ vec(\MatI R)\right) \cdot   vec(\MatI R)^{T} \\
    &\hspace{10.2cm}\times\TX[i_1, \ldots, i_N]^2.\\
    & = \frac{1}{{R^{N-1}}} \cdot  R^{N-1} \cdot \TX[i_1, \ldots, i_N]^2 = \TX[i_1, \ldots, i_N]^2. \numberthis \label{eq:eq300323_2_tt}
 \end{align*}
 }
Similarly, we can  compute the following
\begin{align}
     \Cov \left( \beta^{(TT)}, \beta^{(TT)} \right) & = \| \TY \|_{F}^2. \numberthis \label{eq:eq300323_3_tt}\\
     \Cov \left( \alpha^{(TT)}, \beta^{(TT)} \right) & =  \langle \TX, \TY \rangle. \numberthis \label{eq:eq300323_4_tt}
\end{align}
From Equations \eqref{eq:eq300323_1_tt}, \eqref{eq:eq300323_2_tt}, \eqref{eq:eq300323_3_tt} and \eqref{eq:eq300323_4_tt}, we have
\begin{align}
    \boldsymbol{\Sigma}_{d_1\cdots d_N} &= \begin{bmatrix}
        \| \TX \|_{F}^2 & \langle \TX, \TY \rangle\\
        \langle \TX, \TY \rangle & \| \TY \|_{F}^2
    \end{bmatrix}. \numberthis \label{eq:eq300323_5_tt}
\end{align}
For any unit vector $\a \in \R^2$, define a random variable 
$$S_d := \sum_{i_1, \ldots, i_N}\a^T \bm{Y}_{i_1, \ldots, i_N} = \a^T\begin{bmatrix}
    \alpha^{(TT)} \\
    \beta^{(TT)}
    \end{bmatrix}. $$
The expected value and variance of $S_d$ are
\begin{align*}
    \E[S_d] := \a^T \E\begin{bmatrix}
    \alpha^{(TT)} \\
    \beta^{(TT)}
    \end{bmatrix} = 0, 
\end{align*}
and 
\begin{align*}
    \sigma_d^2 =\Var(S_d):= \a^T \Cov\left(\begin{bmatrix}
   \alpha^{(TT)} \\
    \beta^{(TT)}
    \end{bmatrix} \right) \a = a_1^2 \|\TX\|_F^2  + a_2^2 \|\TY\|_F^2  +2 a_1a_2 \langle \TX,\TY\rangle .
\end{align*}


To complete the proof, we need to prove that for some value of $\alpha$, Equation~\eqref{eq:eq051024_0} of Theorem~\ref{thm:clt_grpah_vec} holds true. We recall it as follows: 
\begin{align}
    \left(\frac{\prod_{n=1}^N d_n}{M}\right)^{\frac{1}{\alpha}} \, \frac{M A}{\sigma_d} \rightarrow 0 \text{ as } \prod_{n=1}^N d_n \rightarrow \infty
\end{align}

where $\alpha$ is an integer, $M$ denotes the maximum degree of the dependency graph generated by the random variables $\bm{Y}_{i_1, \ldots, i_N}$ and is equal to $ \sum_{n=1}^{N} d_{n} - N$. $A$ is upper bound on $\left \| \bm{Y}_{i_1, \ldots, i_N}\right \|_2$, we compute it as follows:
\begin{align*}
&\left \| \bm{Y}_{i_1, \ldots, i_N}\right \|_2 = | \TT[i_1, \ldots, i_N]| \, \sqrt{\TX[i_1, \ldots, i_N]^2 + \TY[i_1, \ldots,i_N]^2}.\\
& \leq \frac{1}{\sqrt{R^{N-1}}}\,\sqrt{\| \TG^{(1)}[:,i_1,:] \|_F^2 \, \| \TG^{(2)}[:,i_2,:]\|_{F}^2 \, \cdots \, \|\TG^{(N-1)}[:,i_{N-1},:]\|_{F}^2 \,  \|\TG^{(N)}[:,i_N,:]\|_{F}^2} \\
& \hspace{9cm} \times~\sqrt{\| \TX\|_{max}^2 + \| \TY\|_{max}^2}.\\
&\leq \frac{1}{\sqrt{R^{N-1}}} \times \sqrt{ R \times R^2 \times \cdots \times R^2 \times R}\, \times \sqrt{\| \TX\|_{max}^2 + \| \TY\|_{max}^2}. \\
&\leq \frac{1}{\sqrt{R^{N-1}}} \times R^{N-1} \, \times \sqrt{\| \TX\|_{max}^2 + \| \TY\|_{max}^2}.\\
& = {\sqrt{R^{N-1}}} \, \times \sqrt{\| \TX\|_{max}^2 + \| \TY\|_{max}^2}.\numberthis \label{eq:eq300323_7_tt}
\end{align*}
where $\| \TX\|_{max} = \max_{i_1, \ldots, i_n} |\TX[i_1, \ldots, i_N]|$ and $\| \TY\|_{max} = \max_{i_1, \ldots, i_n} |\TY[i_1, \ldots, i_N]|$. For ease of calculation, we assume $d = \prod_{n=1}^{N}d_n$ and $d_1 = \ldots = d_N = d^{1/N}$. Thus, we have 
\begin{align*}
    &\left(\frac{\prod_{n=1}^N d_n}{M}\right)^{\frac{1}{\alpha}} \frac{ M  A}{\sigma_d}  = \left(\frac{d}{M}\right)^{\frac{1}{\alpha}} \,  \frac{M A}{\sigma_d} .\\
    &=\left(\frac{d}{\sum_{n=1}^N d_n - N} \right)^{\frac{1}{\alpha}} \times  \left( \sum_{n=1}^N d_n - N \right) \times \frac{\sqrt{R^{N-1}}  \sqrt{\| \TX\|_{max}^2 + \| \TY\|_{max}^2}}{\sqrt{a_1^2 \|\TX\|_F^2  + a_2^2 \|\TY\|_F^2  +2 a_1a_2 \langle \TX,\TY\rangle}}. \\
    & = \left(\frac{d}{N d^{\frac{1}{N}} - N} \right)^{\frac{1}{\alpha}} \times  \left( N d^{\frac{1}{N}} - N \right) \times \frac{\sqrt{R^{N-1}}  \sqrt{\| \TX\|_{max}^2 + \| \TY\|_{max}^2}}{\sqrt{a_1^2 \|\TX\|_F^2  + a_2^2 \|\TY\|_F^2  +2 a_1a_2 \langle \TX,\TY\rangle}}.\\
    &= \sqrt{R^{N-1}} \times d^{\frac{1}{\alpha}} \times \left( N d^\frac{1}{N} - N \right)^{1 - \frac{1}{\alpha}}  \notag\\
    &\times \text{\resizebox{0.98\textwidth}{!}{$\sqrt{\frac{  \| \TX\|_{max}^2 + \| \TY\|_{max}^2 }{a_1^2\sum_{i_1, \ldots, i_N} \TX[i_1, \ldots, i_N]^2  + a_2^2 \sum_{i_1, \ldots, i_N} \TY[i_1, \ldots, i_N]^2  +2 a_1a_2 \sum_{i_1, \ldots, i_N}  \TX[i_1, \ldots, i_N] \TY[i_1, \ldots, i_N]}}$ }}.\\
     &= \sqrt{R^{N-1}} \times d^{\frac{1}{\alpha}} \times \left( N d^\frac{1}{N} - N \right)^{1 - \frac{1}{\alpha}}  \notag\\
    &\times\frac{1}{\sqrt{d}} \text{\resizebox{0.95\textwidth}{!}{$\sqrt{\frac{  \| \TX\|_{max}^2 + \| \TY\|_{max}^2 }{a_1^2\sum_{i_1, \ldots, i_N} \frac{\TX[i_1, \ldots, i_N]^2}{d}  + a_2^2 \sum_{i_1, \ldots, i_N} \frac{\TY[i_1, \ldots, i_N]^2}{d}  +2 a_1a_2 \sum_{i_1, \ldots, i_N}  \frac{\TX[i_1, \ldots, i_N] \TY[i_1, \ldots, i_N]}{d}}}$ }}.\\
    &= \frac{\sqrt{R^{N-1}} d^{\frac{1}{\alpha}}}{\sqrt{d}} \times \left( N d^\frac{1}{N} - N \right)^{1 - \frac{1}{\alpha}} \notag\\
    &\times \text{\resizebox{0.95\textwidth}{!}{$\sqrt{\frac{\| \TX\|_{max}^2 + \| \TY\|_{max}^2}{a_1^2 \E\left[\TX[i_1, \ldots, i_N]^2 \right] + a_2^2 \E \left[\TY[i_1, \ldots, i_N]^2 \right] +2 a_1 a_2 \E \left[\TX[i_1, \ldots, i_N] \TY[i_1, \ldots, i_N]\right]}}$}}.\\
    & \leq \frac{\sqrt{R^{N-1}} N^{1-\frac{1}{\alpha}}}{d^{\frac{\alpha N - 2N -2\alpha + 2}{2 \alpha N}}} \notag\\
    &  \times \text{\resizebox{0.95\textwidth}{!}{$\sqrt{\frac{\| \TX\|_{max}^2 + \| \TY\|_{max}^2}{a_1^2 \E\left[\TX[i_1, \ldots, i_N]^2 \right] + a_2^2 \E \left[\TY[i_1, \ldots, i_N]^2 \right] +2 a_1 a_2 \E \left[\TX[i_1, \ldots, i_N] \TY[i_1, \ldots, i_N]\right]}}$}}.\\
    & \rightarrow 0 \text{ as } d \rightarrow \infty \text{ for } \alpha > \frac{2(N-1)}{(N-2)} \text{ and }  \sqrt{R^{N-1}} \,  N^{\left(1-\frac{1}{\alpha}\right)} = o\left(d^{\left(\frac{\alpha N-2N-2\alpha+2}{2 \alpha N}\right)} \right). \numberthis \label{eq:eq300323_10_tt}
\end{align*}
Equation~\eqref{eq:eq300323_10} hold true for $\alpha > \frac{2(N-1)}{(N-2)}$ \text{ and }  $\sqrt{R^{N-1}} \,  N^{\left(1-\frac{1}{\alpha}\right)} = o\left(d^{\left(\frac{ \alpha N-2N-2 \alpha + 2}{2 \alpha N}\right)} \right)$, provided $0 < \E \left[\TX[i_1,\ldots, i_N]^2 \right], \E \left[\TY[i_1,\ldots, i_N]^2 \right], \E \left[\TX[i_1,\ldots, i_N] \TY[i_1, \ldots, i_N]\right]< \infty$ and $\|\TX\|_{max}$ and $\|\TY\|_{max}$ are finite.\\

\noindent Choosing $\alpha = 5$ in Equation~\eqref{eq:eq300323_10}, for $d \rightarrow \infty$, we have
\begin{align*}
     \left(\frac{d}{M}\right)^{\frac{1}{\alpha}} \, M \, A\, \|\boldsymbol{\Sigma_{d_1 \cdots d_N}}^{-\frac{1}{2}}\| & \rightarrow 0 \text{ for } \sqrt{R^{N-1}} \, N^{\left(\frac{4}{5}\right)} = o\left(d^{\left(\frac{3N-8}{10N}\right)} \right). \numberthis \label{eq:eq300323_11_tt}
\end{align*}
Thus, from Theorem~\ref{thm:clt_grpah_vec}, we have
\begin{align*}
   \boldsymbol{\Sigma}_{d_1\cdots d_N}^{-\frac{1}{2}} \left(\begin{bmatrix}
        \alpha^{(TT)} \\
        \beta^{(TT)}
    \end{bmatrix} - \E \left[\begin{bmatrix}
        \alpha^{(TT)} \\
        \beta^{(TT)}
    \end{bmatrix} \right]\right) &\overset{\mathcal{D}}{\to} \mathcal{N}(\bm{0},\bm{I}).\\
    \implies \begin{bmatrix}
        \alpha^{(TT)} \\
        \beta^{(TT)}
    \end{bmatrix} - \begin{bmatrix}
        0\\
        0
    \end{bmatrix} &= \begin{bmatrix}
        \alpha^{(TT)} \\
        \beta^{(TT)}
    \end{bmatrix} \overset{\mathcal{D}}{\to} \mathcal{N}\left(\bm{0},\boldsymbol{\Sigma}_{d_1\cdots d_N} \right).\\
    \implies \begin{bmatrix}
        \alpha^{(TT)} \\
        \beta^{(TT)}
    \end{bmatrix} &\overset{\mathcal{D}}{\to} \mathcal{N}\left(\begin{bmatrix}
        0 \\
        0
    \end{bmatrix},\begin{bmatrix}
        \| \TX \|_{F}^2 & \langle \TX, \TY \rangle\\
        \langle \TX, \TY \rangle & \| \TY \|_{F}^2
    \end{bmatrix}\right). \numberthis \label{eq:eq270323_15_tt}
\end{align*}
Equation~\eqref{eq:eq270323_15_tt} completes a proof of the theorem.
\end{proof}
}

The following theorem states that the collision probability of two input tensors using TT-SRP is directly proportional to their pairwise cosine similarity, concluding that TT-SRP is a valid LSH. We can easily prove Theorem~\ref{thm:ubiased_ttsrp} by utilising the asymptotic normality result of Theorem~\ref{thm:tt_srp_bivariate} and following the same steps as in the proof of Theorem~\ref{thm:ubiased_cssrp}.


\begin{theorem} \label{thm:ubiased_ttsrp}
Let $\TX,\TY \in \R^{d_1 \times \cdots \times d_N}$ and $\Tilde{h}(\TX)$, $\Tilde{h}(\TY)$ be their  hashcodes obtained using our proposal \ttsrp ~(Definition~\ref{def:TT_SRP}). Then for $\prod_{n=1}^N d_n \rightarrow \infty$ and $\sqrt{R^{N-1}} N^{\left(\frac{4}{5}\right)} = o \left( \left(\prod_{n=1}^N d_n \right)^{\left(\frac{3N -8}{10N} \right)} \right)$, the following holds true
\begin{align}
    &\Pr\left[ \Tilde{h}(\TX) = \Tilde{h}(\TY) \right]  = 1-\frac{\theta_{(\TX,\TY)}}{\pi},  \numberthis \label{eq:tt_srp_col_prob}
\end{align}
where  $\theta_{(\TX,\TY)}=\cos^{-1}\left(\frac{\langle \TX, \TY \rangle }{\|\TX\|_{F} \|\TY\|_F}\right)$ denotes the angular similarity between vector $\TX$ and $\TY$.
\end{theorem}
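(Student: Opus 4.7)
The plan is to mirror the proof of Theorem~\ref{thm:ubiased_cssrp} almost verbatim, since the only structural difference between the CP-SRP and TT-SRP hash functions is the choice of projection tensor distribution, and the asymptotic bivariate normality result needed for TT-SRP is already furnished by Theorem~\ref{thm:tt_srp_bivariate}. So the conceptual work is entirely done; what remains is a short chain of implications.

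First I would unfold the definition of TT-SRP (Definition~\ref{def:TT_SRP}) to write
\[
\tilde{h}(\TX) = \mathrm{sgn}(\langle \TT, \TX\rangle), \qquad \tilde{h}(\TY) = \mathrm{sgn}(\langle \TT, \TY\rangle),
\]
where $\TT \sim TT_{Rad}(R)$. The collision event $\tilde{h}(\TX) = \tilde{h}(\TY)$ thus depends only on the joint distribution of the pair of projections $(\alpha^{(TT)}, \beta^{(TT)}) := (\langle \TT, \TX\rangle, \langle \TT, \TY\rangle)$.

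Next I would invoke Theorem~\ref{thm:tt_srp_bivariate}: under the hypotheses $\prod_{n=1}^N d_n \to \infty$ and $\sqrt{R^{N-1}} N^{4/5} = o\bigl((\prod_{n=1}^N d_n)^{(3N-8)/(10N)}\bigr)$, the pair $(\alpha^{(TT)}, \beta^{(TT)})$ converges in distribution to a centered bivariate normal with covariance matrix having diagonal entries $\|\TX\|_F^2, \|\TY\|_F^2$ and off-diagonal entry $\langle \TX, \TY\rangle$. These hypotheses match precisely those assumed in Theorem~\ref{thm:ubiased_ttsrp}, so the assumption is in force.

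Finally, I would apply the classical fact (Goemans--Williamson~\cite{goemans1995improved}, restated as Lemma~6 in Li \textit{et al.}~\cite{li2006very}) that for a zero-mean bivariate normal vector $(U, V)$ with correlation $\rho = \langle \TX, \TY\rangle / (\|\TX\|_F \|\TY\|_F)$, one has $\Pr[\mathrm{sgn}(U) = \mathrm{sgn}(V)] = 1 - \cos^{-1}(\rho)/\pi = 1 - \theta_{(\TX,\TY)}/\pi$. Combining this with the asymptotic normality from the previous step yields
\[
\Pr[\tilde{h}(\TX) = \tilde{h}(\TY)] \to 1 - \frac{\theta_{(\TX,\TY)}}{\pi},
\]
which is Equation~\eqref{eq:tt_srp_col_prob}. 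There is no real obstacle here; all the technical work (bounding the dependency graph degree, verifying the Janson CLT condition with $\alpha = 5$, computing the covariance via Kronecker-product identities on the TT cores) has already been carried out inside Theorem~\ref{thm:tt_srp_bivariate}, and the sign-projection collision identity is a standard one-line consequence of bivariate normality.
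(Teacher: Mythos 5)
Your proposal is correct and matches the paper's intended argument exactly: the paper itself states that Theorem~\ref{thm:ubiased_ttsrp} follows by combining the asymptotic bivariate normality of Theorem~\ref{thm:tt_srp_bivariate} with the same steps as the proof of Theorem~\ref{thm:ubiased_cssrp}, i.e., the sign-collision identity for centered bivariate normals from Lemma~6 of Li \textit{et al.}~\cite{li2006very}. Nothing is missing.
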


Let $S= \frac{\langle \TX, \TY \rangle}{\|\TX\|_F \|\TY\|_F}$ denotes the cosine similarity between $\TX$ and $\TY$. From Theorem~\ref{thm:ubiased_ttsrp}, it is evident that the probability of a collision decreases monotonically with $S$. Hence,  due to Definition~\ref{def:LSH}, our proposal CP-SRP ~defined in Definition~\ref{def:TT_SRP} is $(R_1,R_2, P_{1}, P_2)$-sensitive for $R_1 = S$, $R_2 = cS$, $P_{1} = (1-\cos^{-1}(S)/\pi)$  and $P_{2} = (1-\cos^{-1}(cS)/\pi)$.

\begin{remark}
The space complexity of our proposal, TT-SRP (Definition \ref{def:TT_SRP}), is lower than the naive method of computing LSH for cosine similarity for tensor data using SRP hash function (Definition~\ref{def:srp}). For an $N$-order tensor with each mode dimension equal to $d$, the naive method requires $O(d^N)$ time and space complexity to compute a hashcode.  However, our  proposal TT-E2LSH only requires $O(NdR^2)$ space~\cite{rakhshan2021rademacher}. 
\end{remark}

\begin{remark}
     Our proposal TT-SRP also offers a reduced time complexity when the input tensor is given in CP or TT decomposition format~\cite{rakhshan2020tensorized,rakhshan2021rademacher}. If the input tensor is given as a rank $\hat{R}$ CP or TT decomposition tensor, our proposal only takes $O(Nd \max\{R, \hat{R}\}^3)$ time to generate a hashcode (Remark~\ref{rem:remark_tt}), whereas the naive method requires $O(d^N)$ time.
\end{remark}

\section{Conclusion}\label{sec:conclusion}

In this work, we proposed locality-sensitive hash functions for Euclidean distance and cosine similarity for tensor data. One naive method to get LSH for Euclidean distance and cosine similarity is to reshape the input tensor into a vector and use \elsh ~and \srp ~on the reshaped vector to get LSH. Both \elsh ~and \srp ~are projection-based methods that involve the projection of the input vector on a normally distributed vector (i.e. involve computation of inner product between input and normally distributed vector) followed by the discretization step.  The main disadvantage of the naive approach is that it requires $O(d^N)$ space and time to create a hash code for a $N$-order tensor with each mode dimension equal to $d$ which becomes unaffordable for a higher value of $N$ and $d$. To mitigate this problem, we suggested two proposals for both LSH for Euclidean distance and cosine similarity. Our proposals are based on the idea of replacing the dense normally distributed projection vector involved in \elsh ~and \srp ~with low-rank tensor structures, namely CP and TT Radmacher distributed tensor. Using our proposal, we don't need to reshape the input tensor into a vector; we can directly compute the inner product. Moreover, the inner product computation can be done efficiently provided the input tensor is CP or TT tensor. For an $N$ order tensor with each mode dimension equal to $d$ to generate a hash code, our proposals \cpelsh ~and \cpsrp ~require  $O(NdR)$ space, whereas the \ttelsh ~and \ttsrp ~require  $O(NdR^2)$ space where $R$ is the rank of CP or TT Radmeachar tensor. The time complexity of our proposals  \cpelsh ~and \cpsrp ~to generate a hashcode  is $O(Nd \max\{R, \hat{R}\}^2)$ provided input tensor is rank $\hat{R}$ CP decomposition tensor and $O(Nd \max\{R, \hat{R}\}^3)$ provided input tensor is rank $\hat{R}$ TT decomposition tensor whereas for \ttelsh ~and \ttsrp~ is $O(Nd\max\{R, \hat{R}\}^3)$. Our proposals provide space and time-efficient LSH for Euclidean distance and cosine similarity for tensor data.   We give a rigorous theoretical analysis of our proposals. Our proposals are simple, effective, easy to implement, and can be easily adopted in practice.

\backmatter








\section*{Declarations}

\textbf{Funding:} - Not applicable.\\
\textbf{Conflicts of interest:} The authors declare that there is no conflict of interest.\\
\textbf{Ethics approval:} Not applicable.\\
\textbf{Consent to participate:}  Not applicable.\\
\textbf{Consent for publication:} All authors participated in this study give the publisher the permission to publish this work.\\

\noindent
\textbf{Author contributions:}\\
 \textbf{Bhisham Dev Verma:} Methodology, Formal analysis, Validation, Writing – original draft, Writing – review \& editing.\\
\textbf{Rameshwar Pratap:}  Methodology, Formal analysis, Validation,  Writing – review \& editing.\\

\bibliography{references}

\end{document}